\newcommand{\ie}{\textit{i}.\textit{e}., }
\newcommand{\eg}{\textit{e}.\textit{g}., }
\def\LBCHmeanhns{10077.52}
\def\LBCHmeanhnsle{4.81E-07 }
\def\LBCHmedianhns{1665.60}
\def\LBCHmedianhnsle{5.73E-08 }
\def\LBCHHWRB{24 }
\def\LBCHHWRBle{1.10E-07 }
\def\LBCHmeanHWRNS{154.27}
\def\LBCHmeanHWRNSle{7.71E-09 }
\def\LBCHmedianHWRNS{50.63}
\def\LBCHmedianHWRNSle{2.53E-09 }
\def\LBCHmeanSABER{71.26}
\def\LBCHmeanSABERle{3.56E-09 }
\def\LBCHmedianSABER{50.63}
\def\LBCHmedianSABERle{2.53E-09 }
\def\LBCHnumframes{1.00E+09 }
\def\LBCHgametime{192.5 }
\def\agentmeanhns{4762.17}
\def\agentmeanhnsle{4.76E-10}
\def\agentmedianhns{1933.49}
\def\agentmedianhnsle{1.93E-10}
\def\agentHWRB{18}
\def\agentHWRBle{1.80E-10}
\def\agentmeanHWRNS{125.92}
\def\agentmeanHWRNSle{1.26E-11}
\def\agentmedianHWRNS{43.62}
\def\agentmedianHWRNSle{4.36E-12}
\def\agentmeanSABER{76.26}
\def\agentmeanSABERle{7.63E-12}
\def\agentmedianSABER{43.62}
\def\agentmedianSABERle{4.36E-12}
\def\agentnumframes{1.00E+11}
\def\agentgametime{19290}
\def\muzeromeanhns{4994.97}
\def\muzeromeanhnsle{2.50E-09}
\def\muzeromedianhns{2041.12}
\def\muzeromedianhnsle{1.02E-09}
\def\muzeroHWRB{19}
\def\muzeroHWRBle{9.50E-10}
\def\muzeromeanHWRNS{152.10}
\def\muzeromeanHWRNSle{7.61E-11}
\def\muzeromedianHWRNS{49.80}
\def\muzeromedianHWRNSle{2.49E-11}
\def\muzeromeanSABER{71.94}
\def\muzeromeanSABERle{3.60E-11}
\def\muzeromedianSABER{49.80}
\def\muzeromedianSABERle{2.49E-11}
\def\muzeronumframes{2.00E+10}
\def\muzerogametime{3858}
\def\dreamermeanhns{642.49}
\def\dreamermeanhnsle{3.21E-08}
\def\dreamermedianhns{178.04}
\def\dreamermedianhnsle{8.90E-09}
\def\dreamerHWRB{3}
\def\dreamerHWRBle{1.50E-08}
\def\dreamermeanHWRNS{38.60}
\def\dreamermeanHWRNSle{1.93E-09}
\def\dreamermedianHWRNS{4.29}
\def\dreamermedianHWRNSle{2.14E-10}
\def\dreamermeanSABER{27.73}
\def\dreamermeanSABERle{1.39E-09}
\def\dreamermedianSABER{4.29}
\def\dreamermedianSABERle{2.14E-10}
\def\dreamernumframes{2.00E+08}
\def\dreamergametime{38.58 }
\def\simplemeanhns{25.78}
\def\simplemeanhnsle{2.58E-07}
\def\simplemedianhns{5.55}
\def\simplemedianhnsle{5.55E-08}
\def\simpleHWRB{0}
\def\simpleHWRBle{0.00E+00}
\def\simplemeanHWRNS{4.80}
\def\simplemeanHWRNSle{4.80E-08}
\def\simplemedianHWRNS{0.13}
\def\simplemedianHWRNSle{1.25E-09}
\def\simplemeanSABER{4.80}
\def\simplemeanSABERle{4.80E-08}
\def\simplemedianSABER{0.13}
\def\simplemedianSABERle{1.25E-09}
\def\simplenumframes{1.00E+06}
\def\simplegametime{0.19}
\def\mueslimeanhns{2538.12}
\def\mueslimeanhnsle{1.27E-07}
\def\mueslimedianhns{1077.47}
\def\mueslimedianhnsle{5.39E-08}
\def\muesliHWRB{5}
\def\muesliHWRBle{2.50E-08}
\def\mueslimeanHWRNS{75.52}
\def\mueslimeanHWRNSle{3.78E-09}
\def\mueslimedianHWRNS{24.86}
\def\mueslimedianHWRNSle{1.24E-09}
\def\mueslimeanSABER{48.74}
\def\mueslimeanSABERle{2.44E-09}
\def\mueslimedianSABER{24.86}
\def\mueslimedianSABERle{1.24E-09}
\def\mueslinumframes{2.00E+08}
\def\muesligametime{38.5}
\def\goexploremeanhns{4989.31}
\def\goexploremeanhnsle{4.99E-09}
\def\goexploremedianhns{1451.55}
\def\goexploremedianhnsle{1.45E-09}
\def\goexploreHWRB{15}
\def\goexploreHWRBle{1.50E-09}
\def\goexploremeanHWRNS{116.89}
\def\goexploremeanHWRNSle{1.17E-10}
\def\goexploremedianHWRNS{50.50}
\def\goexploremedianHWRNSle{5.05E-11}
\def\goexploremeanSABER{71.80}
\def\goexploremeanSABERle{7.18E-11}
\def\goexploremedianSABER{50.50}
\def\goexploremedianSABERle{5.05E-11}
\def\goexplorenumframes{1.00E+10}
\def\goexploregametime{1929}
\def\eqref#1{equation~\ref{#1}}
\def\1{\bm{1}}
\DeclareMathAlphabet{\mathsfit}{\encodingdefault}{\sfdefault}{m}{sl}
\SetMathAlphabet{\mathsfit}{bold}{\encodingdefault}{\sfdefault}{bx}{n}
\theoremstyle{plain}
\newtheorem{definition}{Definition}[section]
\newtheorem{Corollary}{\textbf{Corollary}}
\newtheorem*{Remark}{\textbf{Remark}}
\newtheorem{Assumption}{\textbf{Assumption}}
\newtheorem{Proposition}{\textbf{Proposition}}
\newtheorem{Example}{Example}
\definecolor{zise}{rgb}{0.83137255, 0.36470588, 0.83137255}
\definecolor{chengse}{rgb}{0.99607843, 0.31764706, 0.12156863}
\definecolor{airforceblue}{rgb}{0.36, 0.54, 0.66}
\definecolor{lightgreen}{rgb}{0.56, 0.93, 0.56}
\definecolor{lightseagreen}{rgb}{0.13, 0.7, 0.67}
\definecolor{limegreen}{rgb}{0.2, 0.8, 0.2}
\definecolor{lincolngreen}{rgb}{0.11, 0.35, 0.02}
\definecolor{mediumseagreen}{rgb}{0.24, 0.7, 0.44}
\definecolor{napiergreen}{rgb}{0.16, 0.5, 0.0}
\definecolor{parisgreen}{rgb}{0.31, 0.78, 0.47}
\definecolor{teagreen}{rgb}{0.82, 0.94, 0.75}
\definecolor{yellow-green}{rgb}{0.6, 0.8, 0.2}
\definecolor{applegreen}{rgb}{0.55, 0.71, 0.0}
\newcommand*{\imgintext}[1]{%
  \raisebox{-.3\baselineskip}{%
    \includegraphics[
      height=\baselineskip,
      width=\baselineskip,
      keepaspectratio,
    ]{#1}%
  }%
}
\title{Learnable Behavior Control: Breaking Atari Human World Records via Sample-Efficient Behavior Selection}
\author{Jiajun Fan$^1$ , Yuzheng Zhuang$^2$ ,  Yuecheng Liu$^2$ ,  Jianye Hao$^2$ ,  Bin Wang$^2$ \AND  Jiangcheng Zhu$^3$,    Hao Wang$^4$ ,  Shutao Xia$^1$  \\ \\
$^1$ Tsinghua Shenzhen International Graduate School, Tsinghua University\\ $^2$ Huawei Noah's Ark Lab, $^3$ Huawei Cloud, $^4$ Zhejiang University\\
$^1$ fanjj21@mails.tsinghua.edu.cn, xiast@sz.tsinghua.edu.cn, $^4$ haohaow@zju.edu.cn,  \\
$^{2,3}$ \{zhuangyuzheng, liuyuecheng1, haojianye, wangbin158, zhujiangcheng\}@huawei.com
\vspace{-0.15in} 
}
\begin{document}

\maketitle

\begin{abstract}
The exploration problem is one of the main challenges in deep reinforcement learning (RL). Recent promising works tried to handle the problem with population-based methods, which collect samples with diverse behaviors derived from a population of different exploratory policies. Adaptive policy selection has been adopted for behavior control. However, the behavior selection space is largely limited by the predefined policy population, which further limits behavior diversity. 
In this paper, we propose a general framework called \textbf{L}earnable \textbf{B}ehavioral \textbf{C}ontrol (LBC) to address the limitation, which a) enables a significantly enlarged behavior selection space via formulating a \textit{hybrid behavior mapping} from all policies; b) constructs a unified  \textit{learnable process} for behavior selection. We introduce LBC into distributed off-policy actor-critic methods and achieve behavior control via optimizing the selection of the behavior mappings with bandit-based meta-controllers.
Our agents have achieved \LBCHmeanhns\% mean human normalized score and surpassed \LBCHHWRB human world records within 1B training frames in the Arcade Learning Environment,
which demonstrates our significant state-of-the-art (SOTA) performance without degrading the sample efficiency. 

\end{abstract}
\vspace{-0.1in}
\begin{figure*}[!h]
\vspace{-0.1in}
\centering
	\subfigure{
    \includegraphics[width=0.84\textwidth,height=0.14\textheight]{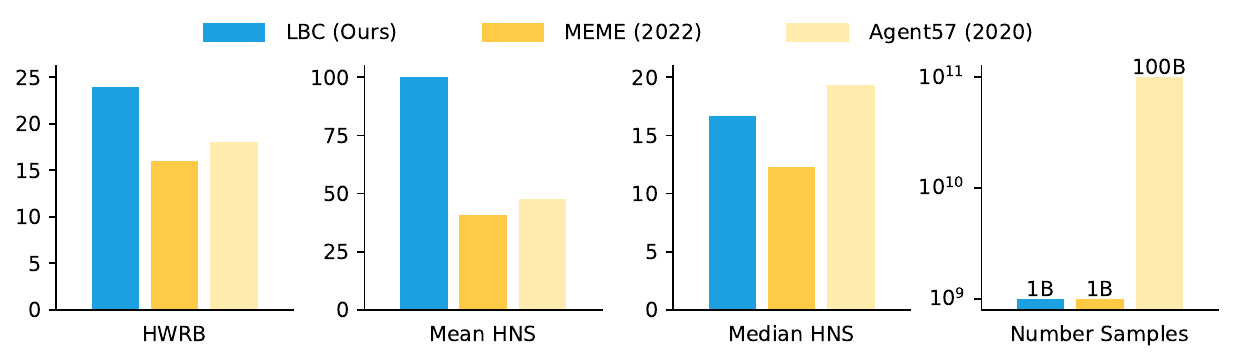}
	}
	\vspace{-0.1in}
	\centering
	\caption{Performance on the 57 Atari games. Our method achieves the highest mean human normalized scores \citep{agent57}, is the first to breakthrough 24 human world records \citep{atarihuman}, and demands the least training data. }
	\label{fig: intro performance}
	\vspace{-0.15in}
\end{figure*}


\section{Introduction}
\label{sec: introduction}
Reinforcement learning (RL) has led to tremendous progress in a variety of domains ranging from video games~\citep{dqn} to robotics~\citep{trpo,ppo}. 
However, efficient exploration remains one of the significant challenges. 
Recent prominent works tried to address the problem with population-based training  ~\citep[PBT]{PBT} wherein a population of policies with different degrees of exploration is jointly trained to keep both the long-term and short-term exploration capabilities throughout the learning process. A set of actors is created to acquire diverse behaviors derived from the policy population~\citep{ngu, agent57}. 
Despite the significant improvement in the performance, these methods suffer from the aggravated high sample complexity due to the joint training on the whole population while keeping the diversity property. 
   To acquire diverse behaviors, NGU~\citep{ngu} uniformly selects policies in the population regardless of their contribution to the learning progress~\citep{ngu}. As an improvement, Agent57 adopts an adaptive policy selection mechanism that each behavior used for sampling is periodically selected from the population according to a meta-controller~\citep{agent57}. Although Agent57 achieved significantly better results on the Arcade Learning Environment (ALE) benchmark, it costs tens of billions of environment interactions as much as NGU.  To handle this drawback, GDI \citep{gdi} adaptively combines multiple advantage functions learned from a single policy to obtain an enlarged behavior space without increasing policy population size. However, the population-based scenarios with more than one learned policy has not been widely explored yet.  Taking a further step from GDI, we try to enable a larger and non-degenerate behavior space by learning different combinations across a population of different learned policies.
   

\begin{figure*}[!t]
\vspace{-0.3in}
\centering
	\includegraphics[width=0.8\textwidth]{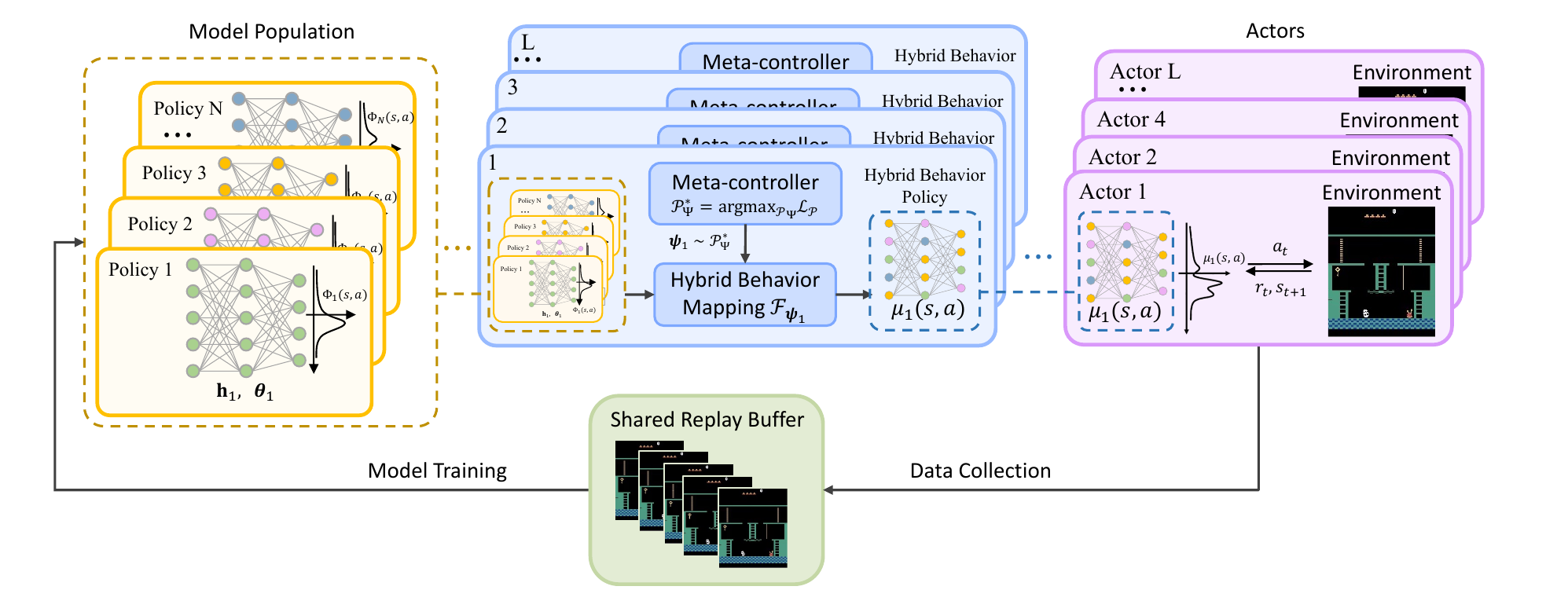}
	\vspace{-0.15in}
	\centering
	\caption{A General Architecture of Our Algorithm. }
	\label{fig: fw}
	\vspace{-0.2in}
\end{figure*}

In this paper,  we attempt to further improve the sample efficiency of population-based reinforcement learning methods by taking a step towards a more challenging setting to control behaviors with significantly enlarged behavior space with a population of different learned policies.  Differing from all of the existing works where each behavior is derived from a single selected learned policy, we formulate the process of getting behaviors from all learned policies as \emph{hybrid behavior mapping}, and the behavior control problem is directly transformed into selecting appropriate mapping functions. By combining all policies, the behavior selection space increases exponentially along with the population size. As a special case that \emph{population size degrades to one}, diverse behaviors can also be obtained by choosing different behavior mappings. This two-fold mechanism enables tremendous larger space for behavior selection. 
By properly parameterizing the mapping functions, our method enables a unified  learnable process, and we call this general framework \textbf{L}earnable \textbf{B}ehavior \textbf{C}ontrol.

We use the Arcade Learning Environment (ALE) to evaluate the performance of the proposed methods, which is an important testing ground that requires a broad set of skills such as perception, exploration, and control ~\citep{agent57}. 
Previous works use the normalized human score 
to summarize the performance on ALE and claim superhuman performance ~\citep{ale}. However, the human baseline is far from representative of the best human player, which greatly underestimates the ability of humanity. In this paper, we introduce a more challenging baseline, \ie the human world records baseline (see ~\citet{atarihuman,dreamerv2} for more information on Atari human world records). We summarize the number of games that agents can outperform the human world records (\ie HWRB, see Figs. \ref{fig: intro performance}) to claim a real superhuman performance in these games, inducing a more challenging and fair comparison with human intelligence. Experimental results show that the sample efficiency of our method also outperforms the concurrent work MEME~\cite{meme}, which is 200x faster than Agent57. In summary, our contributions are as follows:

\begin{enumerate}
    \item \textbf{A data-efficient RL framework named LBC}. We propose a general framework called \textbf{L}earnable \textbf{B}ehavior \textbf{C}ontrol (LBC), which enables a significantly enlarged behavior selection space without increasing the policy population size via formulating a hybrid behavior mapping from all policies, and constructs a unified  learnable process for behavior selection.
    \item \textbf{A family of LBC-based RL algorithms}. We provide a family of LBC-based algorithms by combining LBC with existing distributed off-policy RL algorithms, which shows the generality and scalability of the proposed method.
    \item \textbf{The state-of-the-art performance with superior sample efficiency}. From Figs. \ref{fig: intro performance}, our method has achieved 10077.52\% mean human normalized score (HNS) and surpassed 24 human world records within 1B training frames in the Arcade Learning Environment (ALE), which demonstrates our state-of-the-art (SOTA) sample efficiency.
\end{enumerate}

\section{Background}
\label{sec: Background}

\subsection{Reinforcement Learning}

The Markov Decision Process $\left(\mathcal{S}, \mathcal{A}, p, r, \gamma, \rho_{0}\right)$ ~\citep[MDP]{howard1960dynamic} can be used to model RL. With respect to a discounted episodic MDP, the initial state $s_0$ is taken as a sample from the initial distribution $\rho_0(s): \mathcal{S} \rightarrow \mathbb{P}(\mathcal{S})$, where $\mathbb{P}$ is used to denote the probability distribution. Every time $t$, the agent selects an action $a_t \in \mathcal{A}$ in accordance with the policy $\pi(a_t|s_t): \mathcal{S} \rightarrow \mathbb{P}(\mathcal{A})$ at state $s_t \in \mathcal{S}$.  In accordance with the transition distribution $p\left(s^{\prime} \mid s, a\right): \mathcal{S} \times \mathcal{A} \rightarrow \mathbb{P}(\mathcal{S})$, the environment gets the action $a_t$, creates the reward $r_t \sim r(s,a): \mathcal{S} \times \mathcal{A} \rightarrow \mathbf{R}$, and transfers to the subsequent state $s_{t+1}$. Until the agent achieves a terminal condition or a maximum time step, the process continues. The discounted state visitation distribution is defined as $d_{\rho_0}^{\pi} (s) = (1 - \gamma) \mathbb{E}_{s_0 \sim \rho_0} 
 \left[ \sum_{t=0}^{\infty} \gamma^t \textbf{P} (s_t = s | s_0) \right]$. Define return $G_t = \sum_{k=0}^\infty \gamma^k r_{t+k}$ wherein $\gamma \in(0,1)$ is the discount factor.  Finding the optimal policy $\pi^*$ to maximize the expected sum of discounted rewards $G_t$ is the aim of reinforcement learning:

\begin{equation}
\setlength{\abovedisplayskip}{0.5pt}
\setlength{\belowdisplayskip}{0.5pt}
\label{eq_accmulate_reward}
\begin{aligned}
\pi^{*}
:= \underset{\pi}{\operatorname{argmax}} \mathbb{E}_{s_t \sim d_{\rho_0}^{\pi}} \mathbb{E}_{\pi} \left[G_t = \sum_{k=0}^{\infty} \gamma^{k} r_{t+k} | s_t \right], \\
\end{aligned}
\end{equation}

\subsection{Behavior Control For Reinforcement Learning}

 In value-based methods, a behavior policy can be derived from a state-action value function $Q_{\bm{\theta}, \mathbf{h}}^\pi (s,a)$ via $\epsilon$-greedy. In policy-based methods, a behavior policy can be derived from the policy logits $\Phi_{\bm{\theta},\mathbf{h}}$ ~\citep{policylogits} via Boltzmann operator. For convenience, we define that a behavior policy can be derived from the learned policy model $\Phi_{\bm{\theta},\mathbf{h}}$  via a behavior mapping, which normally maps a single policy model to a behavior, \eg $\epsilon\text{-greedy}(\Phi_{\bm{\theta},\mathbf{h}})$.  In PBT-based methods, there would be a set of policy models $\{ \Phi_{\bm{\theta}_1, \mathbf{h}_1},...,\Phi_{\bm{\theta}_\mathrm{N}, \mathbf{h}_\mathrm{N}} \}$, each of which is parameterized by $\bm{\theta}_i$ and trained under its own hyper-parameters $\mathbf{h}_i$, wherein $\bm{\theta}_i \in \bm{\Theta}= \{\bm{\theta}_1,...,\bm{\theta}_\mathrm{N}\}$ and $\mathbf{h}_i \in \mathbf{H}=\{\mathbf{h}_1,...,\mathbf{h}_\mathrm{N}\}$.

The behavior control in population-based methods is normally achieved in two steps: i) select a policy model $\Phi_{\bm{\theta},\mathbf{h}}$ from the population. ii) applying a behavior mapping to the selected policy model. When the behavior mapping is rule-based for each actor (\eg $\epsilon$-greedy with rule-based $\epsilon$ ), the behavior control can be transformed into the policy model selection (See Proposition \ref{proposition: Behavior Control via policy models Selection}). Therefore, the optimization of the selection of the policy models becomes one of the critical problems in achieving effective behavior control. Following the literature on PBRL, NGU adopts a uniform selection, which is unoptimized and inefficient. Built upon NGU, Agent57 adopts a meta-controller to adaptively selected a policy model from the population to generate the behavior for each actor, which is implemented by a non-stationary multi-arm bandit algorithm. However, the policy model selection requires maintaining a large number of different policy models, which is particularly data-consuming since each policy model in the population holds heterogeneous training objectives. 

To handle this problem, recent notable work GDI-H$^3$  \citep{gdi} enables to obtain an enlarged behavior space via adaptively controls the temperature of the softmax operation over the weighted advantage functions. However, since the advantage functions are derived from the same target policy under different reward scales, the distributions derived from them may tend to be similar (e.g., See App. \ref{sec: KL Divergence Between policy models}), thus would lead to degradation of the behavior space.
Differing from all of the existing works where each behavior is derived from a single selected learned policy, in this paper, 
we try to handle this problem via three-fold: i) we bridge the relationship between the learned policies and each behavior via a hybrid behavior mapping,  ii) we propose a general way to build a non-degenerate large behavior space for population-based methods in Sec. \ref{sec: Boltzmann Based Behavior Space Construction}, iii) we propose a way to optimize the hybrid behavior mappings from a population of different learned models in Proposition. \ref{proposition: Behavior Control via behavior mapping optimization}.

\section{Learnable Behavior Control}
\label{sec: Learnable Behavior Control}
In this section, we first formulate the behavior control problem and decouple it into two sub-problems: behavior space construction and behavior selection. Then, we discuss how to construct the behavior space and select behaviors based on the formulation. By integrating both, we can obtain a general framework to achieve behavior control in RL,  called learnable behavior control (LBC). 
\subsection{Behavior Control Formulation}
\label{sec: Problem Formulation}

\paragraph{Behavior Mapping}  Define \emph{behavior mapping} $\mathcal{F}$ as a mapping from some policy model(s) to a behavior. In previous works, a behavior policy is typically obtained using a single policy model. In this paper, as a generalization, we define two kinds of $\mathcal{F}$ according to how many policy models they take as input to get a behavior. 
The first one, \emph{individual behavior mapping}, is defined as a mapping from a single model to a behavior that is widely used in prior works, \eg $\epsilon$-greedy and Boltzmann Strategy for discrete action space and Gaussian Strategy for continuous action space; And the second one, \emph{hybrid behavior mapping}, is defined to map all policy models to a single behavior, \ie $\mathcal{F}(\Phi_{\bm{\theta}_1, \mathbf{h}_1},...,\Phi_{\bm{\theta}_\mathrm{N},\mathbf{h}_\mathrm{N}})$. The hybrid behavior mapping enables us to get a hybrid behavior by combining all policies together, which provides a greater degree of freedom to acquire a larger behavior space. For any behavior mapping $\mathcal{F}_{\bm{\psi}}$ parameterized by $\bm{\psi}$, there exists a family of behavior mappings $\mathcal{F}_{\bm{\Psi}} = \{ \mathcal{F}_{\bm{\psi}} | \bm{\psi}\in\bm{\Psi} \}$ that hold the same parametrization form with  $\mathcal{F}_{\bm{\psi}}$, where $\bm{\Psi} \subseteq \textbf{R}^k$  is a parameter set that contains all possible parameter $\bm{\psi}$.

\paragraph{Behavior Formulation} 
As described above, in our work, a behavior can be acquired by applying a behavior mapping $\mathcal{F}_{\bm{\psi}}$ to some policy model(s). For the individual behavior mapping case, a behavior can be formulated as $\mu_{\bm{\theta},\mathbf{h},\bm{\psi}}=\mathcal{F}_{\bm{\psi}}(\bm{\Phi}_{\bm{\theta},\mathbf{h}})$, which is also the most used case in previous works. As for the hybrid behavior mapping case, a behavior is formulated as $\mu_{\bm{\Theta},\mathbf{H},\bm{\psi}}=\mathcal{F}_{\bm{\psi}}(\bm{\Phi}_{\bm{\Theta},\mathbf{H}})$, wherein $\bm{\Phi}_{\bm{\Theta},\mathbf{H}}=\{\Phi_{\bm{\theta}_1, \mathbf{h}_1},...,\Phi_{\bm{\theta}_\mathrm{N}, \mathbf{h}_\mathrm{N}}\}$ is  a policy model set containing all policy models.

\paragraph{Behavior Control Formulation} Behavior control can be decoupled into two sub-problems: 1) which behaviors can be selected for each actor at each training time, namely the \emph{behavior space construction}. 2) how to select proper behaviors, namely the \emph{behavior selection}. Based on the behavior formulation, we can formulate these sub-problems:
\begin{definition}[Behavior Space Construction]
\label{def: bsc}
    Considering the RL problem that behaviors $\mu$ are generated from some policy model(s). We can acquire a family of realizable behaviors by applying a family of behavior mappings $\mathcal{F}_{\bm{\Psi}}$ to these policy model(s). Define the set that contains all of these realizable behaviors as the behavior space, which can be formulated as:
    \begin{equation}
\label{equ:behavior space formulation}
\mathbf{M}_{\bm{\Theta},\mathbf{H},\bm{\Psi}} =
\begin{cases}
   \{\mu_{\bm{\theta}, \mathbf{h},\bm{\psi}}=\mathcal{F}_{\bm{\psi}}(\Phi_{\mathbf{h}}) | \bm{\theta} \in \bm{\Theta}, \mathbf{h} \in \mathbf{H}, \bm{\psi} \in \bm{\Psi} \}, \quad\text{for individual behavior mapping}\\
  \{\mu_{\bm{\Theta}, \mathbf{H},\bm{\psi}}=\mathcal{F}_{\bm{\psi}}(\bm{\Phi}_{\bm{\Theta}, \mathbf{H}}) |  \bm{\psi} \in \bm{\Psi} \}, \quad\text{for hybrid behavior mapping}
\end{cases}
\end{equation}

\end{definition}

\begin{definition}[Behavior Selection]
\label{def: bs}
   Behavior selection  can be formulated as finding a optimal selection distribution $\mathcal{P}^*_{\mathbf{M}_{\bm{\Theta},\mathbf{H},\bm{\Psi}}}$ to select the behaviors $\mu$ from behavior space $\mathbf{M}_{\bm{\Theta},\mathbf{H},\bm{\Psi}}$ and maximizing some optimization target $\mathcal{L}_{\mathcal{P}}$, wherein $\mathcal{L}_{\mathcal{P}}$ is the optimization target of behavior selection:
    \begin{equation}
\label{equ:behavior selection formulation}
    \mathcal{P}^*_{\mathbf{M}_{\bm{\Theta},\mathbf{H},\bm{\Psi}}} := \underset{\mathcal{P}_{\mathbf{M}_{\bm{\Theta},\mathbf{H},\bm{\Psi}}}}{\operatorname{argmax}}  \mathcal{L}_\mathcal{P}
\end{equation}
\end{definition}

\subsection{Behavior Space Construction}
\label{sec: Behavior Space Construction}
In this section, we further simplify the \eqref{equ:behavior space formulation}, and discuss how to construct the behavior space.
\begin{Assumption}
\label{ass: share model}
    Assume all policy models share the same network structure, and $\mathbf{h}_i$ can uniquely index a policy model $\Phi_{\bm{\theta}_i, \mathbf{h}_i}$. Then, $\Phi_{\bm{\theta},\mathbf{h}}$ can be abbreviated as $\Phi_{\mathbf{h}}$. 
\end{Assumption}
Unless otherwise specified, in this paper, we assume Assumption \ref{ass: share model} holds. Under Assumption \ref{ass: share model}, the behavior space defined in \eqref{equ:behavior space formulation} can be simplified as,
\begin{equation}
\label{equ: behavior space mapping-based}
\mathbf{M}_{\mathbf{H},\bm{\Psi}} =
\begin{cases}
   \{\mu_{\mathbf{h},\bm{\psi}}=\mathcal{F}_{\bm{\psi}}(\Phi_{\mathbf{h}}) | \mathbf{h} \in \mathbf{H}, \bm{\psi} \in \bm{\Psi} \}, \quad\text{for individual behavior mapping}\\
  \{\mu_{\mathbf{H},\bm{\psi}}=\mathcal{F}_{\bm{\psi}}(\bm{\Phi}_{\mathbf{H}}) |  \bm{\psi} \in \bm{\Psi} \}, \quad\text{for hybrid behavior mapping}
\end{cases}
\end{equation}

According to \eqref{equ: behavior space mapping-based}, four core factors need to be considered when constructing a behavior space: the network structure $\Phi$, the form of behavior mapping $\mathcal{F}$, the hyper-parameter set $\mathbf{H}$ and the parameter set $\bm{\Psi}$. Many notable representation learning approaches have explored how to design the network structure ~\citep{dt, got}, but it is not the focus of our work. In this paper, we do not make any assumptions about the model structure,  which means it can be applied to \textit{any} model structure. Hence, there remains three factors, which will be discussed below.


For cases that behavior space is constructed with \emph{individual behavior mappings},  there are two things to be considered  if one want to select a specific behavior from the behavior space: the policy model $\bm{\Phi}_{\mathbf{h}}$ and behavior mapping $\mathcal{F}_{\bm{\psi}}$. Prior methods have tried to realize behavior control via  selecting a policy model $\bm{\Phi}_{\mathbf{h}_i}$ from the population $\{ \Phi_{ \mathbf{h}_1},...,\Phi_{ \mathbf{h}_\mathrm{N}} \}$ (See Proposition \ref{proposition: Behavior Control via policy models Selection}). The main drawback of this approach is that only one policy model is considered to generate behavior,  leaving other policy models in the population unused. In this paper, we argue that we can tackle this problem via \emph{hybrid behavior mapping}, wherein the hybrid behavior is generated based on all policy models. 

In this paper, we only consider the case that all of the N policy models are used for behavior generating, \ie $\mu_{\mathbf{H},\bm{\psi}}=\mathcal{F}_{\bm{\psi}}(\bm{\Phi}_{\mathbf{H}})$. Now there is only one thing to be considered , \ie the behavior mapping function $\mathcal{F}_{\bm{\psi}}$, and the behavior control problem will be transformed into the optimization of the behavior mapping (See Proposition \ref{proposition: Behavior Control via behavior mapping optimization}). We also do not make any assumptions about the form of the mapping. As an example, one could acquire a hybrid behavior from all policy models via network distillation, parameter fusion, mixture models, etc.

\subsection{Behavior Selection}
\label{sec: Behavior Selection}

According to \eqref{equ: behavior space mapping-based}, each behavior can be indexed by $\mathbf{h}$ and $\bm{\psi}$ for individual behavior mapping cases, and when the $\bm{\psi}$ is not learned for each actor, the behavior selection can be cast to the selection of $\mathbf{h}$  (see Proposition \ref{proposition: Behavior Control via policy models Selection}). As for the hybrid behavior mapping cases, since each behavior can be indexed by $\bm{\psi}$, the behavior selection can be cast into the selection of $\bm{\psi}$ (see Proposition \ref{proposition: Behavior Control via behavior mapping optimization}). Moreover, according to  \eqref{equ:behavior selection formulation}, there are two keys in behavior selection: \textbf{1)} Optimization Target $\mathcal{L}_{\mathcal{P}}$. \textbf{2)}  The optimization algorithm to learn the selection distribution $\mathcal{P}_{\mathbf{M}_{\mathbf{H},\bm{\Psi}}}$ and maximize $\mathcal{L}_{\mathcal{P}}$. In this section, we will discuss them sequentially.

\paragraph{Optimization Target} Two core factors have to be considered for the optimization target: the diversity-based measurement $V^{\mathrm{TD}}_{\mu}$ 
~\citep{DIAYN} and the value-based measurement $V^{\mathrm{TV}}_{\mu}$~\citep{DvD}. By integrating both, the optimization target can be formulated as:
\begin{equation}
\label{equ: reward-diversity trade-off problem}
\begin{aligned}
    \mathcal{L}_{\mathcal{P}}
    & =\mathcal{R}_{\mu \sim \mathcal{P}_{\mathbf{M}_{\mathbf{H},\bm{\Psi}}} } + c \cdot \mathcal{D}_{\mu \sim \mathcal{P}_{\mathbf{M}_{\mathbf{H},\bm{\Psi}}} } \\ 
    & = 
    \mathbb{E}_{\mu \sim \mathcal{P}_{\mathbf{M}_{\mathbf{H},\bm{\Psi}}}}  [V^{\mathrm{TV}}_{\mu}+c \cdot V^{\mathrm{TD}}_{\mu}], \\
\end{aligned}
\end{equation}
wherein, $\mathcal{R}_{\mu \sim \mathcal{P}_{\mathbf{M}_{\mathbf{H},\bm{\Psi}}} } $ and $\mathcal{D}_{\mu \sim \mathcal{P}_{\mathbf{M}_{\mathbf{H},\bm{\Psi}}} }$ is the expectation of value and diversity of behavior $\mu$ over the selection distribution $ \mathcal{P}_{\mathbf{M}_{\mathbf{H},\bm{\Psi}}}$. When $\mathcal{F}_{\bm{\psi}}$ is \textit{unlearned} and \textit{deterministic} for each actor, behavior selection for \textit{each actor} can be simplified into the selection of the policy model:

\begin{Proposition}[Policy Model Selection]
\label{proposition: Behavior Control via policy models Selection} 
    When $\mathcal{F}_{\bm{\psi}}$ is a deterministic and  individual behavior mapping for each actor  at each training step (wall-clock), \textit{e}.\textit{g}., \textbf{Agent57}, the behavior for each actor can be uniquely indexed by $\mathbf{h}$, so \eqref{equ: reward-diversity trade-off problem} can be simplified into
\begin{equation}
\label{equ:pbt target}
    \mathcal{L}_{\mathcal{P}} = \mathbb{E}_{\mathbf{h} \sim \mathcal{P}_\mathbf{H}} \left[ V_{\mu_{\mathbf{h}}}^{\mathrm{TV}} + c\cdot V_{\mu_{\mathbf{h}}}^{\mathrm{TD}} \right],
\end{equation}
where $\mathcal{P}_\mathbf{H}$ is a selection distribution of $\mathbf{h} \in \mathbf{H}=\{\mathbf{h}_1,...,\mathbf{h}_\mathrm{N}\}$. For each actor, the behavior is generated from a selected policy model $\Phi_{\mathbf{h}_i}$ with a pre-defined behavior mapping  $\mathcal{F}_{\bm{\psi}}$.
\end{Proposition}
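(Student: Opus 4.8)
The plan is to prove the statement by a straightforward reduction: under the stated hypotheses the mapping parameter $\bm{\psi}$ ceases to be a free variable, so the realizable behavior space for each actor collapses to one indexed solely by $\mathbf{h}$, and the expectation defining $\mathcal{L}_{\mathcal{P}}$ in \eqref{equ: reward-diversity trade-off problem} can be rewritten as an expectation over $\mathbf{H}$ via a pushforward (change-of-variables) identity.

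First I would fix an arbitrary actor. By assumption its behavior mapping $\mathcal{F}_{\bm{\psi}}$ is individual (takes a single policy model as input), deterministic, and unlearned, so the parameter $\bm{\psi}$ is held at some fixed value $\bm{\psi}_j$ throughout training. Consequently the only object still chosen when producing this actor's behavior is the policy model $\Phi_{\mathbf{h}}$, i.e. the hyper-parameter index $\mathbf{h}\in\mathbf{H}$, and its behavior is $\mu_{\mathbf{h}} := \mathcal{F}_{\bm{\psi}_j}(\Phi_{\mathbf{h}})$. Invoking Assumption \ref{ass: share model}, which guarantees that $\mathbf{h}$ uniquely indexes a policy model, the map $\mathbf{h}\mapsto \mu_{\mathbf{h}}$ is a well-defined injection of $\mathbf{H}$ onto the restricted slice $\{\mathcal{F}_{\bm{\psi}_j}(\Phi_{\mathbf{h}}) : \mathbf{h}\in\mathbf{H}\}\subseteq \mathbf{M}_{\mathbf{H},\bm{\Psi}}$ of the behavior space of Definition \ref{def: bsc}. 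Hence any selection distribution this actor can realize is supported on that slice, and is therefore the pushforward of a unique distribution $\mathcal{P}_{\mathbf{H}}$ on $\mathbf{H}$ along $\mathbf{h}\mapsto\mu_{\mathbf{h}}$; conversely every $\mathcal{P}_{\mathbf{H}}$ induces such a behavior-selection distribution. Establishing this bijection is the substantive content of the argument.

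It then remains to transport the objective through the bijection. By the pushforward identity for expectations, for any integrand $g$ one has $\mathbb{E}_{\mu\sim\mathcal{P}_{\mathbf{M}_{\mathbf{H},\bm{\Psi}}}}[g(\mu)] = \mathbb{E}_{\mathbf{h}\sim\mathcal{P}_{\mathbf{H}}}[g(\mu_{\mathbf{h}})]$; since $V^{\mathrm{TV}}_{\mu}$ and $V^{\mathrm{TD}}_{\mu}$ depend only on the behavior $\mu$, applying this with $g(\mu)=V^{\mathrm{TV}}_{\mu}+c\cdot V^{\mathrm{TD}}_{\mu}$ turns \eqref{equ: reward-diversity trade-off problem} into $\mathcal{L}_{\mathcal{P}} = \mathbb{E}_{\mathbf{h}\sim\mathcal{P}_{\mathbf{H}}}\big[V^{\mathrm{TV}}_{\mu_{\mathbf{h}}}+c\cdot V^{\mathrm{TD}}_{\mu_{\mathbf{h}}}\big]$, which is exactly \eqref{equ:pbt target}. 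Finally I would note that because $\mathcal{P}_{\mathbf{H}}\mapsto\mathcal{P}_{\mathbf{M}}$ is a bijection onto the admissible selection distributions, the $\argmax$ of Definition \ref{def: bs} is attained at the image of $\argmax_{\mathcal{P}_{\mathbf{H}}}\mathcal{L}_{\mathcal{P}}$, so behavior selection for each actor is literally policy-model selection — with Agent57 recovered as the instance where $\mathcal{P}_{\mathbf{H}}$ is maintained by a non-stationary multi-armed-bandit meta-controller.

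The one place requiring care — the likely ``hard part,'' though it is bookkeeping rather than depth — is that $\bm{\psi}$ is fixed per actor but may differ across actors, so the shorthand $\mu_{\mathbf{h}}$ silently carries the index of the actor under consideration; the reduction must be stated actor-wise (equivalently, on the product over actors) rather than as a single global identity. Once this convention is pinned down, no inequalities or limiting arguments are needed: the entire proof is the pushforward computation above.
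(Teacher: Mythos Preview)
Your proposal is correct and follows essentially the same route as the paper's proof: fix an actor $j$, observe that the hypothesis pins $\bm{\psi}$ to a constant $\bm{\psi}_j$, so the realizable behavior set collapses to $\{\mathcal{F}_{\bm{\psi}_j}(\Phi_{\mathbf{h}}):\mathbf{h}\in\mathbf{H}\}$ indexed by $\mathbf{h}$ alone, and then rewrite the expectation in \eqref{equ: reward-diversity trade-off problem} as one over $\mathcal{P}_{\mathbf{H}}$. The paper's argument is terser (it simply asserts $\mathcal{P}_{\mathbf{M}_{\mathbf{H},\bm{\Psi}}}=\mathcal{P}_{\mathbf{H}}$ and substitutes), whereas you make the pushforward/change-of-variables explicit, invoke Assumption~\ref{ass: share model} for injectivity, and flag the actor-wise bookkeeping; these are welcome refinements but not a different strategy.
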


In Proposition \ref{proposition: Behavior Control via policy models Selection}, the behavior space size is controlled by the policy model population size (\ie $|\mathbf{H}|$). However, maintaining a large population of different policy models is data-consuming. Hence, we try to control behaviors via optimizing the selection of behavior mappings:

\begin{Proposition}[Behavior Mapping Optimization]
\label{proposition: Behavior Control via behavior mapping optimization}
When all the policy models are used to generate each behavior, \textit{e}.\textit{g}., $\mu_{\bm{\psi}}=\mathcal{F}_{\bm{\psi}}(\Phi_{\bm{\theta}, \mathbf{h}})$ for single policy model cases or $\mu_{\bm{\psi}}=\mathcal{F}_{\bm{\psi}}(\Phi_{\bm{\theta}_1, \mathbf{h}_1},...,\Phi_{\bm{\theta}_\mathrm{N}, \mathbf{h}_\mathrm{N}})$ for $\mathrm{N}$ policy models cases, each behavior can be uniquely indexed by $\mathcal{F}_{\bm{\psi}}$, and \eqref{equ: reward-diversity trade-off problem} can be simplified into:
\begin{equation}
\label{equ:lbc target}
    \mathcal{L}_{\mathcal{P}} = \mathbb{E}_{\bm{\psi} \sim \mathcal{P}_{\bm{\Psi}}} \left[ V_{\mu_{\bm{\psi}}}^{\mathrm{TV}} + c\cdot V_{\mu_{\bm{\psi}}}^{\mathrm{TD}} \right],  
\end{equation}
where $\mathcal{P}_{\bm{\Psi}}$ is a selection distribution of $\bm{\psi} \in \bm{\Psi}$. 
\end{Proposition}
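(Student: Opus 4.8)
The plan is to observe that, once the policy population is frozen at a given training step, the hybrid behavior space $\mathbf{M}_{\mathbf{H},\bm{\Psi}}$ of \eqref{equ: behavior space mapping-based} is merely a reparametrization of the parameter set $\bm{\Psi}$, so that ``selecting a behavior'' and ``selecting a mapping parameter'' are the same operation, and then to transport the optimization target \eqref{equ: reward-diversity trade-off problem} across this correspondence by a change of variables. First I would fix a wall-clock training step and a single actor. Under Assumption~\ref{ass: share model} the policy models form a fixed finite collection $\bm{\Phi}_{\mathbf{H}}=\{\Phi_{\mathbf{h}_1},\dots,\Phi_{\mathbf{h}_\mathrm{N}}\}$ (for a single policy model, $\bm{\Phi}_{\mathbf{H}}$ is the singleton $\{\Phi_{\bm{\theta},\mathbf{h}}\}$), so the only object still free in $\mu_{\mathbf{H},\bm{\psi}}=\mathcal{F}_{\bm{\psi}}(\bm{\Phi}_{\mathbf{H}})$ is $\bm{\psi}$. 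Hence the map $g\colon\bm{\Psi}\to\mathbf{M}_{\mathbf{H},\bm{\Psi}}$ defined by $g(\bm{\psi})=\mathcal{F}_{\bm{\psi}}(\bm{\Phi}_{\mathbf{H}})$ is well defined, and by the very definition of $\mathbf{M}_{\mathbf{H},\bm{\Psi}}$ in Definition~\ref{def: bsc} it is surjective.

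Next I would upgrade $g$ to a (measurable) bijection, which is what the clause ``each behavior can be uniquely indexed by $\mathcal{F}_{\bm{\psi}}$'' asserts. If the chosen parametrization $\bm{\psi}\mapsto\mathcal{F}_{\bm{\psi}}$ is injective on $\bm{\Psi}$, there is nothing to do; otherwise I would replace $\bm{\Psi}$ by the quotient $\bm{\Psi}/\!\!\sim$ under the relation $\bm{\psi}\sim\bm{\psi}'\iff\mathcal{F}_{\bm{\psi}}(\bm{\Phi}_{\mathbf{H}})=\mathcal{F}_{\bm{\psi}'}(\bm{\Phi}_{\mathbf{H}})$, and verify the easy lemma that this changes neither the behavior space $\mathbf{M}_{\mathbf{H},\bm{\Psi}}$ nor the attainable value of \eqref{equ: reward-diversity trade-off problem}, since $\mathcal{L}_{\mathcal{P}}$ depends on $\bm{\psi}$ only through the induced behavior $\mu_{\bm{\psi}}$. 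Thus, without loss of generality, $g$ is a bijection between $\bm{\Psi}$ and $\mathbf{M}_{\mathbf{H},\bm{\Psi}}$.

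With the bijection in hand I would push selection distributions through it: every behavior-selection distribution $\mathcal{P}_{\mathbf{M}_{\mathbf{H},\bm{\Psi}}}$ corresponds to the parameter distribution obtained as its pushforward under $g^{-1}$, and conversely every $\mathcal{P}_{\bm{\Psi}}$ pushes forward under $g$ to a behavior-selection distribution; this is a one-to-one correspondence between the feasible sets of the two $\operatorname{argmax}$ problems. Because the value and diversity measurements $V^{\mathrm{TV}}_{\mu}$ and $V^{\mathrm{TD}}_{\mu}$ are by definition functions of the behavior $\mu$ alone, the change-of-variables identity for pushforward measures (the law of the unconscious statistician) applied to the integrand $V^{\mathrm{TV}}_{\mu}+c\cdot V^{\mathrm{TD}}_{\mu}$ gives
\[
\mathbb{E}_{\mu \sim \mathcal{P}_{\mathbf{M}_{\mathbf{H},\bm{\Psi}}}}\!\left[V^{\mathrm{TV}}_{\mu}+c\cdot V^{\mathrm{TD}}_{\mu}\right]
= \mathbb{E}_{\bm{\psi} \sim \mathcal{P}_{\bm{\Psi}}}\!\left[V^{\mathrm{TV}}_{\mu_{\bm{\psi}}}+c\cdot V^{\mathrm{TD}}_{\mu_{\bm{\psi}}}\right],
\]
which is exactly \eqref{equ:lbc target}. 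Since the correspondence between distributions is a bijection, the two optimization problems have the same optimal value and their maximizers correspond under $g$, so behavior control is reduced to optimizing the mapping parameter $\bm{\psi}$, as claimed.

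I expect the main obstacle to be making the ``uniquely indexed'' clause rigorous: a generic parametrization $\mathcal{F}_{\bm{\psi}}$ need not be injective (for example a saturating softmax temperature, or redundant coordinates of $\bm{\psi}$ that leave $\mathcal{F}_{\bm{\psi}}(\bm{\Phi}_{\mathbf{H}})$ unchanged), so the bijection is literal only after the quotient step, and the small lemma that this quotient leaves $\mathcal{L}_{\mathcal{P}}$ untouched has to be spelled out. A second point to handle with care, exactly as in Proposition~\ref{proposition: Behavior Control via policy models Selection}, is the scope of the reduction: it holds per actor and per wall-clock step because the population $\bm{\Phi}_{\mathbf{H}}$ is non-stationary during training; the map $g$ is simply re-instantiated at each step with the current models, and no part of the argument requires the models to be constant across time.
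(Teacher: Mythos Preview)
Your proposal is correct and follows essentially the same route as the paper's own proof: observe that once the policy population $\bm{\Phi}_{\mathbf{H}}$ is fixed at a given training step, each behavior is determined by $\bm{\psi}$ alone, so the selection distribution over behaviors can be identified with a distribution over $\bm{\Psi}$, and then substitute into \eqref{equ: reward-diversity trade-off problem}. Your treatment is in fact more careful than the paper's---you explicitly address the injectivity of $\bm{\psi}\mapsto\mathcal{F}_{\bm{\psi}}(\bm{\Phi}_{\mathbf{H}})$ via a quotient and invoke the change-of-variables identity, whereas the paper simply asserts the indexing and the equality $\mathcal{P}_{\mathbf{M}_{\mathbf{H},\bm{\Psi}}}=\mathcal{P}_{\bm{\Psi}}$ without further justification.
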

In Proposition \ref{proposition: Behavior Control via behavior mapping optimization}, the behavior space is majorly controlled by $|\Psi|$, which could be a continuous parameter space. Hence, a larger behavior space can be enabled.

\paragraph{Selection Distribution Optimization}  
Given the optimization target $\mathcal{L}_\mathcal{P}$, we seek to find the optimal behavior selection distribution $\mathcal{P}^*_{\mu}$ that maximizes $\mathcal{L}_\mathcal{P}$:
\begin{equation}
\begin{aligned}
        \mathcal{P}^*_{\mathbf{M}_{\mathbf{H},\bm{\Psi}}} := \underset{\mathcal{P}_{\mathbf{M}_{\mathbf{H},\bm{\Psi}}}}{\operatorname{argmax}}  \mathcal{L}_\mathcal{P} &\overset{(1)}{=}  \underset{\mathcal{P}_{\mathbf{H}}}{\operatorname{argmax}} \mathcal{L}_\mathcal{P}  \\
        & \overset{(2)}{=} \underset{\mathcal{P}_{\bm{\Psi}}}{\operatorname{argmax}}  \mathcal{L}_\mathcal{P},
\end{aligned}
\end{equation}
where (1) and (2) hold because we have Proposition~\ref{proposition: Behavior Control via policy models Selection} and \ref{proposition: Behavior Control via behavior mapping optimization}, respectively. This optimization problem can be solved with existing optimizers, \textit{e}.\textit{g}., evolutionary algorithm~\citep{PBT}, multi-arm bandits (MAB)~\citep{agent57}, etc.

\section{LBC-BM: A Boltzmann Mixture based Implementation for LBC}
\label{sec: Learnable Behavior Control for Boltzmann Strategy}

In this section, we provide an example of improving the behavior control of off-policy actor-critic methods~\citep{impala} via optimizing the behavior mappings as Proposition \ref{proposition: Behavior Control via behavior mapping optimization}. We provide a practical design of hybrid behavior mapping, inducing an implementation of LBC, which we call \textbf{B}oltzmann \textbf{M}ixture based \textbf{LBC}, namely LBC-$\mathcal{BM}$. By choosing different $\mathbf{H}$ and $\bm{\Psi}$, we can obtain a family of implementations of LBC-$\mathcal{BM}$ with different behavior spaces (see Sec. \ref{sec: Ablation Study}).

\subsection{Boltzmann Mixture Based Behavior Space Construction}
\label{sec: Boltzmann Based Behavior Space Construction}

In this section, we provide a general hybrid behavior mapping design including three sub-processes:


\paragraph{Generalized Policy Selection} In Agent57, behavior control is achieved by selecting a single policy from the policy population at each iteration. Following this idea, we generalize the method to the case where multiple policies can be selected. More specifically, we introduce a importance weights vector $\bm{\omega}$ to describe how much each policy will contribute to the generated behavior, $\bm{\omega}=[\omega_1,...,\omega_{\mathrm{N}}],\omega_i\geq 0, \sum_{i=1}^{\mathrm{N}}\omega_i=1$, where $\omega_i$ represents the importance of $i$th policy in the population (\ie $\Phi_{\mathbf{h}_i}$). In particular, if $\bm{\omega}$ is a one-hot vector, \ie $\exists i\in\{1,2,...,\mathrm{N}\},\omega_i=1;\forall j\in\{1,2,...,\mathrm{N}\}\neq i,\omega_j=0$, then the policy selection becomes a single policy selection as Proposition \ref{proposition: Behavior Control via policy models Selection}. Therefore, it can be seen as a generalization of single policy selection, and we call this process \emph{generalized policy selection}.

\paragraph{Policy-Wise Entropy Control} In our work, we propose to use entropy control (which is typically rule-based controlled in previous works) to make a better trade-off between exploration and exploitation. For a policy model $\Phi_{\mathbf{h}_i}$ from the population, we will apply a entropy control function $f_{\tau_i}(\cdot)$, \ie $\pi_{\mathbf{h}_i,\tau_i}=f_{\tau_i}(\Phi_{\mathbf{h}_i})$, where $\pi_{\mathbf{h}_i,\tau_i}$ is the new policy after entropy control, and $f_{\tau_i}(\cdot)$ is parameterized by $\tau_i$.  Here we should note that the entropy of all the policies from the population is controlled in a policy-wise manner. Thus there would be a set of entropy control functions to be considered, which is parameterized by $\bm{\tau}=[\tau_1,...,\tau_{\mathrm{N}}]$. 

\paragraph{Behavior Distillation from Multiple Policies}
Different from previous methods where only one policy is used to generate the behavior, in our approach, we combine N policies $[\pi_{\mathbf{h}_1,\tau_1},...,\pi_{\mathbf{h}_{\mathrm{N}},\tau_{\mathrm{N}}}]$, together with their importance weights $\bm{\omega}=[\omega_1,...,\omega_{\mathrm{N}}]$.  Specially, in order to make full use of these policies according to their importance, we introduce a \emph{behavior distillation function} $g$ which takes both the policies and importance weights as input, \ie
$\mu_{\mathbf{H}, \bm{\tau},\bm{\omega}}=g(\pi_{\mathbf{h}_1,\tau_1},...,\pi_{\mathbf{h}_{\mathrm{N}},\tau_{\mathrm{N}}}, \bm{\omega})$. The distillation function $g(\cdot,\bm{\omega})$ can be implemented in different ways, \eg knowledge distillation (supervised learning), parameters fusion, etc. In conclusion, the behavior space can be constructed as,
\begin{equation}
\label{equ: enlarged bs}
\mathbf{M}_{\mathbf{H},\bm{\Psi}} = \left\{ g\left( f_{\tau_1}(\Phi_{\mathbf{h}_1}),...,f_{\tau_{\mathrm{N}}}(\Phi_{\mathbf{h}_{\mathrm{N}}}),\omega_1,...,\omega_{\mathrm{N}} \right) | \bm{\psi}\in\bm{\Psi} \right\}
\end{equation}
wherein  $\bm{\Psi}=\{\bm{\psi}=(\tau_1,...,\tau_{\mathrm{N}},\omega_1,...,\omega_{\mathrm{N}})\}$,  $\mathbf{H}=\{\mathbf{h}_1,...,\mathbf{h}_\mathrm{N}\}$. Note that this is a general approach which can be applied to different tasks and algorithms by simply selecting different entropy control function $f_{\tau_i}(\cdot)$ and behavior distillation function $g(\cdot, \bm{\omega})$. 
As an example, for Atari task, we model the policy as a Boltzmann distribution, \ie $\pi_{\mathbf{h}_i,\tau_i} (a|s)=e^{\tau_i\Phi_{\mathbf{h}_i}(a|s)}\sum_{a'} e^{\tau_i\Phi_{\mathbf{h}_i}(a'|s)} $, where $\tau_i\in(0,\infty)$. The entropy can thus be controlled by controlling the temperature. As for the behavior distillation function, we are inspired by the behavior design of GDI, which takes a weighted sum of two softmax distributions derived from two advantage functions.  
  We can further extend this approach to the case to do a combination across different policies, \ie $\mu_{\mathbf{H}, \bm{\tau},\bm{\omega}} (a|s) = \sum_{i=1}^{\mathrm{N}}\omega_i  \pi_{\mathbf{h}_i,\tau_i}(a|s)$. This formula is actually a form of \emph{mixture model}, where the importance weights play the role of mixture weights of the mixture model. Then the behavior space becomes,

\begin{equation}
\label{equ: behavior space with hybrid behavior mapping}
\setlength{\abovedisplayskip}{0.5pt}
\setlength{\belowdisplayskip}{0.5pt}
    \mathbf{M}_{\mathbf{H},\bm{\Psi}} = \{ \mu_{\mathbf{H}, \bm{\psi}}  = \sum_{i=1}^{\mathbf{N}} \omega_i \operatorname{softmax}_{\tau_i}(\Phi_{\mathbf{h}_i}) | \bm{\psi}\in\bm{\Psi} \}
\end{equation}

 
\subsection{MAB Based Behavior Selection}
\label{sec: MAB Behavior Selection}

According to Proposition \ref{proposition: Behavior Control via behavior mapping optimization}, the behavior selection over behavior space \ref{equ: behavior space with hybrid behavior mapping} can be simplified to the selection of $\bm{\psi}$. In this paper,  we use MAB-based meta-controller to select $\bm{\psi} \in \bm{\Psi}$. Since $\bm{\Psi}$ is a continuous multidimensional space, we discretize $\bm{\Psi}$ into $\mathrm{K}$ regions $\{\Psi_1,...,\Psi_\mathrm{K}\}$, and each region corresponds to an arm of MAB. At the beginning of a trajectory $i$, $l$-th actor will use MAB to sample a  region $\Psi_k$ indexed by arm k according to $\mathcal{P}_{\Psi}=\operatorname{softmax}(\operatorname{Score}_{\Psi_k})=\frac{e^{\operatorname{Score}_{\Psi_k}}}{\sum_j e^{\operatorname{Score}_{\Psi_j}}}$.
We adopt UCB score as  $\operatorname{Score}_{\Psi_k}  = V_{\Psi_k} + c \cdot \sqrt{\frac{\log (1 + \sum_{j \neq k}^{\mathrm{K}} N_{\Psi_j})}{1 + N_{\Psi_k}}}$ to tackle the reward-diversity trade-off problem in  \eqref{equ:lbc target} ~\citep{garivier2008upper}.  $N_{\Psi_k}$ means the number of the visit of $\Psi_k$ indexed by arm $k$.  $V_{\Psi_k}$ is calculated by the expectation of the undiscounted episodic returns to measure the value of each $\Psi_k$, and  the UCB item  is used to avoid selecting the same arm repeatedly and ensure sufficient diverse behavior mappings can be selected to boost the behavior diversity.  After an $\Psi_k$ is sampled, a  $\bm{\psi}$ will be uniformly sampled from  $\Psi_k$, corresponding to a behavior mapping $\mathcal{F}_{\bm{\psi}}$. With $\mathcal{F}_{\bm{\psi}}$, we can obtain a behavior $\mu_{\bm{\psi}}$ according to  \eqref{equ: behavior space with hybrid behavior mapping}. Then, the $l$-th actor acts $\mu_{\bm{\psi}}$ to obtain a trajectory $\tau_i$ and the undiscounted episodic return $G_{i}$, then $G_{i}$ is used to update the reward model $V_{\Psi_k}$ of region ${\Psi_k}$ indexed by arm $k$.  As for the nonstationary problem, we are inspired from GDI, which ensembles several MAB with different learning rates and discretization accuracy. We can extend to handle the nonstationary problem by jointly training a population of bandits from very exploratory to purely exploitative (i.e., different c of the UCB item, similar to the policy population of Agent57). Moreover, we will periodically replace the members of the MAB population to ease the nonstationary problem further. More details of implementations of  MAB can be found in App. \ref{Sec: appendix MAB}. Moreover, the mechanism of the UCB item for behavior control has not been widely studied in prior works, and we will demonstrate how it boosts behavior diversity in App. \ref{app: TSNE Analysis}.

\section{Experiment}
\label{sec: experiment}
In this section, we design our experiment to answer the following questions:
\begin{itemize}
    \item Whether our methods can outperform prior SOTA RL algorithms in both sample efficiency and final performance in Atari 1B Benchmarks (See Sec. \ref{sec: Summary of Results} and Figs. \ref{fig:atari_results})?
     \item Can our methods adaptively adjust the exploration-exploration trade-off (See Figs. \ref{fig:entropy control})?
    \item  How to enlarge or narrow down the behavior space? What is the performance of methods with different behavior spaces (See Sec. \ref{sec: Ablation Study})? 
    \item  How much performance will be degraded without proper behavior selection (See Figs. \ref{fig:ablation study})?
\end{itemize}

\subsection{Experimental Setup}
\label{sec: Practical Implement Based on LBC}

\subsubsection{Experimental Details}
 We conduct our experiments in  ALE ~\citep{ale}. The standard pre-processing settings of Atari are identical to those of Agent57~\citep{agent57}, and related parameters have been concluded in App. \ref{Sec: appendix hyper-parameters}.  We employ a separate evaluation process to record scores continuously. We record the undiscounted episodic returns averaged over five seeds using a windowed mean over 32 episodes. To avoid any issues that aggregated metrics may have, App. \ref{appendix: experiment results} provides full learning curves for all games and detailed comparison tables of raw and normalized scores. Apart from the mean and median HNS, we also report how many human worlds records our agents have broken to emphasize the superhuman performance of our methods. For more experimental details, see App. \ref{sec:app Experiment Details}. 

\subsubsection{Implementation Details}

We jointly train three polices, and each policy can be indexed by the hyper-parameters $\mathbf{h}_i=(\gamma_i,\mathcal{RS}_i)$, wherein $\mathcal{RS}_i$ is a reward shaping method~\citep{agent57}, and $\gamma_i$ is the discounted factor. Each policy model $\Phi_{\mathbf{h}_i}$ adopts the dueling network structure~\citep{dueling_q}, where $\Phi_{\mathbf{h}_i} = A_{\mathbf{h}_i} = Q_{\mathbf{h}_i} - V_{\mathbf{h}_i}$. More details of the network structure can be found in App. \ref{app: Model Architecture}. To correct for  harmful discrepancy of off-policy learning, we adopt V-Trace~\citep{impala} and  ReTrace ~\citep{retrace} to learn $V_{\mathbf{h}_i}$ and $Q_{\mathbf{h}_i}$, respectively. The policy is learned by policy gradient~\citep{ppo}. Based on \eqref{equ: behavior space with hybrid behavior mapping}, we could build a behavior space with a hybrid mapping as $\mathbf{M}_{\mathbf{H},\bm{\Psi}}= \{ \mu_{\mathbf{H},\bm{\psi}} = \sum_{i=1}^{3} \omega_i \operatorname{softmax}_{\tau_i}(\Phi_{\mathbf{h}_i})\}$, wherein  $\mathbf{H}=\{\mathbf{h}_1,\mathbf{h}_2,\mathbf{h}_3\}$, $\bm{\Psi}=\{\bm{\psi}=(\tau_1,\omega_1,\tau_2,\omega_2,\tau_3,\omega_3)|\tau_i \in (0, \tau^{+}), \sum_{j=1}^{3} \omega_j=1 \}$. The behavior selection is achieved by MAB described in Sec. \ref{sec: MAB Behavior Selection}, and more details can see App. \ref{Sec: appendix MAB}. Finally, we could obtain an implementation of LBC-$\mathcal{BM}$, which is our \textbf{main algorithm}. The target policy for $A_1^\pi$ and $A_2^\pi$ in GDI-H$^3$ is the same,  while in our work the target policy for $A_i^{\pi_i}$ is $\pi_i=\operatorname{softmax}(A_i)$.

\subsection{Summary of Results}
\label{sec: Summary of Results}

\begin{figure}[!t]
	\centering
	\vspace{-0.3in}
	\includegraphics[width=0.8\linewidth]{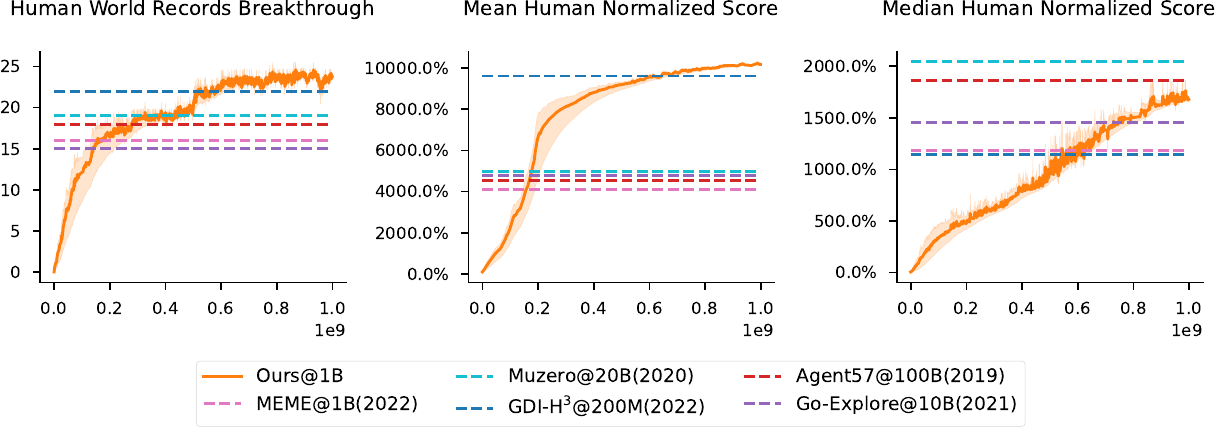}
	\centering
	\vspace{-0.1in}
	\caption{The learning curves in Atari. Curves are smoothed
with a moving average over 5 points. 
	}
	\label{fig:atari_results}
\end{figure}

\begin{figure}[!t]
\centering
	\vspace{-0.15in}
	\includegraphics[width=0.85\linewidth]{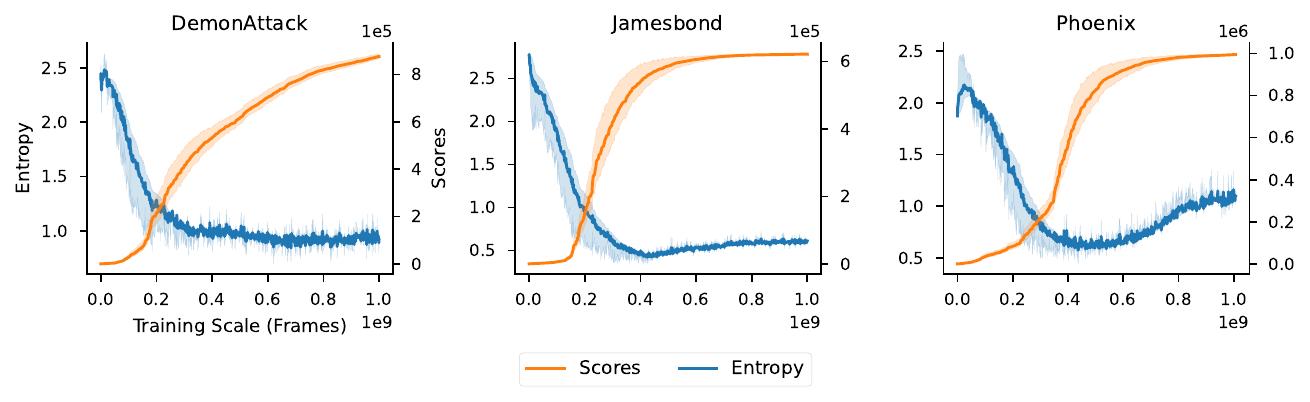}
	\centering
	\vspace{-0.2in}
\caption{Behavior entropy and scores curve across training
for different games where we achieved unprecedented performance. The names of the axes are the same as that of the leftmost figure.}
\label{fig:entropy control}
\vspace{-0.15in}
\end{figure}


\paragraph{Results on Atari Benchmark} The aggregated results across games are reported in Figs. \ref{fig:atari_results}. Among the algorithms with superb final performance, our agents achieve the best mean HNS and surpass the most human world records across 57 games of the Atari benchmark with relatively minimal training frames, leading to the best learning efficiency. Noting that Agent57 reported the maximum scores across training as the final score, and if we report our performance in the same manner, our median is  1934\%, which is \textbf{higher} than Agent57 and demonstrates our superior performance.


 \paragraph{Discussion of  Results} With LBC, we can understand the mechanisms underlying the performance of GDI-H$^3$ more clearly: \textbf{i)} GDI-H$^3$ has a high-capacity behavior space and a meta-controller to optimize the behavior selection \textbf{ii)} only a single  target policy is learned, which enables stable learning and fast converge (See the case study of KL divergence in App. \ref{sec: KL Divergence Between policy models}). Compared to GDI-H$^3$, to ensure the behavior space will not degenerate, LBC maintains a population of diverse policies and, as a price, sacrifices some sample efficiency. Nevertheless, LBC can \textbf{continuously} maintain a significantly larger behavior space with hybrid behavior mapping, which enables RL agents to continuously explore and get improvement.
 
\subsection{Case Study: Behavior Control}   To further explore the mechanisms underlying the success of behavior control of  our method, we adopt a case study to showcase our control process of behaviors. As shown in Figs. \ref{fig:entropy control}, in most tasks, our agents prefer exploratory behaviors first (i.e., high stochasticity policies with high entropy), and, as training progresses, the agents shift into producing experience from more exploitative behaviors. On the verge of peaking, the entropy of the behaviors could be maintained at a certain level (task-wise)  instead of collapsing swiftly to zero to avoid converging prematurely to sub-optimal policies.

\subsection{Ablation Study}
\label{sec: Ablation Study}

In this section, we investigate several properties of our method. For more details, see App. \ref{app: Ablation Study}.

\paragraph{Behavior Space Decomposition} To explore the effect of different behavior spaces, we decompose the behavior space of our main algorithm via reducing  $\mathbf{H}$ and $\bm{\Psi}$: 

\textbf{1) Reducing $\mathbf{H}$.} When we set all the policy models of our main algorithm the same, the behavior space transforms from $\mathcal{F}(\Phi_{\mathbf{h}_1},\Phi_{\mathbf{h}_2},\Phi_{\mathbf{h}_3})$ into  $\mathcal{F}(\Phi_{\mathbf{h}_1},\Phi_{\mathbf{h}_1},\Phi_{\mathbf{h}_1})$. $\mathbf{H}$ degenerates from $\{\mathbf{h}_1,\mathbf{h}_2,\mathbf{h}_3\}$ into $\{\mathbf{h}_1\}$.  We can obtain a control group with a smaller behavior space by reducing $\mathbf{H}$. 


 \textbf{2)Reducing  $\mathbf{H}$ and $\bm{\Psi}$.} Based on the control group reducing $\mathbf{H}$, we can further reduce $\bm{\Psi}$ to further narrow down the behavior space. Specially,  we can directly adopt a individual behavior mapping to build the behavior space as $\mathbf{M}_{\mathbf{H},\bm{\Psi}} = \{\mu_{\psi}=\operatorname{softmax}_{\tau}(\Phi_{\mathbf{h}_1})\}$, where $\bm{\Psi}$ degenerates from $\{\bm{\omega}_1,\bm{\omega}_2,\bm{\omega}_3,\tau_1,\tau_2,\tau_3\}$ to $\{\tau\}$ and $\mathbf{H}=\{\mathbf{h}_1\}$. Then, we can obtain a control group with the smallest behavior space by reducing $\mathbf{H}$ and $\bm{\Psi}$.
 
The performance of these methods is illustrated in Figs. \ref{fig:ablation study}, and from left to right, the behavior space of the first three algorithms decreases in turn (According to Corollary \ref{Corollary: space larger} in App. \ref{app: Proof}). It is evident that narrowing the behavior space via reducing  $\mathbf{H}$ or $\bm{\Psi}$ will degrade the performance. On the contrary, the performance can be boosted by enlarging the behavior space, which could be a promising way to improve the performance of existing methods.

\begin{figure*}[!t]
	\centering
	\vspace{-0.3in}
 \includegraphics[width=0.85\textwidth]{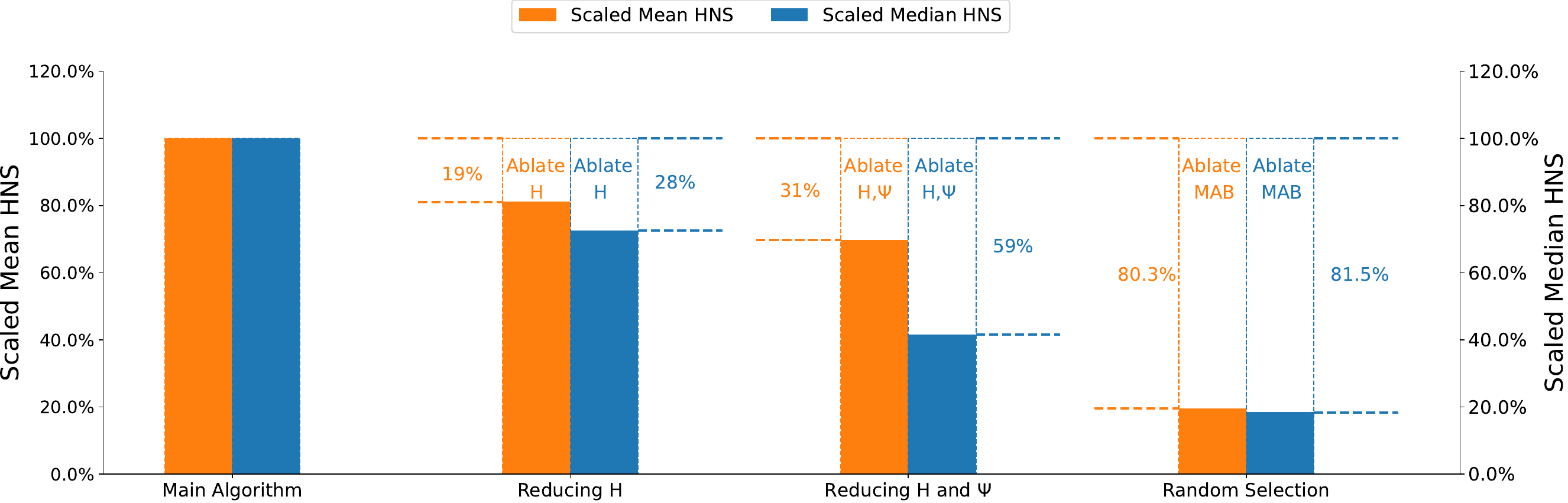}
	\centering
	\vspace{-0.15in}
\caption{Ablation Results. All the results are scaled by the main algorithm to improve readability.}
\label{fig:ablation study}
\vspace{-0.2in}
\end{figure*}

\paragraph{Behavior Selection}  To highlight the importance of an appropriate behavior selection, we replace the meta-controller of our main algorithm with a random selection. The ablation results are illustrated in Figs. \ref{fig:ablation study}, from which it is evident that, with the same behavior space, not learning an appropriate selection distribution of behaviors will significantly degrade the performance. We conduct a t-SNE analysis in App. \ref{app: TSNE Analysis} to demonstrate that our methods can acquire more diverse behaviors than the control group with pre-defined behavior mapping. Another ablation study that removed the UCB item has been conducted in App. \ref{app: TSNE Analysis} to demonstrate the behavior diversity may be boosted by the UCB item, which can encourage the agents to select more different behavior mappings.

\section{Conclusion}
\label{sec: conclusion}
We present the first deep reinforcement learning agent to break 24 human world records in Atari using only 1B training frames. To achieve this, we propose a general framework called LBC, which enables a significantly enlarged behavior selection space via formulating a hybrid behavior mapping from all policies, and constructs a unified  learnable process for behavior selection. We introduced LBC into off-policy actor-critic methods and obtained a family of implementations. A large number of experiments on Atari have been conducted to demonstrate the effectiveness of our methods empirically. Apart from the full results, we do detailed ablation studies to examine the effectiveness of the proposed components. While there are many improvements and extensions to be explored going forward, we believe that the ability of LBC to enhance the control process of behaviors results in a powerful platform to propel future research.






\bibliography{ref}

\begin{thebibliography}{38}
\providecommand{\natexlab}[1]{#1}
\providecommand{\url}[1]{\texttt{#1}}
\expandafter\ifx\csname urlstyle\endcsname\relax
  \providecommand{\doi}[1]{doi: #1}\else
  \providecommand{\doi}{doi: \begingroup \urlstyle{rm}\Url}\fi

\bibitem[Badia et~al.(2020{\natexlab{a}})Badia, Piot, Kapturowski, Sprechmann, Vitvitskyi, Guo, and Blundell]{agent57}
Adri{\`{a}}~Puigdom{\`{e}}nech Badia, Bilal Piot, Steven Kapturowski, Pablo Sprechmann, Alex Vitvitskyi, Zhaohan~Daniel Guo, and Charles Blundell.
\newblock Agent57: Outperforming the atari human benchmark.
\newblock In \emph{Proceedings of the 37th International Conference on Machine Learning, {ICML} 2020, 13-18 July 2020, Virtual Event}, volume 119 of \emph{Proceedings of Machine Learning Research}, pp.\  507--517. {PMLR}, 2020{\natexlab{a}}.
\newblock URL \url{http://proceedings.mlr.press/v119/badia20a.html}.

\bibitem[Badia et~al.(2020{\natexlab{b}})Badia, Sprechmann, Vitvitskyi, Guo, Piot, Kapturowski, Tieleman, Arjovsky, Pritzel, Bolt, and Blundell]{ngu}
Adri{\`{a}}~Puigdom{\`{e}}nech Badia, Pablo Sprechmann, Alex Vitvitskyi, Zhaohan~Daniel Guo, Bilal Piot, Steven Kapturowski, Olivier Tieleman, Mart{\'{\i}}n Arjovsky, Alexander Pritzel, Andrew Bolt, and Charles Blundell.
\newblock Never give up: Learning directed exploration strategies.
\newblock In \emph{8th International Conference on Learning Representations, {ICLR} 2020, Addis Ababa, Ethiopia, April 26-30, 2020}. OpenReview.net, 2020{\natexlab{b}}.
\newblock URL \url{https://openreview.net/forum?id=Sye57xStvB}.

\bibitem[Bellemare et~al.(2013)Bellemare, Naddaf, Veness, and Bowling]{ale}
Marc~G. Bellemare, Yavar Naddaf, Joel Veness, and Michael Bowling.
\newblock The arcade learning environment: An evaluation platform for general agents.
\newblock \emph{J. Artif. Intell. Res.}, 47:\penalty0 253--279, 2013.
\newblock \doi{10.1613/jair.3912}.
\newblock URL \url{https://doi.org/10.1613/jair.3912}.

\bibitem[Chen et~al.(2021)Chen, Lu, Rajeswaran, Lee, Grover, Laskin, Abbeel, Srinivas, and Mordatch]{dt}
Lili Chen, Kevin Lu, Aravind Rajeswaran, Kimin Lee, Aditya Grover, Misha Laskin, Pieter Abbeel, Aravind Srinivas, and Igor Mordatch.
\newblock Decision transformer: Reinforcement learning via sequence modeling.
\newblock \emph{Advances in neural information processing systems}, 34:\penalty0 15084--15097, 2021.

\bibitem[Espeholt et~al.(2018)Espeholt, Soyer, Munos, Simonyan, Mnih, Ward, Doron, Firoiu, Harley, Dunning, Legg, and Kavukcuoglu]{impala}
Lasse Espeholt, Hubert Soyer, R{\'{e}}mi Munos, Karen Simonyan, Volodymyr Mnih, Tom Ward, Yotam Doron, Vlad Firoiu, Tim Harley, Iain Dunning, Shane Legg, and Koray Kavukcuoglu.
\newblock {IMPALA:} scalable distributed deep-rl with importance weighted actor-learner architectures.
\newblock In Jennifer~G. Dy and Andreas Krause (eds.), \emph{Proceedings of the 35th International Conference on Machine Learning, {ICML} 2018, Stockholmsm{\"{a}}ssan, Stockholm, Sweden, July 10-15, 2018}, volume~80 of \emph{Proceedings of Machine Learning Research}, pp.\  1406--1415. {PMLR}, 2018.
\newblock URL \url{http://proceedings.mlr.press/v80/espeholt18a.html}.

\bibitem[Eysenbach et~al.(2019)Eysenbach, Gupta, Ibarz, and Levine]{DIAYN}
Benjamin Eysenbach, Abhishek Gupta, Julian Ibarz, and Sergey Levine.
\newblock Diversity is all you need: Learning skills without a reward function.
\newblock In \emph{7th International Conference on Learning Representations, {ICLR} 2019, New Orleans, LA, USA, May 6-9, 2019}. OpenReview.net, 2019.
\newblock URL \url{https://openreview.net/forum?id=SJx63jRqFm}.

\bibitem[Fan(2021)]{atari_review}
Jiajun Fan.
\newblock A review for deep reinforcement learning in atari: Benchmarks, challenges, and solutions.
\newblock \emph{CoRR}, abs/2112.04145, 2021.
\newblock URL \url{https://arxiv.org/abs/2112.04145}.

\bibitem[Fan \& Xiao(2022)Fan and Xiao]{gdi}
Jiajun Fan and Changnan Xiao.
\newblock Generalized data distribution iteration.
\newblock In Kamalika Chaudhuri, Stefanie Jegelka, Le~Song, Csaba Szepesv{\'{a}}ri, Gang Niu, and Sivan Sabato (eds.), \emph{International Conference on Machine Learning, {ICML} 2022, 17-23 July 2022, Baltimore, Maryland, {USA}}, volume 162 of \emph{Proceedings of Machine Learning Research}, pp.\  6103--6184. {PMLR}, 2022.
\newblock URL \url{https://proceedings.mlr.press/v162/fan22c.html}.

\bibitem[Fan et~al.(2020)Fan, Ba, Guo, and Hao]{pi2}
Jiajun Fan, He~Ba, Xian Guo, and Jianye Hao.
\newblock Critic {PI2:} master continuous planning via policy improvement with path integrals and deep actor-critic reinforcement learning.
\newblock \emph{CoRR}, abs/2011.06752, 2020.
\newblock URL \url{https://arxiv.org/abs/2011.06752}.

\bibitem[Garivier \& Moulines(2011)Garivier and Moulines]{garivier2008upper}
Aur{\'{e}}lien Garivier and Eric Moulines.
\newblock On upper-confidence bound policies for switching bandit problems.
\newblock In Jyrki Kivinen, Csaba Szepesv{\'{a}}ri, Esko Ukkonen, and Thomas Zeugmann (eds.), \emph{Algorithmic Learning Theory - 22nd International Conference, {ALT} 2011, Espoo, Finland, October 5-7, 2011. Proceedings}, volume 6925 of \emph{Lecture Notes in Computer Science}, pp.\  174--188. Springer, 2011.
\newblock \doi{10.1007/978-3-642-24412-4\_16}.
\newblock URL \url{https://doi.org/10.1007/978-3-642-24412-4\_16}.

\bibitem[Hafner et~al.(2021)Hafner, Lillicrap, Norouzi, and Ba]{dreamerv2}
Danijar Hafner, Timothy~P. Lillicrap, Mohammad Norouzi, and Jimmy Ba.
\newblock Mastering atari with discrete world models.
\newblock In \emph{9th International Conference on Learning Representations, {ICLR} 2021, Virtual Event, Austria, May 3-7, 2021}. OpenReview.net, 2021.
\newblock URL \url{https://openreview.net/forum?id=0oabwyZbOu}.

\bibitem[Hessel et~al.(2018)Hessel, Modayil, van Hasselt, Schaul, Ostrovski, Dabney, Horgan, Piot, Azar, and Silver]{rainbow}
Matteo Hessel, Joseph Modayil, Hado van Hasselt, Tom Schaul, Georg Ostrovski, Will Dabney, Dan Horgan, Bilal Piot, Mohammad~Gheshlaghi Azar, and David Silver.
\newblock Rainbow: Combining improvements in deep reinforcement learning.
\newblock In Sheila~A. McIlraith and Kilian~Q. Weinberger (eds.), \emph{Proceedings of the Thirty-Second {AAAI} Conference on Artificial Intelligence, (AAAI-18), the 30th innovative Applications of Artificial Intelligence (IAAI-18), and the 8th {AAAI} Symposium on Educational Advances in Artificial Intelligence (EAAI-18), New Orleans, Louisiana, USA, February 2-7, 2018}, pp.\  3215--3222. {AAAI} Press, 2018.
\newblock URL \url{https://www.aaai.org/ocs/index.php/AAAI/AAAI18/paper/view/17204}.

\bibitem[Howard(1960)]{howard1960dynamic}
Ronald~A Howard.
\newblock \emph{Dynamic programming and markov processes.}
\newblock John Wiley, 1960.

\bibitem[Irie et~al.(2021)Irie, Schlag, Csord{\'a}s, and Schmidhuber]{got}
Kazuki Irie, Imanol Schlag, R{\'o}bert Csord{\'a}s, and J{\"u}rgen Schmidhuber.
\newblock Going beyond linear transformers with recurrent fast weight programmers.
\newblock \emph{Advances in Neural Information Processing Systems}, 34:\penalty0 7703--7717, 2021.

\bibitem[Jaderberg et~al.(2017)Jaderberg, Dalibard, Osindero, Czarnecki, Donahue, Razavi, Vinyals, Green, Dunning, Simonyan, Fernando, and Kavukcuoglu]{PBT}
Max Jaderberg, Valentin Dalibard, Simon Osindero, Wojciech~M. Czarnecki, Jeff Donahue, Ali Razavi, Oriol Vinyals, Tim Green, Iain Dunning, Karen Simonyan, Chrisantha Fernando, and Koray Kavukcuoglu.
\newblock Population based training of neural networks.
\newblock \emph{CoRR}, abs/1711.09846, 2017.
\newblock URL \url{http://arxiv.org/abs/1711.09846}.

\bibitem[Kapturowski et~al.(2019)Kapturowski, Ostrovski, Quan, Munos, and Dabney]{r2d2}
Steven Kapturowski, Georg Ostrovski, John Quan, R{\'{e}}mi Munos, and Will Dabney.
\newblock Recurrent experience replay in distributed reinforcement learning.
\newblock In \emph{7th International Conference on Learning Representations, {ICLR} 2019, New Orleans, LA, USA, May 6-9, 2019}. OpenReview.net, 2019.
\newblock URL \url{https://openreview.net/forum?id=r1lyTjAqYX}.

\bibitem[Kapturowski et~al.(2022)Kapturowski, Campos, Jiang, Rakićević, van Hasselt, Blundell, and Badia]{meme}
Steven Kapturowski, Víctor Campos, Ray Jiang, Nemanja Rakićević, Hado van Hasselt, Charles Blundell, and Adrià~Puigdomènech Badia.
\newblock Human-level atari 200x faster.
\newblock \emph{CoRR}, abs/2209.07550, 2022.
\newblock URL \url{https://arxiv.org/abs/2209.07550}.

\bibitem[Li et~al.(2018)Li, Rao, and Shi]{policylogits}
Jinke Li, Ruonan Rao, and Jun Shi.
\newblock Learning to trade with deep actor critic methods.
\newblock In \emph{2018 11th International Symposium on Computational Intelligence and Design (ISCID)}, volume~2, pp.\  66--71. IEEE, 2018.

\bibitem[Machado et~al.(2018)Machado, Bellemare, Talvitie, Veness, Hausknecht, and Bowling]{ale2}
Marlos~C. Machado, Marc~G. Bellemare, Erik Talvitie, Joel Veness, Matthew~J. Hausknecht, and Michael Bowling.
\newblock Revisiting the arcade learning environment: Evaluation protocols and open problems for general agents.
\newblock \emph{J. Artif. Intell. Res.}, 61:\penalty0 523--562, 2018.
\newblock \doi{10.1613/jair.5699}.
\newblock URL \url{https://doi.org/10.1613/jair.5699}.

\bibitem[Mnih et~al.(2015)Mnih, Kavukcuoglu, Silver, Rusu, Veness, Bellemare, Graves, Riedmiller, Fidjeland, Ostrovski, Petersen, Beattie, Sadik, Antonoglou, King, Kumaran, Wierstra, Legg, and Hassabis]{dqn}
Volodymyr Mnih, Koray Kavukcuoglu, David Silver, Andrei~A. Rusu, Joel Veness, Marc~G. Bellemare, Alex Graves, Martin~A. Riedmiller, Andreas Fidjeland, Georg Ostrovski, Stig Petersen, Charles Beattie, Amir Sadik, Ioannis Antonoglou, Helen King, Dharshan Kumaran, Daan Wierstra, Shane Legg, and Demis Hassabis.
\newblock Human-level control through deep reinforcement learning.
\newblock \emph{Nat.}, 518\penalty0 (7540):\penalty0 529--533, 2015.
\newblock \doi{10.1038/nature14236}.
\newblock URL \url{https://doi.org/10.1038/nature14236}.

\bibitem[Munos et~al.(2016)Munos, Stepleton, Harutyunyan, and Bellemare]{retrace}
R{\'{e}}mi Munos, Tom Stepleton, Anna Harutyunyan, and Marc~G. Bellemare.
\newblock Safe and efficient off-policy reinforcement learning.
\newblock In Daniel~D. Lee, Masashi Sugiyama, Ulrike von Luxburg, Isabelle Guyon, and Roman Garnett (eds.), \emph{Advances in Neural Information Processing Systems 29: Annual Conference on Neural Information Processing Systems 2016, December 5-10, 2016, Barcelona, Spain}, pp.\  1046--1054, 2016.
\newblock URL \url{https://proceedings.neurips.cc/paper/2016/hash/c3992e9a68c5ae12bd18488bc579b30d-Abstract.html}.

\bibitem[Parker{-}Holder et~al.(2020)Parker{-}Holder, Pacchiano, Choromanski, and Roberts]{DvD}
Jack Parker{-}Holder, Aldo Pacchiano, Krzysztof~Marcin Choromanski, and Stephen~J. Roberts.
\newblock Effective diversity in population based reinforcement learning.
\newblock In Hugo Larochelle, Marc'Aurelio Ranzato, Raia Hadsell, Maria{-}Florina Balcan, and Hsuan{-}Tien Lin (eds.), \emph{Advances in Neural Information Processing Systems 33: Annual Conference on Neural Information Processing Systems 2020, NeurIPS 2020, December 6-12, 2020, virtual}, 2020.
\newblock URL \url{https://proceedings.neurips.cc/paper/2020/hash/d1dc3a8270a6f9394f88847d7f0050cf-Abstract.html}.

\bibitem[Schmidhuber(1997)]{lstm}
Sepp Hochreiter;~Jürgen Schmidhuber.
\newblock Long short-term memory.
\newblock \emph{Neural Computation.}, 1997.

\bibitem[Schmitt et~al.(2020)Schmitt, Hessel, and Simonyan]{laser}
Simon Schmitt, Matteo Hessel, and Karen Simonyan.
\newblock Off-policy actor-critic with shared experience replay.
\newblock In \emph{Proceedings of the 37th International Conference on Machine Learning, {ICML} 2020, 13-18 July 2020, Virtual Event}, volume 119 of \emph{Proceedings of Machine Learning Research}, pp.\  8545--8554. {PMLR}, 2020.
\newblock URL \url{http://proceedings.mlr.press/v119/schmitt20a.html}.

\bibitem[Schulman et~al.(2015)Schulman, Levine, Abbeel, Jordan, and Moritz]{trpo}
John Schulman, Sergey Levine, Pieter Abbeel, Michael~I. Jordan, and Philipp Moritz.
\newblock Trust region policy optimization.
\newblock In Francis~R. Bach and David~M. Blei (eds.), \emph{Proceedings of the 32nd International Conference on Machine Learning, {ICML} 2015, Lille, France, 6-11 July 2015}, volume~37 of \emph{{JMLR} Workshop and Conference Proceedings}, pp.\  1889--1897. JMLR.org, 2015.
\newblock URL \url{http://proceedings.mlr.press/v37/schulman15.html}.

\bibitem[Schulman et~al.(2017)Schulman, Wolski, Dhariwal, Radford, and Klimov]{ppo}
John Schulman, Filip Wolski, Prafulla Dhariwal, Alec Radford, and Oleg Klimov.
\newblock Proximal policy optimization algorithms.
\newblock \emph{CoRR}, abs/1707.06347, 2017.
\newblock URL \url{http://arxiv.org/abs/1707.06347}.

\bibitem[Sutton \& Barto(1998)Sutton and Barto]{sutton}
Richard~S. Sutton and Andrew~G. Barto.
\newblock \emph{Reinforcement learning - an introduction}.
\newblock Adaptive computation and machine learning. {MIT} Press, 1998.
\newblock ISBN 978-0-262-19398-6.
\newblock URL \url{https://www.worldcat.org/oclc/37293240}.

\bibitem[Toromanoff et~al.(2019)Toromanoff, Wirbel, and Moutarde]{atarihuman}
Marin Toromanoff, {\'{E}}milie Wirbel, and Fabien Moutarde.
\newblock Is deep reinforcement learning really superhuman on atari?
\newblock \emph{CoRR}, abs/1908.04683, 2019.
\newblock URL \url{http://arxiv.org/abs/1908.04683}.

\bibitem[Wang et~al.(2022{\natexlab{a}})Wang, Shen, Li, Hao, Liu, Huang, Wu, Lin, Chen, and Heng]{wang2022lhnn}
Bowen Wang, Guibao Shen, Dong Li, Jianye Hao, Wulong Liu, Yu~Huang, Hongzhong Wu, Yibo Lin, Guangyong Chen, and Pheng~Ann Heng.
\newblock Lhnn: Lattice hypergraph neural network for vlsi congestion prediction.
\newblock In \emph{Proceedings of the 59th ACM/IEEE Design Automation Conference}, pp.\  1297--1302, 2022{\natexlab{a}}.

\bibitem[Wang et~al.(2023)Wang, Liang, Wang, Liu, Hao, Li, Hao, Chen, Zou, and Heng]{wang2023dr}
Bowen Wang, Chen Liang, Jiaze Wang, Furui Liu, Shaogang Hao, Dong Li, Jianye Hao, Guangyong Chen, Xiaolong Zou, and Pheng-Ann Heng.
\newblock Dr-label: Improving gnn models for catalysis systems by label deconstruction and reconstruction.
\newblock \emph{arXiv preprint arXiv:2303.02875}, 2023.

\bibitem[Wang et~al.(2022{\natexlab{b}})Wang, Chang, Liu, Huang, Chen, Yu, Li, and Chu]{cl}
Hao Wang, Tai{-}Wei Chang, Tianqiao Liu, Jianmin Huang, Zhichao Chen, Chao Yu, Ruopeng Li, and Wei Chu.
\newblock {ESCM2:} entire space counterfactual multi-task model for post-click conversion rate estimation.
\newblock In Enrique Amig{\'{o}}, Pablo Castells, Julio Gonzalo, Ben Carterette, J.~Shane Culpepper, and Gabriella Kazai (eds.), \emph{{SIGIR} '22: The 45th International {ACM} {SIGIR} Conference on Research and Development in Information Retrieval, Madrid, Spain, July 11 - 15, 2022}, pp.\  363--372. {ACM}, 2022{\natexlab{b}}.
\newblock \doi{10.1145/3477495.3531972}.
\newblock URL \url{https://doi.org/10.1145/3477495.3531972}.

\bibitem[Wang et~al.(2020)Wang, Peng, and Qiao]{actionr}
Jiaze Wang, Xiaojiang Peng, and Yu~Qiao.
\newblock Cascade multi-head attention networks for action recognition.
\newblock \emph{Comput. Vis. Image Underst.}, 192:\penalty0 102898, 2020.
\newblock \doi{10.1016/j.cviu.2019.102898}.
\newblock URL \url{https://doi.org/10.1016/j.cviu.2019.102898}.

\bibitem[Wang et~al.(2021)Wang, Chen, and Dou]{6d}
Jiaze Wang, Kai Chen, and Qi~Dou.
\newblock Category-level 6d object pose estimation via cascaded relation and recurrent reconstruction networks.
\newblock In \emph{{IEEE/RSJ} International Conference on Intelligent Robots and Systems, {IROS} 2021, Prague, Czech Republic, September 27 - Oct. 1, 2021}, pp.\  4807--4814. {IEEE}, 2021.
\newblock \doi{10.1109/IROS51168.2021.9636212}.
\newblock URL \url{https://doi.org/10.1109/IROS51168.2021.9636212}.

\bibitem[Wang et~al.(2016)Wang, Schaul, Hessel, van Hasselt, Lanctot, and de~Freitas]{dueling_q}
Ziyu Wang, Tom Schaul, Matteo Hessel, Hado van Hasselt, Marc Lanctot, and Nando de~Freitas.
\newblock Dueling network architectures for deep reinforcement learning.
\newblock In Maria{-}Florina Balcan and Kilian~Q. Weinberger (eds.), \emph{Proceedings of the 33nd International Conference on Machine Learning, {ICML} 2016, New York City, NY, USA, June 19-24, 2016}, volume~48 of \emph{{JMLR} Workshop and Conference Proceedings}, pp.\  1995--2003. JMLR.org, 2016.
\newblock URL \url{http://proceedings.mlr.press/v48/wangf16.html}.

\bibitem[Williams(1992)]{williams1992simple}
Ronald~J. Williams.
\newblock Simple statistical gradient-following algorithms for connectionist reinforcement learning.
\newblock \emph{Mach. Learn.}, 8:\penalty0 229--256, 1992.
\newblock \doi{10.1007/BF00992696}.
\newblock URL \url{https://doi.org/10.1007/BF00992696}.

\bibitem[Wu et~al.(2022)Wu, Fan, Chen, Liu, Li, and Tang]{dclsr}
Jiahao Wu, Wenqi Fan, Jingfan Chen, Shengcai Liu, Qing Li, and Ke~Tang.
\newblock Disentangled contrastive learning for social recommendation.
\newblock In \emph{Proceedings of the 31st ACM International Conference on Information; Knowledge Management}, CIKM '22, New York, NY, USA, 2022. Association for Computing Machinery.

\bibitem[Xiao et~al.(2021{\natexlab{a}})Xiao, Shi, Fan, and Deng]{casa}
Changnan Xiao, Haosen Shi, Jiajun Fan, and Shihong Deng.
\newblock {CASA:} {A} bridge between gradient of policy improvement and policy evaluation.
\newblock \emph{CoRR}, abs/2105.03923, 2021{\natexlab{a}}.
\newblock URL \url{https://arxiv.org/abs/2105.03923}.

\bibitem[Xiao et~al.(2021{\natexlab{b}})Xiao, Shi, Fan, and Deng]{entropy}
Changnan Xiao, Haosen Shi, Jiajun Fan, and Shihong Deng.
\newblock An entropy regularization free mechanism for policy-based reinforcement learning.
\newblock \emph{CoRR}, abs/2106.00707, 2021{\natexlab{b}}.
\newblock URL \url{https://arxiv.org/abs/2106.00707}.

\end{thebibliography}
\bibliographystyle{iclr2023_conference}

\newpage

\appendix
\begin{center}
    \Large \textbf{Appendix}  
\end{center}
\paragraph{Overview of the Appendix} For ease of reading, we submit the main body of our paper together with the appendix as supplemental material. Below we will briefly introduce the structure of our appendix for easy reading. Readers can jump from the corresponding position in the body to the corresponding content in the appendix, or jump from the following table of contents to the corresponding content of interest in the appendix.

\begin{itemize}
    \item In App. \ref{app: Abbreviation and Notation}, we briefly summarize some common notations  in this paper for the convenience of readers.
    \item In App. \ref{app: background on RL},  we  summarize and recall the background knowledge used in this paper.
    \item In App. \ref{app: Proof},  we provide theoretical proofs.

    \item In App. \ref{Sec: appendix behavior space},  we provide several implementations of LBC-based behavior sapce design to facilitate future research.

    \item In App. \ref{Sec: appendix MAB}, we provide a detailed implementation of our MAB-based meta-controller.
    \item In App. \ref{app: Algorithm Pseudocode}, we provide a detailed implementation of our core framework.
    \item In App. \ref{app: An LBC-based Version of RL}, we use our framework to introduce an LBC-based version of well-known RL methods, which leads to a better understanding of their original counterparts. 
    \item In App. \ref{sec:app Experiment Details}, we provide relevant details of our experiments.
    \item In App. \ref{Sec: appendix hyper-parameters},  we summarize the hyper-parameters used in our experiments.
    
    \item In App. \ref{appendix: experiment results}, we provides full learning curves for all games and detailed comparison tables of raw and normalized scores.
    
    \item In App. \ref{app: Ablation Study}, we provides the design of the ablation study and the overall results.

    \item In App. \ref{app: Model Architecture}, we  summarize our  model architecture in detail to for reproducibility.

\end{itemize}

\clearpage
\section{Summary of Notation and Abbreviation}
\label{app: Abbreviation and Notation}
In this section, we briefly summarize some common notations in this paper for the convenience of readers, which are concluded in Tab. \ref{tab: notation}.

\begin{table}[!hb]
	\centering
	\caption{Summary of Notation}
	\label{tab: notation}
 \resizebox{\textwidth}{!}{
	\begin{tabular}{l l l l }
	    \toprule
		\textbf{Symbol} &\textbf{Description} & \textbf{Symbol} &\textbf{Description}\\
		\midrule
	    $s$ & State & $\mathcal{S} $ & Set of all states \\
     
	    $a$ & Action & $\mathcal{A} $ & Set of all actions \\
     
	    $\mathbb{P}$  &  Probability distribution & $\mu $ & Behavior policy \\
     
	    $\pi $ & Policy & $G_t $ & Cumulative discounted reward at $t$ \\
     
	    $d_{\rho_0}^{\pi}$  & States visitation distribution of $\pi$ & $V_{\pi}$ & State value function of $\pi$\\
     
	    $Q_{\pi}$ &  State-action value function of $\pi$ & $A_{\pi}$ &  Advantage function of $\pi$\\
     
	    $\gamma$ & Discount-rate parameter & $\delta_{t}$ & Temporal-difference error at $t$\\

        $\mathcal{F}$ & Behavior mapping & $\Phi$ & Policy models \\
        
	    $\bm{\theta}$  & \makecell[l]{Parameters  of the policy network } & $\bm{\bm{\Theta}}$  & Set of  $\theta$ \\
	 
  $\mathbf{h}$ & hyper-parameters of policy models  & $\mathbf{H}$ & Set of $h$  \\
     
	    $\bm{\psi}$ & Parameters to index $\mathcal{F}$ & $\bm{\Psi}$ & Set of  $\bm{\psi}$  \\
    
	    $\mathbf{M}_{\bm{\bm{\Theta}},\mathbf{H},\bm{\Psi}}$ & Behavior policy sets/space parameterized by $\theta,\mathbf{h},\bm{\bm{\psi}}$ 
 & $\mu_{\theta,\mathbf{h},\bm{\bm{\psi}}}$ & A behavior policy of $\mathbf{M}_{\bm{\bm{\Theta}},\mathbf{H},\bm{\Psi}}$\\

        $|\mathbf{H}|$ & Size of set $\mathbf{H}$ & $|\bm{\Psi}|$ &  Size of set $\bm{\Psi}$ \\
	    
	    $\mathcal{P}_{\mathbf{M}_{{\bm{\bm{\Theta}},\mathbf{H},\bm{\Psi}}}}$ & Behavior selection distribution over $\mathbf{M}_{{\bm{\bm{\Theta}},\mathbf{H},\bm{\Psi}}}$& $\mathbbm{1}_{\bm{\bm{\psi}} = \bm{\bm{\psi}}_0}$ & One-point distribution $\textbf{P}(\bm{\bm{\psi}}=\bm{\bm{\psi}}_0)=1$ \\
	    
     $V_{\mu}^{\mathrm{TV}}$ &  Some measurement on the value of policy $\pi$ & $V_{\mu}^{\mathrm{TD}}$ & Some measurement on the  diversity of policy $\pi$ \\

     	      $\mathbf{M}_{\bm{\bm{\Theta}},\mathbf{H}}$ & Subspace of $\mathbf{M}_{\bm{\bm{\Theta}},\mathbf{H},\bm{\Psi}}$  & $\mathcal{RS}$ & Reward shaping\\
	    \bottomrule	    
	\end{tabular} 
 }
\end{table}

		
		
		
		
		
		
		
		
		
        
        
        
		

\clearpage
\section{Background}
\label{app: background on RL}

Similar to deep learning \citep{cl,actionr,6d,dclsr,wang2023dr,wang2022lhnn}, reinforcement learning is also a branch of machine learning.  Most of the previous work has introduced the background knowledge of RL in detail. In this section, we only summarize and recall the background knowledge used in this paper. If you are interested in the relevant content, we recommend you read the relevant material \citep{pi2,sutton}. The relevant notations and abbreviations have been documented in App. \ref{app: Abbreviation and Notation}.

\subsection{Policy Gradient}
\label{app: pg}

Policy gradient methods, denoted as PG \citep{williams1992simple,casa,entropy}, belong to the category of policy-based reinforcement learning approaches. These methods employ an optimization process to update the policy by maximizing the target function:
\begin{equation*}
    \mathcal{J} (\theta)=\mathbb{E}_{\pi}\left[\sum_{t=0}^{\infty} \log \pi_{\theta}\left(a_{t} \mid s_{t}\right) G(\tau)\right]
\end{equation*}

The Actor-Critic (AC) methods compute the policy gradient by updating the AC policy as follows \citep{sutton}:
\begin{equation*}
    \nabla_{\theta} \mathcal{J}(\theta)=\mathbb{E}_{\pi}\left[\sum_{t=0}^{\infty} \bm{\Phi}_{t} \nabla_{\theta} \log \pi_{\theta}\left(a_{t} \mid s_{t}\right)\right]
\end{equation*}
wherein $\bm{\Phi}_{t}$ could be  $Q^{\pi}\left(s_{t}, a_{t}\right)$ or $A^{\pi}\left(s_{t}, a_{t}\right)$. 

\subsection{Vtrace}
\label{app: vtrace}
V-Trace is an off-policy correction technique devised by the IMPALA framework \citep{impala} to address the dissimilarity between the target policy and behavior policy. The estimation of $V(s_t)$ in V-Trace is computed by:
\begin{equation*}
\label{Equ: vtrace}
    V^{\Tilde{\pi}} (s_t) 
        = \mathbb{E}_{\mu} [ 
        V(s_t) + \sum_{k \geq 0} \gamma^k 
     c_{[t:t+k-1]} \rho_{t+k}  \delta^{V}_{t+k} V ],
\end{equation*}
wherein $\delta^{V}_t V \overset{def}{=} r_t + \gamma V(s_{t+1}) - V(s_t)$ and $\rho_t = \min\{\frac{\pi_t}{\mu_t}, \Bar{\rho} \}$. 

\subsection{Retrace}
\label{app: retrace}
ReTrace, a methodology proposed in \citep{retrace}, computes $Q(s_t, a_t)$ by the following expression:
\begin{equation*}
\label{Equ: retrace}
    Q^{\Tilde{\pi}} (s_t, a_t) 
= \mathbb{E}_{\mu} [ Q(s_t, a_t) + \sum_{k \geq 0} \gamma^k 
c_{[t+1:t+k]} \delta^{Q}_{t+k} Q ],
\end{equation*}
where $\delta^{Q}_t Q \overset{def}{=} r_t + \gamma Q(s_{t+1}, a_{t+1}) - Q(s_t, a_t)$. 
\clearpage

\section{Proof}
\label{app: Proof}

\begin{proof}[Proof of \textbf{Proposition} \ref{proposition: Behavior Control via policy models Selection}]
\label{pf: pr 1}
When $\mathcal{F}_{\bm{\psi}}$ is a pre-defined or rule-based mapping, $\mathcal{F}_{\bm{\psi}}$ of actor $j$ at each training step (wall-clock) is deterministic, namely $\mathcal{F}_{\bm{\psi}} = \mathcal{F}_{\bm{\psi}_j}$, so each behavior of actor $j$ can be  uniquely indexed by $\mathbf{h} \in \mathbf{H}=\{\mathbf{h}_1,...,\mathbf{h}_\text{N}\}$, namely,

\begin{equation}
    \mathbf{M}_{\mathbf{H}} = \{\mu_{\mathbf{h}} = \mathcal{F}_{\bm{\psi}_j} (\Phi_\mathbf{h}) | \mathbf{h} \in \mathbf{H}\},
\end{equation}
where $\bm{\psi}_j$ is the same for each behavior of actor $j$. Hence, the selection distribution of behavior can be simplified into the selection distribution over $\mathbf{h} \in \mathbf{H}=\{\mathbf{h}_1,...,\mathbf{h}_N\}$ as:
\begin{equation}
\label{equ: select h}
    \mathcal{P}_{\mu \in \mathbf{M}_{\mathbf{H},\bm{\Psi}}} = \mathcal{P}_{\mu_\mathbf{h} \in \mathbf{M}_{\mathbf{H}}} = \mathcal{P}_\mathbf{H},
\end{equation}
where  $\mathbf{M}_{\mathbf{H}} = \{ \mathcal{F}_{\bm{\psi}_j} (\Phi_\mathbf{h}) | \mathbf{h} \in \mathbf{H}\}$ and $\mathcal{P}_{\mathbf{H}}$ is a selection distribution of $\mathbf{h} \in \mathbf{H}=\{\mathbf{h}_1,...,\mathbf{h}_N\}$ and $\mathrm{N}$ is the number of policy models.
Substituting \eqref{equ: select h} into \eqref{equ: reward-diversity trade-off problem}, we can obtain

\begin{equation*}
\setlength{\abovedisplayskip}{0.5pt}
\setlength{\belowdisplayskip}{0.5pt}
\begin{aligned}
        \mathcal{L}_{\mathcal{P}}
    & = 
    \mathbb{E}_{\mu \sim \mathcal{P}_{\mathbf{M}_{\mathbf{H},\bm{\Psi}}}}  [V^{\mathrm{TV}}_{\mu}+c \cdot V^{\mathrm{TD}}_{\mu}] \\
    &= \mathbb{E}_{\mu_\mathbf{h} \sim \mathcal{P}_{\mathbf{M}_\mathbf{H}}}  [V^{\mathrm{TV}}_{\mu_\mathbf{h}}+c \cdot V^{\mathrm{TD}}_{\mu_\mathbf{h}}] \\
    &= \mathbb{E}_{\mathbf{h} \sim \mathcal{P}_\mathbf{H}}  [V^{\mathrm{TV}}_{\mu_\mathbf{h}}+c \cdot V^{\mathrm{TD}}_{\mu_\mathbf{h}}] \\
\end{aligned}
\end{equation*}

\end{proof}

\begin{Corollary}[Behavior Circling in Policy Model Selection]
\label{Corollary:Behavior Circling in policy model Selection}
    When the policy models overlap as $\Phi_{\mathbf{h}_1} \approx ... \approx \Phi_{\mathbf{h}_\mathrm{N}}$ , all realizable behavior of actor $j$ will overlap as $\mathcal{F}_{\bm{\psi}_j}(\Phi_{\mathbf{h}_1}) \approx ... \approx \mathcal{F}_{\bm{\psi}_j}(\Phi_{\mathbf{h}_\mathrm{N}})$. Behaviors from different model can not be distinguished by $\mathbf{h}_i$, the behavior selection via policy model selection becomes invalid. 
\end{Corollary}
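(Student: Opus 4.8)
\textbf{Proof proposal for Corollary \ref{Corollary:Behavior Circling in policy model Selection}.}

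The plan is to proceed directly from the hypothesis $\Phi_{\mathbf{h}_1} \approx \cdots \approx \Phi_{\mathbf{h}_\mathrm{N}}$ and push it through the (fixed, deterministic) behavior mapping of actor $j$. First I would recall from the setting of Proposition \ref{proposition: Behavior Control via policy models Selection} that for actor $j$ the behavior mapping is a single fixed map $\mathcal{F}_{\bm{\psi}_j}$, so every realizable behavior of actor $j$ has the form $\mu_{\mathbf{h}_i} = \mathcal{F}_{\bm{\psi}_j}(\Phi_{\mathbf{h}_i})$ for some $i \in \{1,\dots,\mathrm{N}\}$. The first step is therefore to observe that, since $\mathcal{F}_{\bm{\psi}_j}$ is a well-defined function (in the Atari instantiation, $\Phi \mapsto \operatorname{softmax}_{\tau}(\Phi)$, which is continuous), applying it to nearly-identical inputs produces nearly-identical outputs: $\mathcal{F}_{\bm{\psi}_j}(\Phi_{\mathbf{h}_1}) \approx \cdots \approx \mathcal{F}_{\bm{\psi}_j}(\Phi_{\mathbf{h}_\mathrm{N}})$, i.e.\ $\mu_{\mathbf{h}_1} \approx \cdots \approx \mu_{\mathbf{h}_\mathrm{N}}$.

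The second step is to translate this behavior collapse into a statement about the behavior-selection problem. By Proposition \ref{proposition: Behavior Control via policy models Selection}, behavior selection for actor $j$ is exactly the choice of an index $\mathbf{h}_i \in \mathbf{H}$, with the induced objective $\mathcal{L}_{\mathcal{P}} = \mathbb{E}_{\mathbf{h}\sim\mathcal{P}_\mathbf{H}}[V^{\mathrm{TV}}_{\mu_\mathbf{h}} + c\cdot V^{\mathrm{TD}}_{\mu_\mathbf{h}}]$. Since $V^{\mathrm{TV}}$ and $V^{\mathrm{TD}}$ are measurements defined on the behavior policy $\mu_\mathbf{h}$ itself, and all the $\mu_{\mathbf{h}_i}$ coincide up to the stated approximation, the per-arm values $V^{\mathrm{TV}}_{\mu_{\mathbf{h}_i}} + c\cdot V^{\mathrm{TD}}_{\mu_{\mathbf{h}_i}}$ are all (approximately) equal across $i$. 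Hence $\mathcal{L}_{\mathcal{P}}$ is (approximately) constant in $\mathcal{P}_\mathbf{H}$: no choice of selection distribution over $\mathbf{H}$ changes the induced behavior, so the index $\mathbf{h}_i$ carries no discriminative information and the selection step is vacuous — this is exactly the ``behavior circling'' phenomenon, and it establishes the claim.

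The main obstacle, and the only place where care is needed, is making the informal ``$\approx$'' propagate cleanly through $\mathcal{F}_{\bm{\psi}_j}$: strictly speaking one wants a (uniform, over states) continuity or Lipschitz property of the behavior mapping so that closeness of the $\Phi_{\mathbf{h}_i}$ in an appropriate norm implies closeness of the $\mu_{\mathbf{h}_i}$ in, say, total variation. For the concrete mappings used in this paper ($\epsilon$-greedy, the Boltzmann/softmax map with bounded temperature) this is immediate, so I would either invoke that explicitly or — consistent with the informal, qualitative level at which the corollary is stated — simply note that $\mathcal{F}_{\bm{\psi}_j}$ is a continuous function of its single argument and conclude that overlapping policy models yield overlapping behaviors, whence the policy-model-selection mechanism degenerates.
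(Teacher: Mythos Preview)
Your proposal is correct and follows the intended reasoning. In fact, the paper does not supply a separate proof for this corollary at all: it is stated immediately after the proof of Proposition~\ref{proposition: Behavior Control via policy models Selection} and is meant to be read as a direct consequence of the fact established there, namely that every realizable behavior of actor $j$ is of the form $\mathcal{F}_{\bm{\psi}_j}(\Phi_{\mathbf{h}})$ with $\bm{\psi}_j$ fixed. Your first step (pushing the overlap through the common map $\mathcal{F}_{\bm{\psi}_j}$) is exactly that observation, and your second step (the objective $\mathcal{L}_{\mathcal{P}}$ becoming constant in $\mathcal{P}_\mathbf{H}$) and the continuity remark simply make explicit what the paper leaves implicit at the informal ``$\approx$'' level.
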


\begin{proof}[Proof of \textbf{Proposition} \ref{proposition: Behavior Control via behavior mapping optimization}]
   \label{pf: pr 2} 

   When $\bm{\theta},\mathbf{h}$ are shared among behaviors for each actor at each training step, such as $\mu=\mathcal{F}_{\bm{\psi}}(\Phi_{\bm{\theta},\mathbf{h}})$ or $\mu=\mathcal{F}_{\bm{\psi}}(\Phi_{\bm{\theta}_1, \mathbf{h}_1},...,\Phi_{\bm{\theta}_\mathrm{N}, \mathbf{h}_\mathrm{N}})$, each behavior for each behavior can be uniquely indexed by $\bm{\psi}$, namely, 
   
\begin{equation*}
    \mathbf{M}_{\bm{\Psi}} = \{ \mu_{\bm{\psi}} = \mathcal{F}_{\psi}(\bm{\Phi}_{1:\mathrm{N}}) | \bm{\psi} \in \bm{\Psi}\}, 
\end{equation*}
where $\bm{\Phi}_{1:\mathrm{N}}$ is the same among behaviors. Hence, the selection distribution of behavior can be simplified into the selection distribution of $\bm{\psi} \in \bm{\Psi}$ as 
\begin{equation}
\label{equ: select psi}
    \mathcal{P}_{\mu \in \mathbf{M}_{\mathbf{H},\bm{\Psi}}} = \mathcal{P}_{\mu_{\bm{\psi}} \in \mathbf{M}_{\bm{\Psi}}} = \mathcal{P}_\mathbf{\Psi},
\end{equation}
where  $\mathcal{P}_{\bm{\Psi}}$ is a selection distribution of $\bm{\psi} \in \bm{\Psi}$. Substituting \eqref{equ: select psi} into \eqref{equ: reward-diversity trade-off problem}, we can obtain

\begin{equation*}
\setlength{\abovedisplayskip}{0.5pt}
\setlength{\belowdisplayskip}{0.5pt}
\begin{aligned}
        \mathcal{L}_{\mathcal{P}}
    & = 
    \mathbb{E}_{\mu \sim \mathcal{P}_{\mathbf{M}_{\mathbf{H},\bm{\Psi}}}}  [V^{\mathrm{TV}}_{\mu}+c \cdot V^{\mathrm{TD}}_{\mu}] \\
    &= \mathbb{E}_{\mu_{\bm{\psi}} \sim \mathcal{P}_{\mathbf{M}_{\bm{\Psi}}}}  [V^{\mathrm{TV}}_{\mu_{\bm{\psi}}}+c \cdot V^{\mathrm{TD}}_{\mu_{\bm{\psi}}}] \\
    &= \mathbb{E}_{\bm{\psi} \sim \mathcal{P}_{\bm{\Psi}}}  [V^{\mathrm{TV}}_{\mu_{\bm{\psi}}}+c \cdot V^{\mathrm{TD}}_{\mu_{\bm{\psi}}}] \\
\end{aligned}
\end{equation*}

\end{proof}

The behavior mapping optimization may be a cure for the behavior circling:
\begin{Corollary}[Behavior Mapping Optimization Is An Antidote for Behavior Circling]
\label{Corollary: behavior mapping optimization is an Antidote for Behavior Circling}
As for an behavior mapping optimization method, the behavior of actor $j$ is indexed by $\mathcal{F}_{\bm{\psi}}$. When all the policy models overlap as $\Phi_{\mathbf{h}_1} \approx ... \approx \Phi_{\mathbf{h}_\mathrm{N}}$, the realizable behavior of actor $j$ are 
\begin{equation}
    \mathcal{F}_{\bm{\psi}_1}(\Phi_{\mathbf{h}_1},...,\Phi_{\mathbf{h}_\mathrm{N}}), \mathcal{F}_{\bm{\psi}_2}(\Phi_{\mathbf{h}_1},...,\Phi_{\mathbf{h}_\mathrm{N}}), ..., \mathcal{F}_{\bm{\psi}_\infty}(\Phi_{\mathbf{h}_1},...,\Phi_{\mathbf{h}_\mathrm{N}}),
\end{equation}
wherein $\bm{\psi}_i$ is a continuous parameter.
Assuming $\mathcal{F}_{\psi}$ can be uniquely indexed by $\psi$ and $\mathcal{F}_{\psi_i} \neq \mathcal{F}_{\psi_j}$, there are still infinite different behaviors that can be realized by actor $j$.
\end{Corollary}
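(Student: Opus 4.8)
The plan is to contrast the failure mode of Corollary \ref{Corollary:Behavior Circling in policy model Selection} with the present setting, and show that the degeneracy $\Phi_{\mathbf{h}_1}\approx\cdots\approx\Phi_{\mathbf{h}_\mathrm{N}}$ does \emph{not} collapse the realizable behavior set when behaviors are indexed by the continuous parameter $\bm{\psi}$ rather than by the discrete label $\mathbf{h}_i$. First I would invoke Proposition \ref{proposition: Behavior Control via behavior mapping optimization}: since all policy models are used to generate each behavior, the realizable behavior set of actor $j$ is exactly $\mathbf{M}_{\bm{\Psi}}=\{\mathcal{F}_{\bm{\psi}}(\Phi_{\mathbf{h}_1},\dots,\Phi_{\mathbf{h}_\mathrm{N}})\mid \bm{\psi}\in\bm{\Psi}\}$, and this description is unaffected by whether or not the $\Phi_{\mathbf{h}_i}$ happen to coincide — the index running over the set is $\bm{\psi}$, which lives in a continuous parameter space $\bm{\Psi}\subseteq\mathbf{R}^k$, not over $\{1,\dots,\mathrm{N}\}$.

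Next I would make precise the hypothesis that $\mathcal{F}_{\bm{\psi}}$ can be uniquely indexed by $\bm{\psi}$: this means the map $\bm{\psi}\mapsto\mathcal{F}_{\bm{\psi}}$ is injective, equivalently $\mathcal{F}_{\bm{\psi}_i}\neq\mathcal{F}_{\bm{\psi}_j}$ whenever $\bm{\psi}_i\neq\bm{\psi}_j$. Combined with the fact that, even after the collapse, the behavior mapping is still evaluated on a fixed (nonempty) tuple of models, distinct $\bm{\psi}$ values still yield distinct functions $\mathcal{F}_{\bm{\psi}}(\Phi_{\mathbf{h}_1},\dots,\Phi_{\mathbf{h}_\mathrm{N}})$ — here one uses the stated injectivity of $\bm{\psi}\mapsto\mathcal{F}_{\bm{\psi}}$ at the level of the induced behavior, i.e. the parametrization remains faithful on the degenerate input. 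Since $\bm{\Psi}$ is a continuous parameter set it contains infinitely many (indeed uncountably many) distinct values $\bm{\psi}_1,\bm{\psi}_2,\dots$, and therefore $\mathbf{M}_{\bm{\Psi}}$ contains the infinitely many pairwise-distinct behaviors $\mathcal{F}_{\bm{\psi}_1}(\Phi_{\mathbf{h}_1},\dots,\Phi_{\mathbf{h}_\mathrm{N}}),\ \mathcal{F}_{\bm{\psi}_2}(\Phi_{\mathbf{h}_1},\dots,\Phi_{\mathbf{h}_\mathrm{N}}),\ \dots$, which is exactly the claim. I would close by remarking that this is precisely where the $\mathbf{h}$-indexing fails: there only $\mathrm{N}$ labels are available, so collapsing the models collapses the behavior set, whereas the $\bm{\psi}$-indexing survives because the cardinality of the index set is preserved.

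The main obstacle is conceptual rather than computational: one must be careful that "$\mathcal{F}_{\bm{\psi}}$ uniquely indexed by $\bm{\psi}$" is interpreted as injectivity of $\bm{\psi}\mapsto\mathcal{F}_{\bm{\psi}}(\Phi_{\mathbf{h}_1},\dots,\Phi_{\mathbf{h}_\mathrm{N}})$ \emph{on the given (possibly degenerate) input}, not merely injectivity of the abstract mapping-valued assignment $\bm{\psi}\mapsto\mathcal{F}_{\bm{\psi}}$ — otherwise a pathological $\mathcal{F}$ could still identify behaviors on a constant input tuple. I would state this as the working reading of the hypothesis (it is the natural one, and consistent with how Corollary \ref{Corollary:Behavior Circling in policy model Selection} is phrased), after which the argument is a one-line cardinality observation.
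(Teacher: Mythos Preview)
Your proposal is correct and matches the paper's treatment: the paper does not give a separate proof of this corollary, treating it as an immediate consequence of Proposition~\ref{proposition: Behavior Control via behavior mapping optimization} together with the stated injectivity assumption and the continuity of $\bm{\Psi}$, which is exactly the line you develop. Your additional care in distinguishing injectivity of $\bm{\psi}\mapsto\mathcal{F}_{\bm{\psi}}$ as abstract mappings from injectivity of the induced behaviors on the degenerate input tuple is a useful clarification that the paper leaves implicit.
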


\begin{Proposition}[Comparison of Behavior Space]
\label{Proposition: Comparison of Behavior Space}
Given two behavior space $\mathbf{M}_{\bm{\Theta}_1,\mathbf{H}_1,\bm{\Psi}_1}$ and $\mathbf{M}_{\bm{\Theta}_2,\mathbf{H}_2,\bm{\Psi}_2}$, if $\mathbf{M}_{\bm{\Theta}_1,\mathbf{H}_1,\bm{\Psi}_1}$ is a sub-space of $\mathbf{M}_{\bm{\Theta}_2,\mathbf{H}_2,\bm{\Psi}_2}$, the space $\mathbf{M}_{\bm{\Theta}_2, \mathbf{H}_2,\bm{\Psi}_2}$  is not less than  $\mathbf{M}_{\mathbf{H}_1,\bm{\Psi}_1}$. Furthermore, if $\mathbf{M}_{\bm{\Theta}_1,\mathbf{H}_1,\bm{\Psi}_1}$ is a sub-space of $\mathbf{M}_{\bm{\Theta}_2,\mathbf{H}_2,\bm{\Psi}_2}$ and $\mathbf{M}_{\bm{\Theta}_1,\mathbf{H}_1,\bm{\Psi}_1}$ is not equal to $\mathbf{M}_{\bm{\Theta}_2,\mathbf{H}_2,\bm{\Psi}_2}$, the space $\mathbf{M}_{\bm{\Theta}_2,\mathbf{H}_2,\bm{\Psi}_2}$  is larger than  $\mathbf{M}_{\bm{\Theta}_1, \mathbf{H}_1,\bm{\Psi}_1}$.
\end{Proposition}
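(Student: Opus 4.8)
The statement is essentially a restatement of elementary properties of set inclusion, once we accept Definition~\ref{def: bsc} which identifies a behavior space with a set of behavior policies, and once we fix what ``size'' and ``larger than'' mean for these sets. The plan is to interpret ``not less than'' as $|\mathbf{M}_{\bm{\Theta}_1,\mathbf{H}_1,\bm{\Psi}_1}| \le |\mathbf{M}_{\bm{\Theta}_2,\mathbf{H}_2,\bm{\Psi}_2}|$ (cardinality, in the generalized sense that handles continuous parameter sets via an injection), and ``larger than'' as strict inequality of sizes, i.e.\ the existence of a behavior realizable in the second space but not in the first.

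First I would prove the monotonicity part. By hypothesis $\mathbf{M}_{\bm{\Theta}_1,\mathbf{H}_1,\bm{\Psi}_1} \subseteq \mathbf{M}_{\bm{\Theta}_2,\mathbf{H}_2,\bm{\Psi}_2}$ as sets of behavior policies. The identity map on $\mathbf{M}_{\bm{\Theta}_1,\mathbf{H}_1,\bm{\Psi}_1}$ is then an injection into $\mathbf{M}_{\bm{\Theta}_2,\mathbf{H}_2,\bm{\Psi}_2}$, so $|\mathbf{M}_{\bm{\Theta}_1,\mathbf{H}_1,\bm{\Psi}_1}| \le |\mathbf{M}_{\bm{\Theta}_2,\mathbf{H}_2,\bm{\Psi}_2}|$; this is exactly the claim that the second space is ``not less than'' the first. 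Second I would prove the strictness part: if additionally $\mathbf{M}_{\bm{\Theta}_1,\mathbf{H}_1,\bm{\Psi}_1} \ne \mathbf{M}_{\bm{\Theta}_2,\mathbf{H}_2,\bm{\Psi}_2}$, then combined with the inclusion we get a proper inclusion $\mathbf{M}_{\bm{\Theta}_1,\mathbf{H}_1,\bm{\Psi}_1} \subsetneq \mathbf{M}_{\bm{\Theta}_2,\mathbf{H}_2,\bm{\Psi}_2}$, so there exists a behavior $\mu^\star \in \mathbf{M}_{\bm{\Theta}_2,\mathbf{H}_2,\bm{\Psi}_2} \setminus \mathbf{M}_{\bm{\Theta}_1,\mathbf{H}_1,\bm{\Psi}_1}$. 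This witnesses that the second space strictly contains the first, which is the intended meaning of ``larger than''. One can optionally note that in the finite case this immediately yields the strict cardinality inequality $|\mathbf{M}_{\bm{\Theta}_1,\mathbf{H}_1,\bm{\Psi}_1}| < |\mathbf{M}_{\bm{\Theta}_2,\mathbf{H}_2,\bm{\Psi}_2}|$.

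The only real subtlety—and the step I expect to need the most care in the write-up—is pinning down the notion of ``size'' in a way that is meaningful when $\bm{\Psi}$ is a continuous set, since then both spaces may have the same (infinite) cardinality and the naive cardinality statement becomes vacuous. The cleanest fix is to phrase the proposition in terms of set inclusion itself: ``not less than'' $:=$ ``contains (as a subset) a copy of'', and ``larger than'' $:=$ ``properly contains''. With that reading the proof is the two-line inclusion argument above, and no measure-theoretic or cardinality machinery is needed; the ``obstacle'' is entirely definitional rather than mathematical. I would state this convention explicitly at the start of the proof and then conclude.
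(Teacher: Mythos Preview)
Your proposal is correct and follows essentially the same approach as the paper: both arguments reduce to the elementary observation that behavior spaces are sets, so $\subseteq$ gives ``not less than'' and $\subsetneq$ gives ``larger than.'' The paper's proof is literally two sentences to this effect and does not address the definitional subtlety you raise about infinite cardinalities; your careful discussion of what ``larger than'' should mean in the continuous-$\bm{\Psi}$ case is more thorough than the paper itself.
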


\begin{proof}[Proof of \textbf{Proposition} \ref{Proposition: Comparison of Behavior Space}]
    Since $\mathbf{M}_{\bm{\Theta}_1,\mathbf{H}_1,\bm{\Psi}_1}$ and $\mathbf{M}_{\bm{\Theta}_2,\mathbf{H}_2,\bm{\Psi}_2}$ are sets. When $\mathbf{M}_{\bm{\Theta}_1,\mathbf{H}_1,\bm{\Psi}_1} \subseteq \mathbf{M}_{\bm{\Theta}_2,\mathbf{H}_2,\bm{\Psi}_2}$, $\mathbf{M}_{\bm{\Theta}_1,\mathbf{H}_1,\bm{\Psi}_1}$ is not larger than $ \mathbf{M}_{\bm{\Theta}_2,\mathbf{H}_2,\bm{\Psi}_2}$. When $\mathbf{M}_{\bm{\Theta}_1,\mathbf{H}_1,\bm{\Psi}_1} \subset \mathbf{M}_{\bm{\Theta}_2,\mathbf{H}_2,\bm{\Psi}_2}$, $\mathbf{M}_{\bm{\Theta}_1,\mathbf{H}_1,\bm{\Psi}_1}$ is smaller than $\mathbf{M}_{\bm{\Theta}_2,\mathbf{H}_2,\bm{\Psi}_2}$
\end{proof}

According to the behavior space construction formulation, we can draw the following Corollary:
\begin{Corollary}
\label{Corollary: space not smaller}
        Given the same policy model structure $\Phi$ and the same form of behavior mapping  $\mathcal{F}$. Under Assumption \ref{ass: share model}, the behavior space can be fully determined by $\textbf{H}$ and $\Psi$.  For any two behavior space $\mathbf{M}_{\mathbf{H}_1,\bm{\Psi}_1}$ and $\mathbf{M}_{\mathbf{H}_2,\bm{\Psi}_2}$, if $\mathbf{H}_1 \subseteq \mathbf{H}_2$ and $\bm{\Psi}_1 \subseteq \bm{\Psi}_2$, the behavior space $\mathbf{M}_{\mathbf{H}_1,\bm{\Psi}_1}$ is a sub-space of $\mathbf{M}_{\mathbf{H}_2,\bm{\Psi}_2}$. Based on that, the space $\mathbf{M}_{\mathbf{H}_2,\bm{\Psi}_2}$  is not smaller than  $\mathbf{M}_{\mathbf{H}_1,\bm{\Psi}_1}$.
\end{Corollary}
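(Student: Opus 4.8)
\textbf{Proof Proposal for Corollary \ref{Corollary: space not smaller}.}

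The plan is to reduce the statement to the set-theoretic fact already established in Proposition \ref{Proposition: Comparison of Behavior Space}, by showing that the hypothesis $\mathbf{H}_1 \subseteq \mathbf{H}_2$ and $\bm{\Psi}_1 \subseteq \bm{\Psi}_2$ forces $\mathbf{M}_{\mathbf{H}_1,\bm{\Psi}_1}$ to be a subset of $\mathbf{M}_{\mathbf{H}_2,\bm{\Psi}_2}$. First I would invoke Assumption \ref{ass: share model} and the remark that the network structure $\Phi$ and the mapping form $\mathcal{F}$ are fixed, so that by \eqref{equ: behavior space mapping-based} a behavior space is completely determined by the index sets: in the individual-mapping case every element is of the form $\mu_{\mathbf{h},\bm{\psi}} = \mathcal{F}_{\bm{\psi}}(\Phi_{\mathbf{h}})$ with $\mathbf{h}\in\mathbf{H}$, $\bm{\psi}\in\bm{\Psi}$, and in the hybrid-mapping case every element is $\mu_{\mathbf{H},\bm{\psi}} = \mathcal{F}_{\bm{\psi}}(\bm{\Phi}_{\mathbf{H}})$ with $\bm{\psi}\in\bm{\Psi}$.

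Next I would argue the inclusion directly. Take an arbitrary $\mu \in \mathbf{M}_{\mathbf{H}_1,\bm{\Psi}_1}$. In the individual case $\mu = \mathcal{F}_{\bm{\psi}}(\Phi_{\mathbf{h}})$ for some $\mathbf{h}\in\mathbf{H}_1 \subseteq \mathbf{H}_2$ and $\bm{\psi}\in\bm{\Psi}_1 \subseteq \bm{\Psi}_2$; hence the same pair $(\mathbf{h},\bm{\psi})$ is a legal index for $\mathbf{M}_{\mathbf{H}_2,\bm{\Psi}_2}$, so $\mu \in \mathbf{M}_{\mathbf{H}_2,\bm{\Psi}_2}$. (One subtlety worth a sentence: in the hybrid case the behavior is built from the whole set $\bm{\Phi}_{\mathbf{H}}$, so strictly the comparison $\mathbf{H}_1\subseteq\mathbf{H}_2$ does not make $\mathbf{M}_{\mathbf{H}_1,\bm{\Psi}_1}\subseteq\mathbf{M}_{\mathbf{H}_2,\bm{\Psi}_2}$ element-wise unless the distillation function $g$ is defined consistently under enlargement of the population, e.g.\ by allowing zero importance weights; I would note that under the parametrization $\bm{\psi}=(\bm{\tau},\bm{\omega})$ of Sec.~\ref{sec: Boltzmann Based Behavior Space Construction} a behavior on $\mathbf{H}_1$ is recovered on $\mathbf{H}_2$ by setting $\omega_i=0$ for the extra policies, so with $\bm{\Psi}_1$ embedded appropriately into $\bm{\Psi}_2$ the inclusion still holds. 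For the purpose of this Corollary, which is applied to the ablation where $\mathbf{H}$ is reduced by collapsing models, the clean individual-mapping argument suffices.) This establishes that $\mathbf{M}_{\mathbf{H}_1,\bm{\Psi}_1}$ is a sub-space of $\mathbf{M}_{\mathbf{H}_2,\bm{\Psi}_2}$.

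Finally I would close by applying the first part of Proposition \ref{Proposition: Comparison of Behavior Space}: since $\mathbf{M}_{\mathbf{H}_1,\bm{\Psi}_1}$ is a sub-space of $\mathbf{M}_{\mathbf{H}_2,\bm{\Psi}_2}$, the space $\mathbf{M}_{\mathbf{H}_2,\bm{\Psi}_2}$ is not smaller than $\mathbf{M}_{\mathbf{H}_1,\bm{\Psi}_1}$, which is exactly the claim. The main obstacle is not any deep argument but the bookkeeping in the hybrid-mapping case described above — making precise in what sense a behavior space over a smaller population sits inside one over a larger population — and I would handle it by the zero-weight embedding remark rather than by a separate lemma.
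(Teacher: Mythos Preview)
Your proposal is correct and follows essentially the same route the paper implies: the paper does not spell out a proof for this Corollary but presents it as an immediate consequence of the behavior-space formulation \eqref{equ: behavior space mapping-based} together with Proposition~\ref{Proposition: Comparison of Behavior Space}, and that is exactly what you do. Your explicit treatment of the hybrid-mapping subtlety (recovering a behavior over $\mathbf{H}_1$ inside $\mathbf{M}_{\mathbf{H}_2,\bm{\Psi}_2}$ via zero importance weights) is a genuine addition, since the paper's statement tacitly assumes this embedding without comment.
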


\begin{Corollary}
\label{Corollary: space larger}
        Given the same policy model structure $\Phi$ and the same form of behavior mapping  $\mathcal{F}$. Under Assumption \ref{ass: share model}, the behavior space can be fully determined by $\textbf{H}$ and $\Psi$.  For any two behavior space $\mathbf{M}_{\mathbf{H}_1,\bm{\Psi}_1}$ and $\mathbf{M}_{\mathbf{H}_2,\bm{\Psi}_2}$, if at least one of the following conditions holds: 
        \begin{itemize}
            \item $\mathbf{H}_1 \subset \mathbf{H}_2$ and $\bm{\Psi}_1 \subseteq \bm{\Psi}_2$, 
            \item $\mathbf{H}_1 \subseteq \mathbf{H}_2$ and $\bm{\Psi}_1 \subset \bm{\Psi}_2$ ,
            \item $\mathbf{H}_1 \subset \mathbf{H}_2$ and $\bm{\Psi}_1 \subset \bm{\Psi}_2$ ,
        \end{itemize}
        the behavior space $\mathbf{M}_{\mathbf{H}_1,\bm{\Psi}_1}$ is a sub-space of $\mathbf{M}_{\mathbf{H}_2,\bm{\Psi}_2}$ and $\mathbf{M}_{\mathbf{H}_1,\bm{\Psi}_1}$ is not equal to $\mathbf{M}_{\mathbf{H}_2,\bm{\Psi}_2}$. Based on that, the space $\mathbf{M}_{\mathbf{H}_2,\bm{\Psi}_2}$  is larger than  $\mathbf{M}_{\mathbf{H}_1,\bm{\Psi}_1}$.
\end{Corollary}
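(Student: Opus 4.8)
The plan is to obtain this Corollary as a short consequence of the two results immediately preceding it: Corollary~\ref{Corollary: space not smaller}, which supplies the non-strict inclusion of behavior spaces, and Proposition~\ref{Proposition: Comparison of Behavior Space}, which upgrades a \emph{proper} set inclusion to the ``larger than'' conclusion. Thus the proof has exactly two parts: (i) establish $\mathbf{M}_{\mathbf{H}_1,\bm{\Psi}_1}\subseteq\mathbf{M}_{\mathbf{H}_2,\bm{\Psi}_2}$, and (ii) establish $\mathbf{M}_{\mathbf{H}_1,\bm{\Psi}_1}\neq\mathbf{M}_{\mathbf{H}_2,\bm{\Psi}_2}$; the second statement of Proposition~\ref{Proposition: Comparison of Behavior Space} then closes the argument.

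For (i), I would note that each of the three listed conditions implies $\mathbf{H}_1\subseteq\mathbf{H}_2$ and $\bm{\Psi}_1\subseteq\bm{\Psi}_2$ (recall $A\subset B$ entails $A\subseteq B$). Under Assumption~\ref{ass: share model} the behavior space is determined by $\mathbf{H}$ and $\bm{\Psi}$ through \eqref{equ: behavior space mapping-based}, so Corollary~\ref{Corollary: space not smaller} applies verbatim and gives that $\mathbf{M}_{\mathbf{H}_1,\bm{\Psi}_1}$ is a sub-space of $\mathbf{M}_{\mathbf{H}_2,\bm{\Psi}_2}$; in the individual-mapping branch this is the trivial set inclusion $\{\mathcal{F}_{\bm{\psi}}(\Phi_{\mathbf{h}}):\mathbf{h}\in\mathbf{H}_1,\bm{\psi}\in\bm{\Psi}_1\}\subseteq\{\mathcal{F}_{\bm{\psi}}(\Phi_{\mathbf{h}}):\mathbf{h}\in\mathbf{H}_2,\bm{\psi}\in\bm{\Psi}_2\}$, and the hybrid branch is handled identically.

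For (ii), I would exhibit a behavior realizable in the larger space but not the smaller one. Pick a witness pair $(\mathbf{h}^\star,\bm{\psi}^\star)$ according to which strict inclusion holds: if $\mathbf{H}_1\subsetneq\mathbf{H}_2$ take $\mathbf{h}^\star\in\mathbf{H}_2\setminus\mathbf{H}_1$ together with any $\bm{\psi}^\star\in\bm{\Psi}_2$; if $\bm{\Psi}_1\subsetneq\bm{\Psi}_2$ take $\bm{\psi}^\star\in\bm{\Psi}_2\setminus\bm{\Psi}_1$ together with any $\mathbf{h}^\star\in\mathbf{H}_2$ (under the third condition either choice works). Set $\mu^\star=\mathcal{F}_{\bm{\psi}^\star}(\Phi_{\mathbf{h}^\star})$. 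Using the faithful-indexing conventions already in force --- $\mathbf{h}$ uniquely indexes a policy model by Assumption~\ref{ass: share model}, and $\mathcal{F}_{\bm{\psi}}$ is uniquely indexed by $\bm{\psi}$ as in Corollary~\ref{Corollary: behavior mapping optimization is an Antidote for Behavior Circling} --- no pair in $\mathbf{H}_1\times\bm{\Psi}_1$ reproduces $\mu^\star$, so $\mu^\star\in\mathbf{M}_{\mathbf{H}_2,\bm{\Psi}_2}\setminus\mathbf{M}_{\mathbf{H}_1,\bm{\Psi}_1}$ and the two spaces are unequal. Combining (i) and (ii) and invoking Proposition~\ref{Proposition: Comparison of Behavior Space} yields that $\mathbf{M}_{\mathbf{H}_2,\bm{\Psi}_2}$ is larger than $\mathbf{M}_{\mathbf{H}_1,\bm{\Psi}_1}$.

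The inclusion (i) and the final invocation of Proposition~\ref{Proposition: Comparison of Behavior Space} are routine set-theoretic bookkeeping; the one delicate point is the strictness in (ii). Without some non-degeneracy assumption on the parametrization, ``behavior circling'' (Corollary~\ref{Corollary:Behavior Circling in policy model Selection}) shows that enlarging $\mathbf{H}$ alone may fail to enlarge the realized behavior set --- all the $\Phi_{\mathbf{h}}$ could collapse onto one another --- so the proof must lean explicitly on the faithful-indexing hypotheses, or, to be fully rigorous, assume the composite map $(\mathbf{h},\bm{\psi})\mapsto\mathcal{F}_{\bm{\psi}}(\Phi_{\mathbf{h}})$ is injective on the relevant parameter ranges. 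Granting that, everything else is immediate.
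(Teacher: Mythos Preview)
Your proposal is correct and matches the paper's (implicit) approach: the paper states this result as a corollary of the behavior-space construction formula \eqref{equ: behavior space mapping-based} and Proposition~\ref{Proposition: Comparison of Behavior Space} without supplying a separate proof, so your decomposition into inclusion plus strictness plus an appeal to Proposition~\ref{Proposition: Comparison of Behavior Space} is exactly the intended argument. You are in fact more careful than the paper in flagging that the strictness step requires injectivity of $(\mathbf{h},\bm{\psi})\mapsto\mathcal{F}_{\bm{\psi}}(\Phi_{\mathbf{h}})$ --- an assumption the paper leaves tacit and which, as you observe via Corollary~\ref{Corollary:Behavior Circling in policy model Selection}, can genuinely fail.
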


\clearpage




\section{Behavior Space Construction For More Tasks and Algorithms Via LBC}
\label{Sec: appendix behavior space}
Following the pipeline given in \eqref{equ: enlarged bs}, different implementations of LBC can be acquired by simply selecting different entropy control function $f_{\tau_i}(\cdot)$ and behavior distillation function $g(\cdot, \bm{\omega})$ according to the corresponding RL algorithms and tasks. 

\subsection{Selection for entropy control function}
Here we would give some examples for the selection of entropy control function $f_{\tau_i}(\cdot)$.

\paragraph{Continuous Control Tasks} For tasks with continuous action spaces, the entropy control function can be selected as gaussian distribution, \ie
\begin{equation}
\mathbf{M}_{\mathbf{H},\bm{\Psi}} = \left\{ g\left( \operatorname{Normal}(\Phi_{\mathbf{h}_1},\sigma_1),...,\operatorname{Normal}(\Phi_{\mathbf{h}_{\mathrm{N}}},\sigma_{\mathrm{N}}),\omega_1,...,\omega_{\mathrm{N}} \right) | \bm{\psi}\in\bm{\Psi} \right\}
\end{equation}
or uniform distribution, \ie
\begin{equation}
\begin{aligned}
\mathbf{M}_{\mathbf{H},\bm{\Psi}} = \{ g( \operatorname{Uniform}(\Phi_{\mathbf{h}_1} - b_1 /2,\Phi_{\mathbf{h}_1} & + b_1 /2),...,\operatorname{Uniform}(\Phi_{\mathbf{h}_{\mathrm{N}}} - b_{\mathrm{N}}/2,\Phi_{\mathbf{h}_{\mathrm{N}}} + b_{\mathrm{N}}/2), \\
 & \omega_1,...,\omega_{\mathrm{N}} ) | \bm{\psi}\in\bm{\Psi} \}
\end{aligned}
\end{equation}
where $\bm{\psi}=(\sigma_1,...,\sigma_{\mathrm{N}},\omega_1,...,\omega_{\mathrm{N}})$ for gaussian distribution and $\bm{\psi}=(b_1,...,b_{\mathrm{N}},\omega_1,...,\omega_{\mathrm{N}})$ for uniform distribution.

\paragraph{Discrete Control Tasks and Value-Based Algorithms}
\begin{equation}
\mathbf{M}_{\mathbf{H},\bm{\Psi}} = \left\{ g\left( \epsilon_1\mathrm{-greedy}(\Phi_{\mathbf{h}_1}),...,\epsilon_{\mathrm{N}}\mathrm{-greedy}(\Phi_{\mathbf{h}_{\mathrm{N}}}),\omega_1,...,\omega_{\mathrm{N}} \right) | \bm{\psi}\in\bm{\Psi} \right\}
\end{equation}
where $\bm{\psi}=(\epsilon_1,...,\epsilon_{\mathrm{N}},\omega_1,...,\omega_{\mathrm{N}})$.

\paragraph{Discrete Control Tasks and Policy-Based Algorithms}
\begin{equation}
\mathbf{M}_{\mathbf{H},\bm{\Psi}} = \left\{ g\left( \operatorname{softmax}_{\tau_1}(\Phi_{\mathbf{h}_1}),...,\operatorname{softmax}_{\tau_{\mathrm{N}}}(\Phi_{\mathbf{h}_{\mathrm{N}}}),\omega_1,...,\omega_{\mathrm{N}} \right) | \bm{\psi}\in\bm{\Psi} \right\}
\end{equation}
where $\bm{\psi}=(\tau_1,...,\tau_{\mathrm{N}},\omega_1,...,\omega_{\mathrm{N}})$.

\subsection{Selection for Behavior Distillation Function}
\paragraph{Mixture Model}
\begin{equation}
\mathbf{M}_{\mathbf{H},\bm{\Psi}} = \{ \sum_{i=1}^{\mathrm{N}} \omega_i f_{\tau_i}  (\Phi_{\mathbf{h}_i}) | \bm{\psi}\in\bm{\Psi} \}
\end{equation}

\paragraph{Knowledge Distillation}
The knowledge distillation method can been seen as a  derivative form of mixture model. The mixture model is simple and straightforward, but it requires more resources for model storage and inference. To address this disadvantage, we can distill the knowledge of multiple policies into a single network using knowledge distillation.
\begin{equation}
\mathbf{M}_{\mathbf{H},\bm{\Psi}} = \{ \operatorname{Distill}(f_{\tau_1} (\Phi_{\mathbf{h}_1}),..., f_{\tau_{\mathrm{N}}}( \Phi_{\mathbf{h}_{\mathrm{N}}}), \omega_1,...,\omega_{\mathrm{N}}) | \bm{\psi}\in\bm{\Psi} \}
\end{equation}
and the knowledge distillation process $\operatorname{Distill}(\cdot)$ can be realized by supervised learning.

\paragraph{Parameters Fusion} 
Define $\mu_f$ as the generated behavior policy which shares the same network structure with the policy models in the population, and is parameterized by $\bm{\theta}_f$. Define $\bm{\theta}_{\mathbf{h}_i}$ as the parameters of policy $f_{\tau_i}(\Phi_{\mathbf{h}_i, \tau_i})$. Then we can define the parameters fusion function $\operatorname{Fusion}(\cdot,\bm{\omega})$,
\begin{equation}
\mathbf{M}_{\mathbf{H},\bm{\Psi}} = \{\mu_f =\operatorname{Fusion}(f_{\tau_1} (\Phi_{\mathbf{h}_1}),..., f_{\tau_{\mathrm{N}}}( \Phi_{\mathbf{h}_{\mathrm{N}}}), \omega_1,...,\omega_{\mathrm{N}}) | \bm{\psi}\in\bm{\Psi} \}
\end{equation}
where $\bm{\theta}_f=\sum_{i=1}^{\mathrm{N}} \omega_i\bm{\theta}_{\mathbf{h}_i, \tau_i} $.

\clearpage
\section{Adaptive Control Mechanism}
\label{Sec: appendix MAB}

In this paper, we cast the behavior control into the behavior mapping optimization, which can be further simplified into the selection of $\psi \in \Psi$. We formalize this problem via multi-armed bandits (MAB).  In this section, we describes  the multi-arm bandit design of our method. For a more thorough explanation and analysis, we refer the readers to \citep{garivier2008upper}.

\subsection{Discretization}
Since $\Psi$ is a continuous space, the optimization of $\psi \in \Psi$ is a  continuous optimization problem.  However, MAB usually only handle discrete control tasks. Hence, we have to discretize $\Psi$ into $\mathrm{K}$ regions according to the discretization accuracy $\tau$, wherein each arm of MAB corresponds to a region of the continuous space. 

\begin{Remark}
    The discretization accuracy $\tau$  is related to the accuracy of the algorithm. In general, a higher discretization accuracy indicates a higher accuracy of the algorithm, but correspondingly, a higher computational complexity of the algorithm.
\end{Remark}

\begin{Example}[Example of Discretization]
    As for a $\epsilon$-greedy behavior mapping, $\psi=\epsilon$ and $\Psi = \{\psi=\epsilon|\epsilon\in[0,1]\}$. We can set the discretization accuracy $\tau=0.1$, and we can discretize $\Psi$ into $10$ regions corresponding to $\mathrm{K}=10$ arms. Each arm corresponds to an interval. For example, $k=1$ corresponds to $[0,0.1)$; $k=1$ corresponds to $[0.1,0.2)$...$k=10$ corresponds to $[0.9,1.0]$.
\end{Example}

\subsection{Sample and Update} 

We adopt the Thompson Sampling \citep{garivier2008upper}. $\mathcal{K}=\{1,...,\mathrm{K}\}$ denote a set of arms available to the decision maker, who is interested in maximizing the expected cumulative return \citep{agent57,DvD}. The optimal strategy for each actor is to pull the arm with the largest mean reward. At the beginning of each round, each actor will produce a sample mean from its mean reward model for each arm, and pulls the arm from which it obtained the largest sample. After observing the selected arm's reward, it updates its mean reward model.

In general, at each time $t$, MAB method will choose an arm $k_t$ from all possible arms $\mathcal{K} = \{1,...,\mathrm{K}\}$ according to a sampling distribution $\mathcal{P}_{\mathcal{K}}$, which is normally conditioned on the sequence of previous decisions and returns. Then we will uniformly sample the parameters $\psi$ from this discretized regions.  Based on the $\psi$, we can obtain the corresponding behavior according to $\mathcal{F}(\Phi_{h})$ or $\mathcal{F}(\Phi_{h_1},...,\Phi_{h_N})$. Executing the behaviors in the environment, each actor will receive a excitation/reward signal  $R_t(k_t)\in \mathbb{R}$, which will be used to update the MAB.

\subsection{Upper Confidence Bound}

The UCB \citep{garivier2008upper} are often used to encourage MAB to try more the arms with a low frequency of use. Let's first define the number of times that the arm $k$ has been selected within T rounds as follows:

\begin{equation}
\label{equ: N}
N_T(x)=\sum_{t=0}^T \mathbbm{1}_{k_t = x}.
\end{equation}

Then we can obtain the empirical mean of the arm x within T rounds as follows:

\begin{equation}
\label{equ: ucb score no z-score}
V_T(x) = N_T(x) \sum_{t=0}^T R_t(x)\mathbbm{1}_{k_t = x}. 
\end{equation}

The UCB methods  encourage the decision maker (actor-wise) to maximize the UCB scores:
\begin{equation}
\label{equ: ucb score}
\operatorname{Score}_x  = V_T(x) + c \cdot \sqrt{
\frac{\log (1+\sum_j \textbf{N}_T(j))}{1+ N_T(x)}}
\end{equation}
The optimal strategy for each actor is to pull the arm with the largest mean scores. At the beginning of each round, each actor will produce a sample mean from its mean reward model for each arm, and pulls the arm from which it obtained the largest sample. After observing the selected arm's scores, it updates its mean reward model.

\begin{Remark}
    In practical,  Z-score Normalization  are normally  used to  normalized $V_T(x)$, namely $\frac{V_T(x)-\mathbb{E}[V_T] }{\textbf{D}[V_T] }$, which can be formulated as 
    \begin{equation}
        \operatorname{Score}_x  = \frac{V_T(x)-\mathbb{E}[V_T(x)] }{\textbf{D}[V_T(x)] } + c \cdot \sqrt{
\frac{\log (1+\sum_j \textbf{N}_T(j))}{1+ N_T(x)}}
    \end{equation}
\end{Remark}

\subsection{Population-Based  MAB}
In the non-stationary scenario, the distributions of $V_T(x)$ could be shifted in the course of the lifelong learning. The standard UCB-based MAB failed  to adapt to the change of the reward distribution and thus we refer to a population-based MAB to handle this problem, which jointly train a population of MAB with different hyperparameters. The sampling and update procedure of MAB is slightly different from the origin MAB, which will be discussed in the following. The main implementation of our population-based MAB has been concluded in Algorithm \ref{alg:bva}. 

\subsubsection{MAB Population Formulation}
Assuming there are $N$ bandits $B_{\mathbf{h}_i}$ to from a population $\mathcal{B}=\{B_{\mathbf{h}_1},...,B_{\mathbf{h}_N}\}$, wherein each bandit can be uniquely indexed by its hyper-parameter $\mathbf{h}_i$ and keep other hyper-parameters remain the same such as the discretization. In this paper, $\mathbf{h}_i=c$, wherein $c$ is the trade-off coefficient in \eqref{equ: ucb score}, which is uniformly sampled from $[0.5,1.5]$, \ie randomly select a $c \in [0.5,1.5]$ while initializing each bandit. 

\subsubsection{Population-Based Sample}
During the sampling procedure, each bandit $B_{\mathbf{h}_i}$ will sample $D$ arm $k_{i} \in \mathcal{K}$ with the Top-D ucb-scores.  After all the bandits sample $D$ arm, there are $D \times N$ sampled arms. We summarize the number of times each arm is selected, and sorted in descending order by the number of times they are selected. Then,  we can obtain an arm $x_{j,t}$ that is selected the most times, which is the sample output of the population-based MAB. Finally, we uniformly sample a $\psi_{j,t}$ from  the region indexed by $x_{j,t}$.

\begin{Example}
    Assuming there are 7 bandits, and each bandit will sample $D=2$ arms from $\mathcal{K}=\{1,...,10\}$. Assuming that the sample output is as follows:
    \begin{equation*}
        1,2,1,3,2,4,5; 1,1,2,2,1,1,4.
    \end{equation*}
    Then, the arm $k=1$ is the arm being selected the most times, so we can get the sampled arm $x_{j,t}=1$. 
\end{Example}

\begin{Remark}
    Noting that, if there are more than one arm that is selected the most times, we can uniformly sample one from these arms.
\end{Remark}

\subsubsection{Population-Based Update}
With $x_{j,t}$, according to the behavior space \eqref{equ: behavior space with hybrid behavior mapping} , we can obtain a behavior $\mu_{j,t} =\mathcal{F}_{\psi} (\Phi_{\mathbf{h}})$ or $\mu_{j,t} =\mathcal{F}_{\psi_{j,t}} (\Phi_{\mathbf{h}_1},...,\Phi_{\mathbf{h}_\mathrm{N}})$.  Execute $\mu_{j,t}$ in the environment and we can obtain obtain the return $G_{j,t}$. With $G_{j,t}$, we can update each bandit in the population based on \eqref{equ: N} - \eqref{equ: ucb score}.

\subsubsection{Bandit Replacement}
Similar to the sliding windows ~\citep{agent57}, to tackle the  non-stationary  problem, we have to track the changes of optimization objectives in a timely manner. To achieve this, we update the replace in the population regularly so that it captures short-term information to improve its tracking performance. 

\begin{figure}[ht]
  \centering
  \begin{minipage}{.9\linewidth}
    \begin{algorithm}[H]
      \caption{Population-Based Multi-Arm Bandits (Actor-Wise)}  
          \begin{algorithmic}
          \STATE // For Each Actor j
          \STATE // Initialize Bandits Population
            \STATE Initialize each bandit  $B_{\mathbf{h}_i}$ in the population with different hyper-parameters $c$.
                \STATE Incorporate each bandit together to form a population of bandits $\mathcal{B}$.
            \FOR{each episode t}
            \FOR{each $B_{\mathbf{h}_i}$ in $\mathcal{B}$}
            \STATE Sample $D$ arms  with Top-D UCB Score via \eqref{equ: ucb score}.
            \ENDFOR
            \STATE Summarize $N\times D$ arms and count the selected times of each arm.
            \STATE Uniformly sample an arm among arms that selected the most times to obtain arm $x_{j,t}$.
            \STATE Uniformly sample a $\psi_{j,t}$ from  the region indexed by $x_{j,t}$.
            \STATE Obtain a behavior $\mu_{j,t} =\mathcal{F}_{\psi} (\Phi_{\mathbf{h}})$ or $\mu_{j,t} =\mathcal{F}_{\psi_{j,t}} (\Phi_{\mathbf{h}_1},...,\Phi_{\mathbf{h}_\mathrm{N}})$.
            \STATE Execute $\mu_{j,t}$ and obtain the return $G_{j,t}$.
                        \FOR{each $B_{\mathbf{h}_i}$ in $\mathcal{B}$}
            \STATE Update $B_{\mathbf{h}_i}$ via  \eqref{equ: N} and \eqref{equ: ucb score no z-score}.
            \ENDFOR
            \STATE Update each bandit in the population via  \eqref{equ: N} and \eqref{equ: ucb score no z-score}.
            \STATE // Replace Bandit from The Population
            \IF{t mod $T_{replace}$=0}
            \STATE Remove one bandit from the bandit population Uniformly and recreate (reinitialize) one into it.
            \ENDIF
            \ENDFOR
          \end{algorithmic}  
        \label{alg:bva}
    \end{algorithm}
  \end{minipage}
\end{figure}

Noting that there are many methods to solve this non-stationary problem at present, such as the sliding windows ~\citep{agent57}. Since this is not the main proposition of this paper, we just choose a feasible implementation to handle this problem.

\clearpage

\section{Algorithm Pseudocode}
\label{app: Algorithm Pseudocode}
We concluded our algorithm in in the Algorithm. \ref{app alg:LBC}. Apart from that, we also concluded our model architecture in App. \ref{app: Model Architecture}.

\begin{figure}[ht]
  \centering
  \vspace{-0.1in}
  \begin{minipage}{\linewidth}
    \begin{algorithm}[H]
      \caption{Learnable Behavior Control}
          \begin{algorithmic}
          \STATE Initialize the Data Buffer (DB), the Parameter Sever (PS), the Learner Push Parameter Interval $d_{push}$ and the Actor Pull Paramter Interval  $d_{pull}$.
          \STATE // LEARNER i
          \STATE Initialize the network parameter $\theta_i$ (for model structure, see App. \ref{app: Model Architecture})
          \FOR{Training Step t}
          \STATE Load data from DB.
          \STATE Estimate $Q_{\theta_i}$ by $Q^{\Tilde{\pi}} (s_t, a_t) = \mathbb{E}_{\mu} [ Q(s_t, a_t) + \sum_{k \geq 0} \gamma^k 
c_{[t+1:t+k]} \delta^{Q}_{t+k} Q ]$, wherein the target policy $\pi_i=\operatorname{softmax}(A_i)$.
          \STATE Estimate $V_{\theta_i}$ by $V^{\Tilde{\pi}} (s_t) 
        = \mathbb{E}_{\mu} [ 
        V(s_t) + \sum_{k \geq 0} \gamma^k 
     c_{[t:t+k-1]} \rho_{t+k}  \delta^{V}_{t+k} V ]$, wherein the target policy $\pi_i=\operatorname{softmax}(A_i)$.
          \STATE Update $\theta_i$ via $\nabla_{\theta} \mathcal{J}(\theta)=\mathbb{E}_{\pi}\left[\sum_{t=0}^{\infty} \bm{\Phi}_{t} \nabla_{\theta} \log \pi_{\theta_i}\left(a_{t} \mid s_{t}\right)\right].$
          \IF{t mod $d_{push}=0$}
          \STATE Push $\theta_i$ into PS.
          \ENDIF
          \ENDFOR
          \STATE // ACTOR j
         \FOR{kth episode at training step t}
            \STATE   Sample a $\bm{\psi}_{j,k}=(\tau_1,...,\tau_{\mathrm{N}},\omega_1,...,\omega_{\mathrm{N}})$ via the MAB-based meta-controller (see App. \ref{Sec: appendix MAB}).
            
            \STATE \textbf{Generalized Policy Selection.} Adjusting the contribution proportion of the each learned policies for the behavior via a importance weight $\bm{\omega}=(\omega_1,...,\omega_{\mathbf{N}})$.
            
            \STATE \textbf{Policy-Wise Entropy Control.} Adjusting  the entropy of each policy via a $\tau_i$, (e.g., $\pi_{\tau_i}=\operatorname{softmax}_{\tau_i}(\Phi_i)$).
            
            \STATE \textbf{Behavior Distillation from 
            Multiple Policies.} Distilling the entropy-controlled policies into a behavior policy $\mu_{j,k}$ via a mixture model $\mu=\sum_i^{\mathbf{N}} \pi_{\tau_i}$. 
            \STATE Obtaining episode $\tau_{j,k}$ and reward $G_{j,k}$ via executing $\mu_{j,k}$, and push $\tau_{j,k}$ into  DB.
            \STATE Update the meta-controller  with $(\mu_{j,k},G_{j,k})$.
          \IF{t mod $d_{pull}=0$ }
          \STATE Pull $\{\theta_1,...,\theta_\mathbf{N}\}$ from PS.
          \ENDIF
        \ENDFOR
          \end{algorithmic}
        \label{app alg:LBC}
    \end{algorithm}
  \end{minipage}
\end{figure}

\clearpage

\section{An LBC-based Version of RL}
\label{app: An LBC-based Version of RL}
The behavior space is vital for RL methods, which can be used to categorize RL algorithms. Given the model structure $\Phi$, and the form of $\mathcal{F}$, the behavior space can be fully determined by $\mathbf{H}$ and $\bm{\Psi}$, which can be used to categorize RL methods. We say one algorithm belongs to LBC-$\mathbf{H}_{\text{N}}^{\text{C}}$-$\bm{\Psi}_{\text{L}}^{\text{K}}$ when \textbf{1)} the hyper-parameters $\mathbf{h}$ is a C-D vector and $\mathbf{h}$ has N possible values corresponding to N different policy models, and \textbf{2)} $\bm{\psi}$ is a K-D vector and $\bm{\psi}$ has L possible values corresponding to L realizable behavior mappings at each training step. Based on that, we can offer a general view to understand prior methods from the perspective of behavior control, which is illustrated in Tab. \ref{tab:lbc_rl}.
\begin{table*}[!htbp]
    \centering
    \caption{An LBC-based Version of RL Methods.}
    \label{tab:lbc_rl}
    \resizebox{\textwidth}{!}{
    \begin{tabular}{l l l l l l}
    \toprule
                Algorithm &  PBT  & Agent57 &  DvD  & LBC-$\mathcal{BM}$ (Ours)\\
    \midrule

                $\mathcal{F}$ & $\epsilon-greedy$ & $\epsilon-greedy$ & identical mapping & $\sum_{i=1}^{\text{N}} \omega_i \operatorname{Softmax}_{\tau_i}(\Phi_{\mathbf{h}_i})$ \\

                $\mathbf{M}_{\mathbf{H},\bm{\Psi}}$ & $\{\mathcal{F}_{\psi}(\Phi_{\mathbf{h}_j})|\mathbf{h}_j\in\mathbf{H},\psi \in \bm{\Psi}\}$ & $\{\mathcal{F}_{\psi}(\Phi_{\mathbf{h}_j})|\mathbf{h}_j\in\mathbf{H},\psi \in \bm{\Psi}\}$ & $\{\mathcal{F}_{\psi}(\Phi_{\mathbf{h}_j})|\mathbf{h}_j\in\mathbf{H},\psi \in \bm{\Psi}\}$ & $\{\mathcal{F}_{\psi}(\Phi_{\mathbf{h}_1},...,\Phi_{\mathbf{h}_\text{N}})|\mathbf{h}_j\in\mathbf{H},\psi \in \bm{\Psi}\}$ \\
                
                $\mathbf{H}$  &  $\{\mathbf{h}_i=(h_1,..,h_\text{C})|i=1,...,\text{N}\}$ & $\{(\gamma_i,\beta_i)|i=1,...,\text{N}\}$ & $\{(\lambda_i)|i=1,...,\text{N}\}$  & $\{\mathbf{h}_i=(h_1,..,h_\text{C})|i=1,...,\text{N}\}$ \\
                
                   $\bm{\Psi}$  & $\{(\epsilon_l)|l=1,...,\text{L}\}$    & $\{(\epsilon_l)|l=1,...,\text{L}\}$ &  $\{1\}$ &       $\{(\omega_1,\tau_1,...,\omega_\text{N},\tau_\text{N})\}$  \\
                   

                 Category   & LBC-H$^\text{C}_\text{N}$-$\bm{\Psi}^\text{1}_\text{L}$ & LBC-H$^\text{2}_\text{N}$-$\bm{\Psi}^\text{1}_\text{L}$ & LBC-H$^\text{1}_\text{N}$-$\bm{\Psi}^\text{1}_\text{1}$& LBC-H$^\text{C}_\text{N}$-$\bm{\Psi}^\text{K=2N}_{\infty}$\\
                 $|\mathbf{M}_{\mathbf{H},\bm{\Psi}} |$ & $\mathrm{N} \times \mathrm{L}$ & $\mathrm{N} \times \mathrm{L}$ & $\mathrm{N}$ & $\infty$ \\
                 Meta-Controller ($\mathbf{H}$) & ES    & MAB  & MAB    & Ensemble  \eqref{equ: behavior space with hybrid behavior mapping} \\
                 Meta-Controller ($\bm{\Psi}$)  & Rule-Based  & Rule-Based   & Rule-Based & MAB                \\
    \bottomrule
    \end{tabular}
    }
\end{table*}
\normalsize

\clearpage
\section{Experiment Details}
\label{sec:app Experiment Details}

\subsection{Implementation Details}

On top of the general training architecture is the Learner-Actor framework  \citep{impala}, which makes large-scale training easier. 
We employ the burn-in method  \citep{r2d2} to address representational drift and twice train each sample. The recurrent encoder with LSTM \citep{lstm} is also used to solve the partially observable MDP problem \citep{ale}. For a thorough discussion of the hyper-parameters, see App. \ref{Sec: appendix hyper-parameters}.

\subsection{Experimental Setup}

The undiscounted episode returns averaged over 5 seeds are captured using a windowed mean across 32 episodes in addition to the default parameters. All agents were evaluated on 57 Atari 2600 games from the arcade learning environment \citep[ALE]{ale} using the population's average score from model training. Noting that episodes would end at 100K frames, like per prior baseline techniques \citep{rainbow,agent57,laser,ngu,r2d2}.

\subsection{Resources Used}
\label{app: Resources Used}
All the experiment is accomplished using 10 workers with 72 cores CPU and 3 learners with 3 Tesla-V100-SXM2-32GB GPU.

\clearpage

\section{hyper-parameters}
\label{Sec: appendix hyper-parameters}
The hyper-parameters that we used in all experiments are like those of NGU \cite{ngu} and Agent57 \citep{agent57}. However, for completeness and readability, we detail
them below in  Tab. \ref{tab:fixed_model_hyper-parameters_atari}. We also include the hyper-parameters we used in the population-based MAB. For more details on the parameters in ALE, we refer the readers to see \citep{ale2}.

\begin{table}[H]
\begin{center}
\caption{Hyper-Parameters for Atari Experiments.}
\label{tab:fixed_model_hyper-parameters_atari}
\resizebox{\textwidth}{!}{
 \begin{tabular}{l l l l }
\toprule
\textbf{Parameter} & \textbf{Value} & \textbf{Parameter} & \textbf{Value}  \\
\midrule
Burn-in & 40 & Replay & 2 \\

Seq-length & 80 & Burn-in Stored Recurrent State & Yes \\

Bootstrap & Yes  & Batch size & 64 \\

$V$-loss Scaling ($\xi$) & 1.0  & $Q$-loss Scaling ($\alpha$) & 5.0 \\

$\pi$-loss Scaling ($\beta$) & 5.0  & Importance sampling clip $\Bar{c}$ & 1.05 \\

Importance Sampling Clip $\Bar{\rho}$ & 1.05 & LSTM Units & 256 \\

Weight Decay Rate & 0.01 & Optimizer & Adam weight decay    \\

Learning Rate & 5.3e-4  & Weight Decay Schedule & Anneal linearly to 0   \\

Warmup Steps & 4000 & Learning Rate Schedule & Anneal linearly to 0 \\

AdamW $\beta_1$ & 0.9  & Auxiliary Forward Dynamic Task & Yes  \\

AdamW $\epsilon$ & 1e-6 & Learner Push Model Every $d_{push}$ Steps & 25   \\

AdamW $\beta_2$ & 0.98  & Auxiliary Inverse Dynamic Task & Yes \\

AdamW Clip Norm & 50.0  & Actor Pull Model Every $d_{pull}$ Steps & 64 \\

$\gamma_1$ & 0.997  &  $\mathcal{RS}_1$ &$ \operatorname{sign}(x) \cdot (\sqrt{|x|+1}-1)+0.001 \cdot x$\\

$\gamma_2$ & 0.999  & $\mathcal{RS}_2$ (log scaling) & $ \log(|x|+1) \cdot (2 \cdot \mathbbm{1}_{r\geq0} - \mathbbm{1}_{r\le 0})$ \\ 

$\gamma_3$ & 0.99 &  $\mathcal{RS}_3$  & $0.3 \cdot min(\tanh{x},0) + 5\cdot max(\tanh{x},0)$ \\ 

Population Num.   & 7  &  UCB $c$ & Uniformly sampled from $[0.5,1.5]$\\ 

D of Top-D  & 4 & Replacement Interval $T_{replace}$  & 50   \\
Range of $\tau_i$ & $[0,\exp{4}]$  & Range of $\omega_i$ & $[0,1]$\\ 
Discrete Accuracy of $\tau_i$ & 0.2 & Discrete Accuracy of $\omega_i$ & 0.1 \\
Max episode length   & 30 $min$ & Image Size & (84, 84) \\
Grayscaled/RGB      & Grayscaled & Life information & Not allowed  \\
Action Space & Full & Sticky action probability  & 0.0 \\
Num. Action Repeats & 4 & Random noops range  & 30\\
Num. Frame Stacks & 4 & Num. Atari Games & 57 (Full)\\
\bottomrule
\end{tabular} 
}
\end{center}
\end{table}

\clearpage

\section{Experimental Results}
\label{appendix: experiment results}

\subsection{Atari Games Learning Curves}

\renewcommand{\thesubfigure}{\arabic{subfigure}.}
\setcounter{subfigure}{0}

\begin{figure}[!ht] 
    \subfigure[Alien]{
    \includegraphics[width=0.3\textwidth,height=0.15\textheight]{./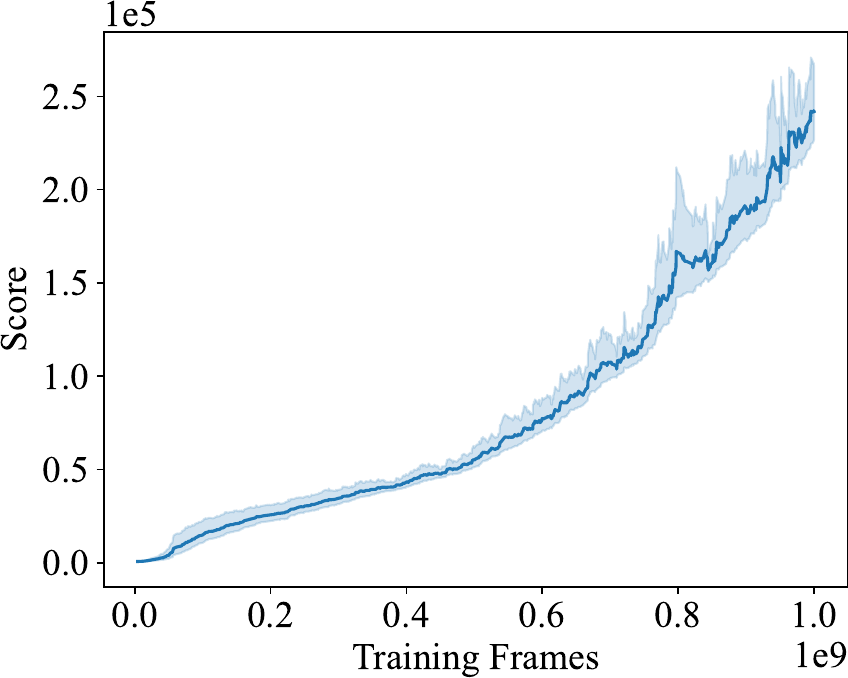}
    }
    \subfigure[Amidar]{
    \includegraphics[width=0.3\textwidth,height=0.15\textheight]{./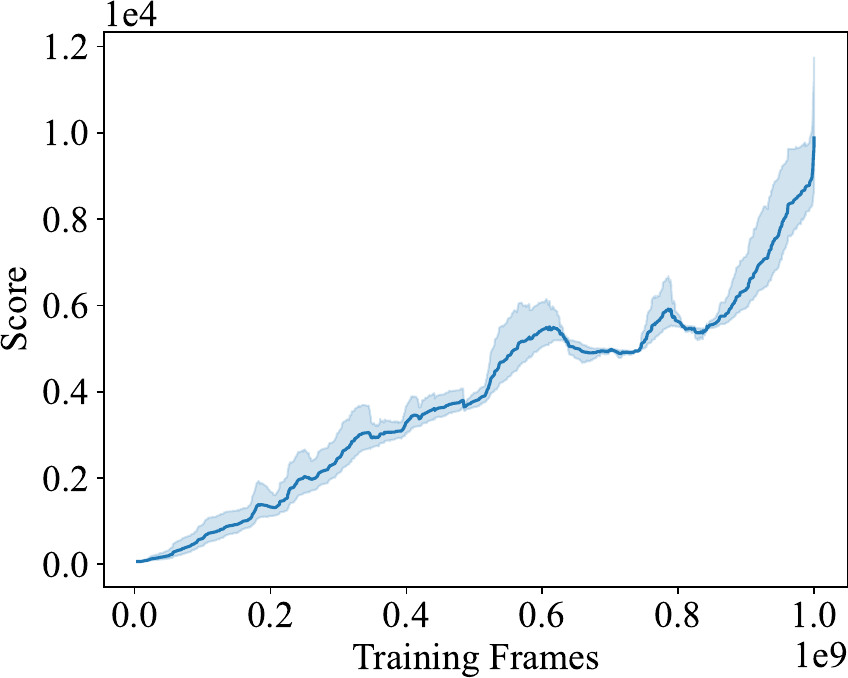}
    }
    \subfigure[Assault]{
    \includegraphics[width=0.3\textwidth,height=0.15\textheight]{./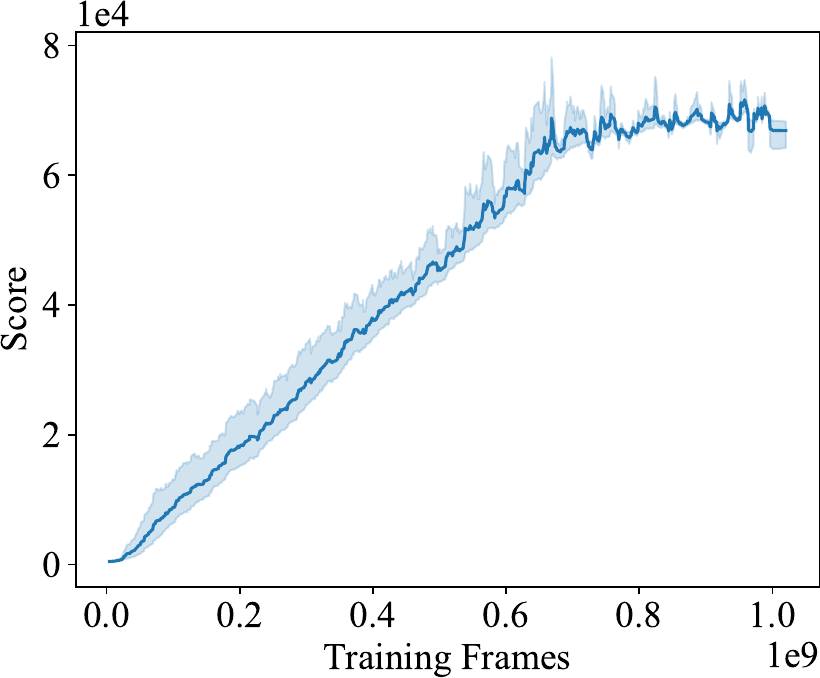}
    }
\end{figure}

\begin{figure}[!ht]
    \subfigure[Asterix]{
    \includegraphics[width=0.3\textwidth,height=0.15\textheight]{./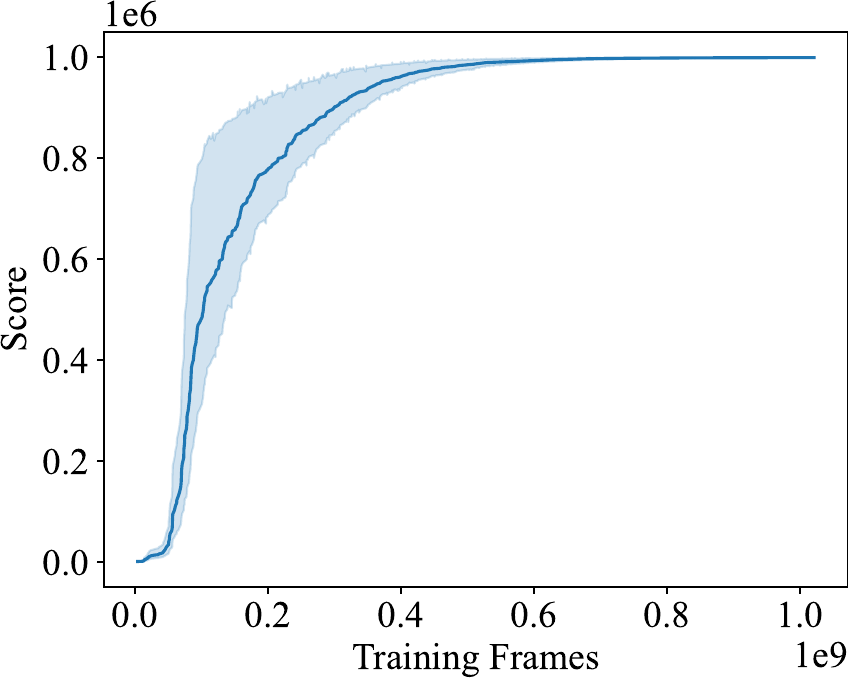}
    }
    \subfigure[Asteroids]{
    \includegraphics[width=0.3\textwidth,height=0.15\textheight]{./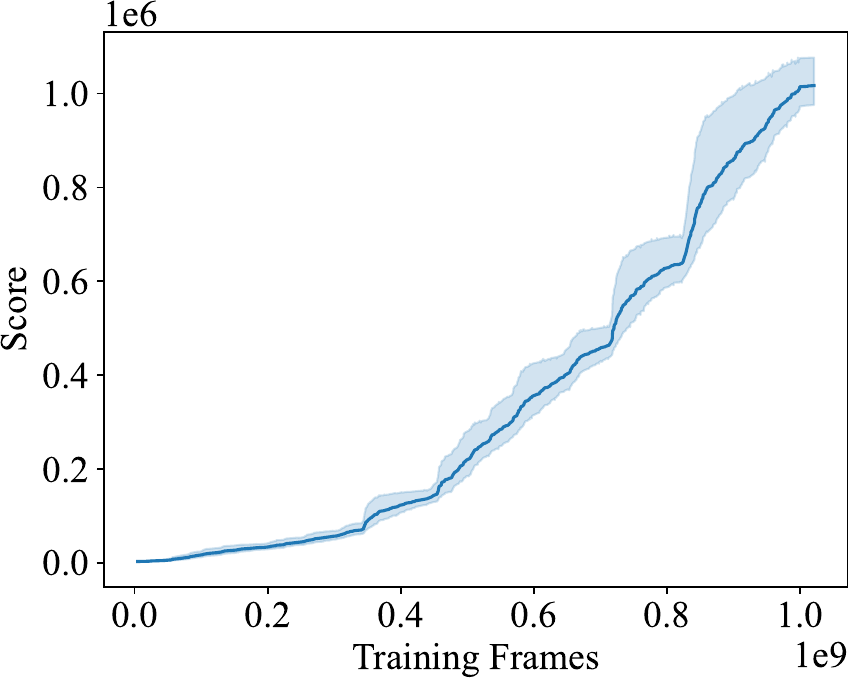}
    }
    \subfigure[Atlantis]{
    \includegraphics[width=0.3\textwidth,height=0.15\textheight]{./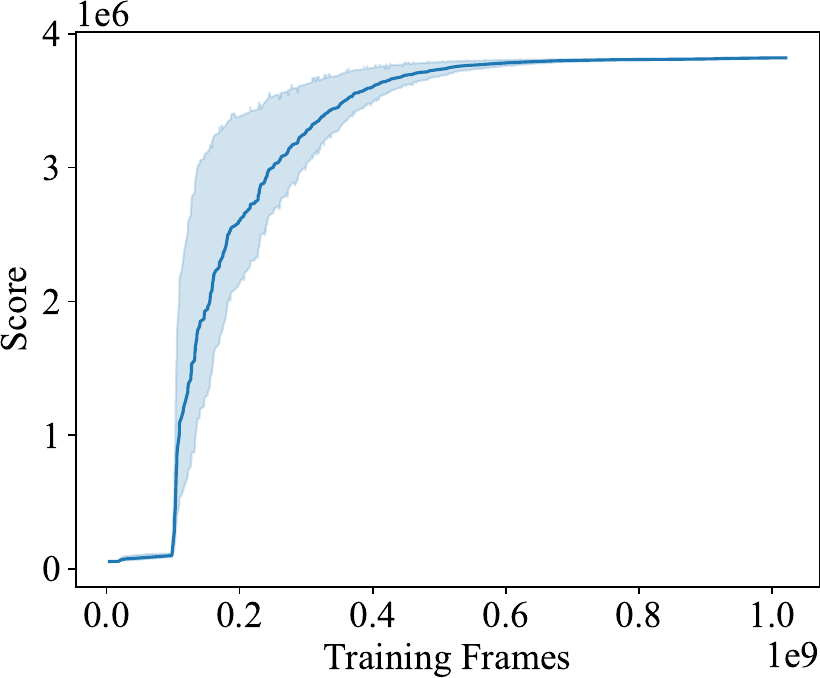}
    }
\end{figure}

\begin{figure}[!ht]
    \subfigure[Bank\_Heist]{
    \includegraphics[width=0.3\textwidth,height=0.15\textheight]{./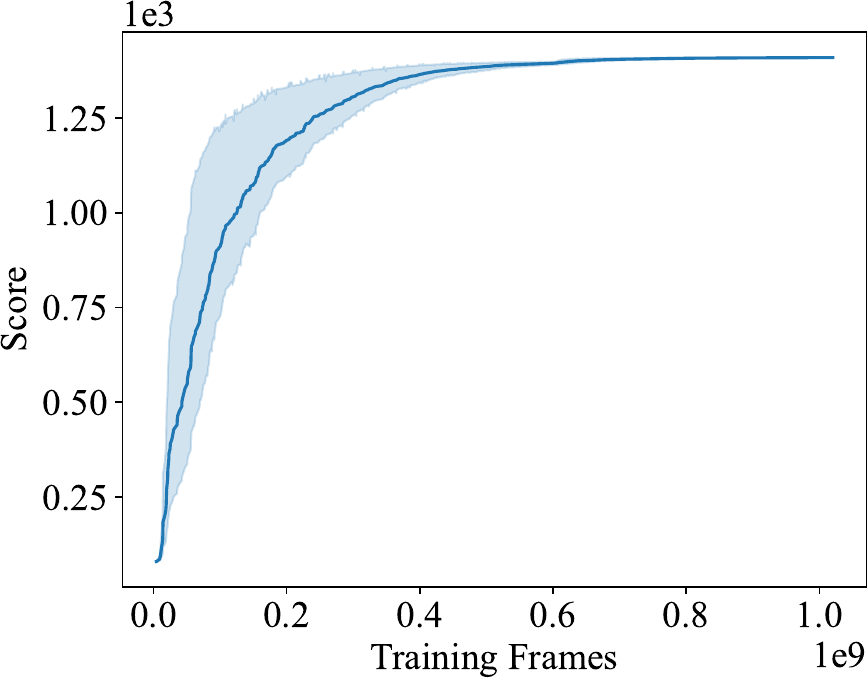}
    }
    \subfigure[Battle\_Zone]{
    \includegraphics[width=0.3\textwidth,height=0.15\textheight]{./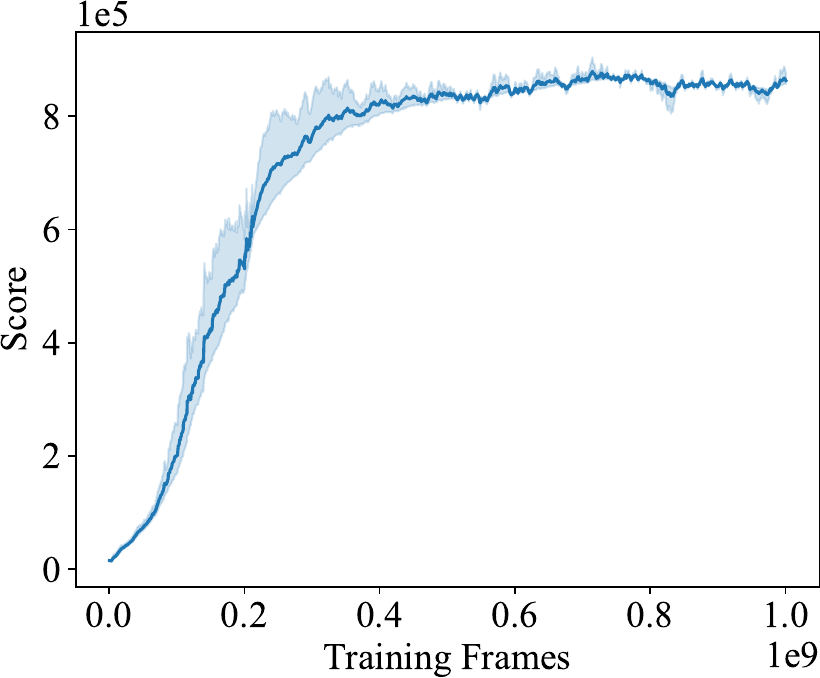}
    }
    \subfigure[Beam\_Rider]{
    \includegraphics[width=0.3\textwidth,height=0.15\textheight]{./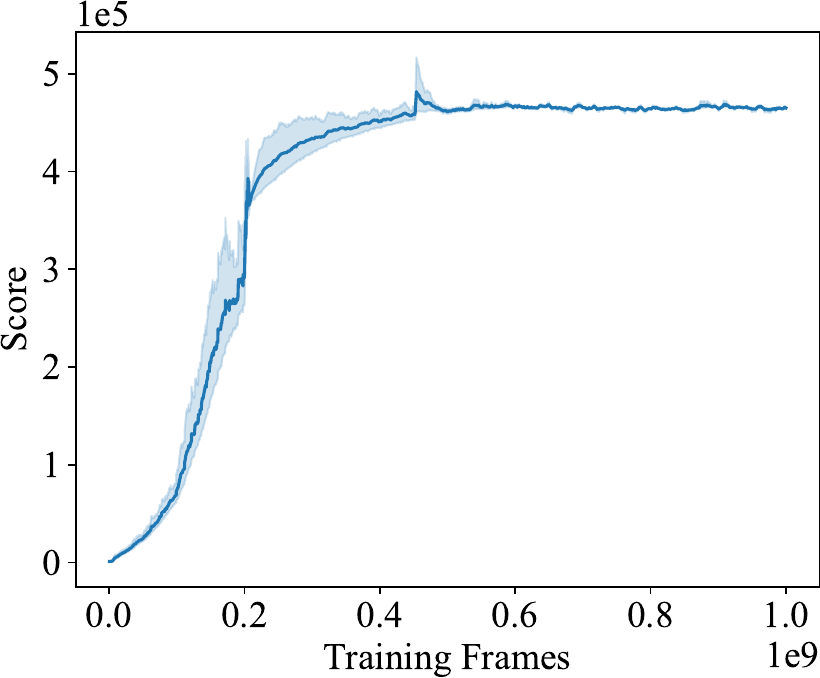}
    }
\end{figure}

\begin{figure}[!ht]
    \subfigure[Berzerk]{
    \includegraphics[width=0.3\textwidth,height=0.15\textheight]{./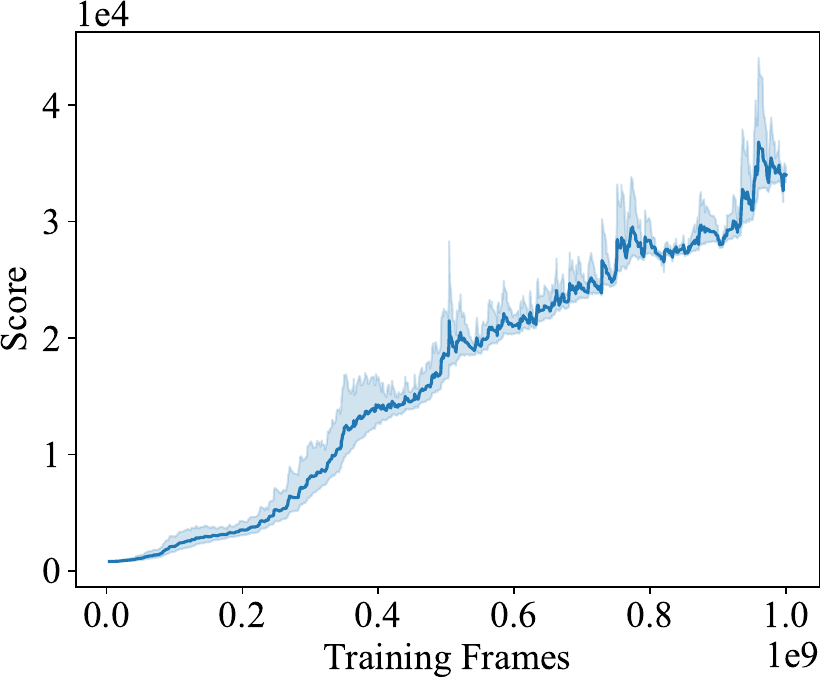}
    }
    \subfigure[Bowling]{
    \includegraphics[width=0.3\textwidth,height=0.15\textheight]{./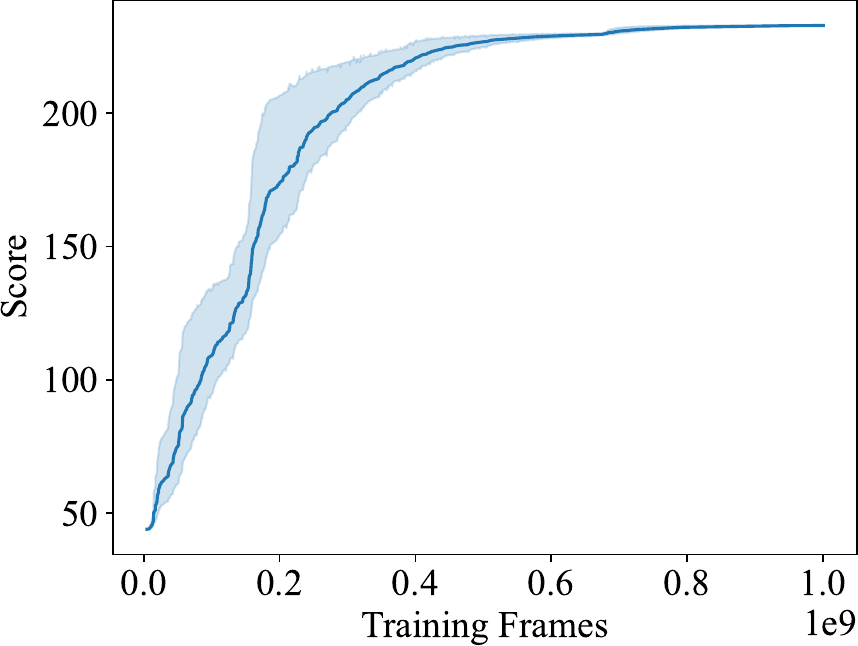}
    }
    \subfigure[Boxing]{
    \includegraphics[width=0.3\textwidth,height=0.15\textheight]{./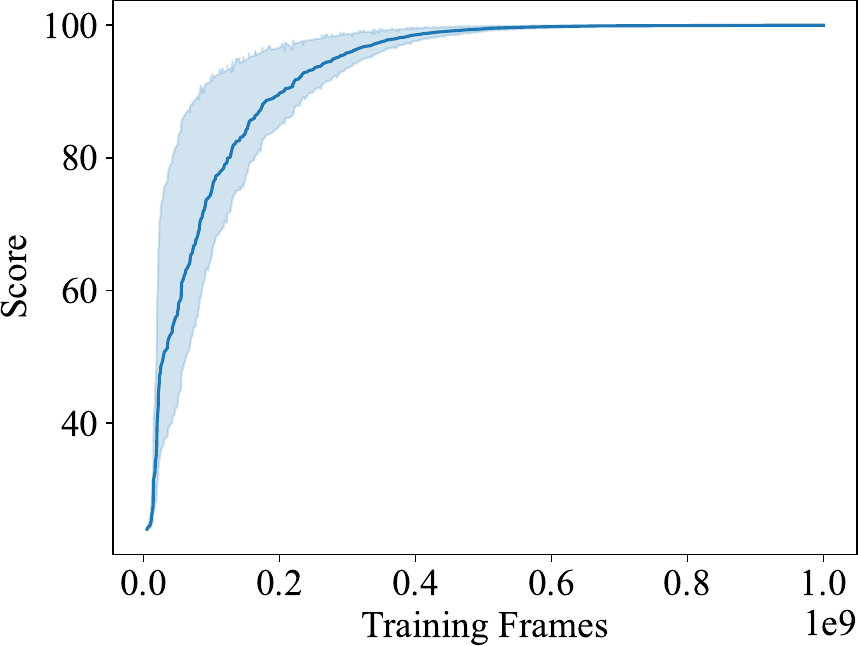}
    }
\end{figure}

\begin{figure}[!ht]
    \subfigure[Breakout]{
    \includegraphics[width=0.3\textwidth,height=0.15\textheight]{./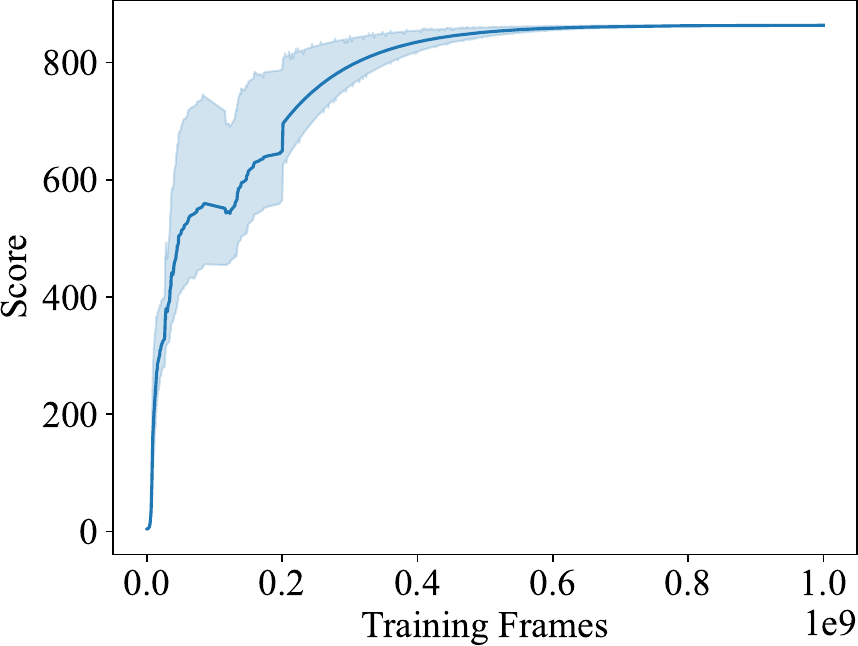}
    }
    \subfigure[Centipede]{
    \includegraphics[width=0.3\textwidth,height=0.15\textheight]{./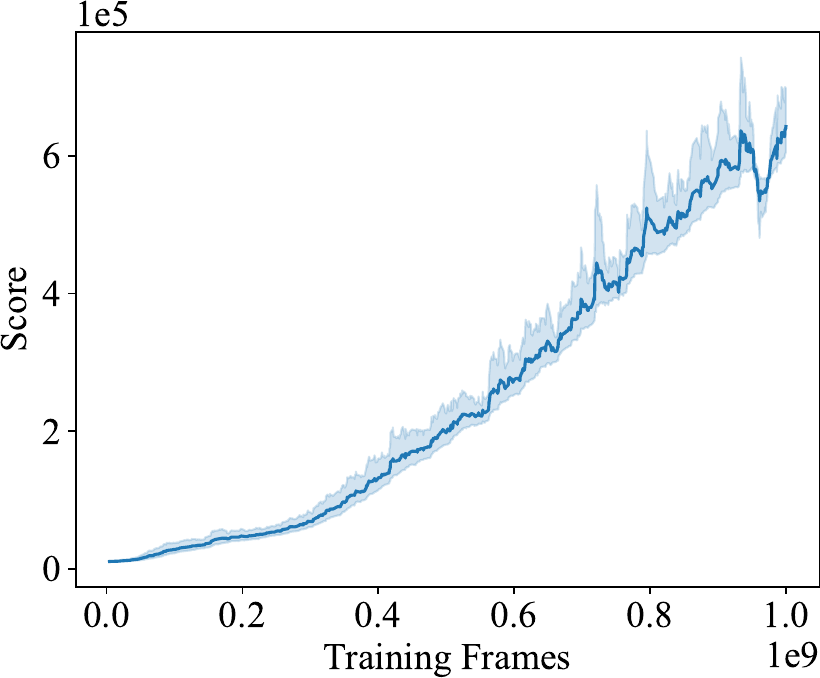}
    }
    \subfigure[Chopper\_Command]{
    \includegraphics[width=0.3\textwidth,height=0.15\textheight]{./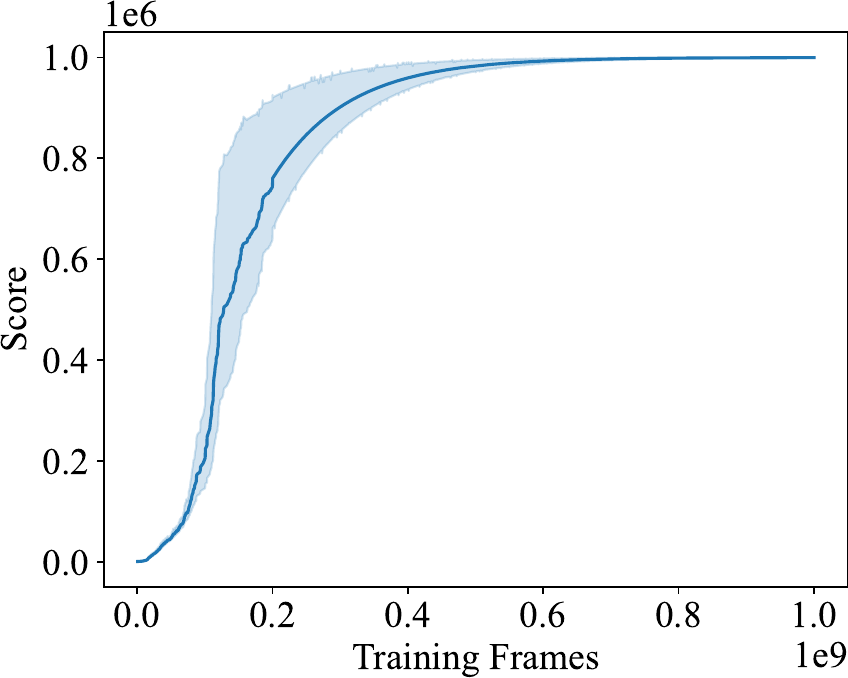}
    }
\end{figure}

\begin{figure}[!ht]
    \subfigure[Crazy\_Climber]{
    \includegraphics[width=0.3\textwidth,height=0.15\textheight]{./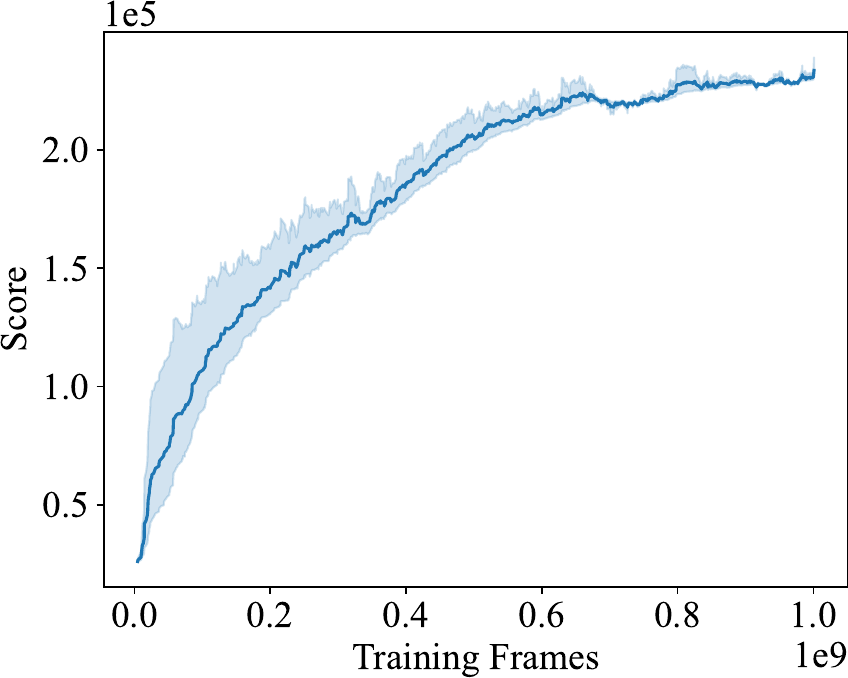}
    }
    \subfigure[Defender]{
    \includegraphics[width=0.3\textwidth,height=0.15\textheight]{./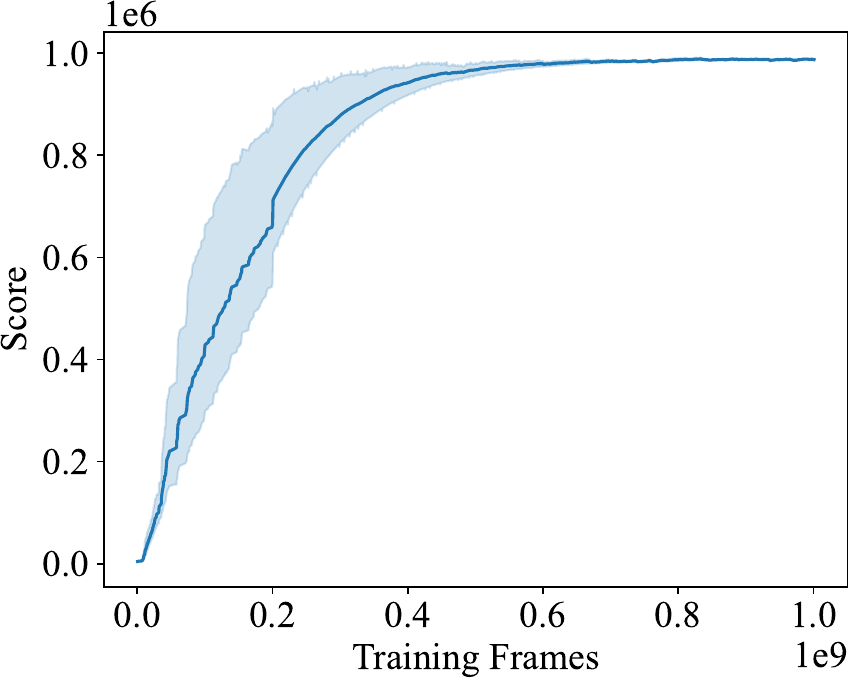}
    }
    \subfigure[Demon\_Attack]{
    \includegraphics[width=0.3\textwidth,height=0.15\textheight]{./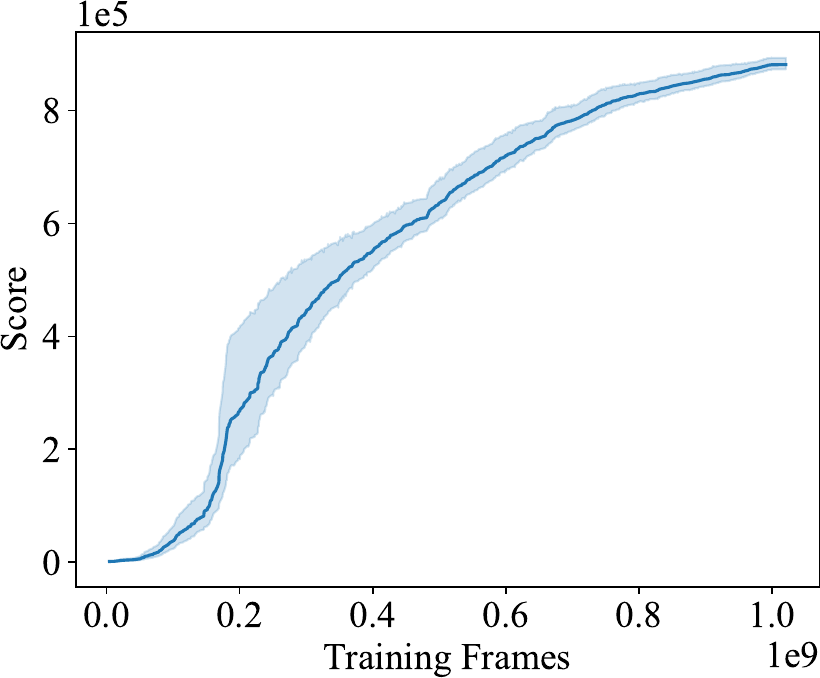}
    }
\end{figure}

\begin{figure}[!ht]
    \subfigure[Double\_Dunk]{
    \includegraphics[width=0.3\textwidth,height=0.15\textheight]{./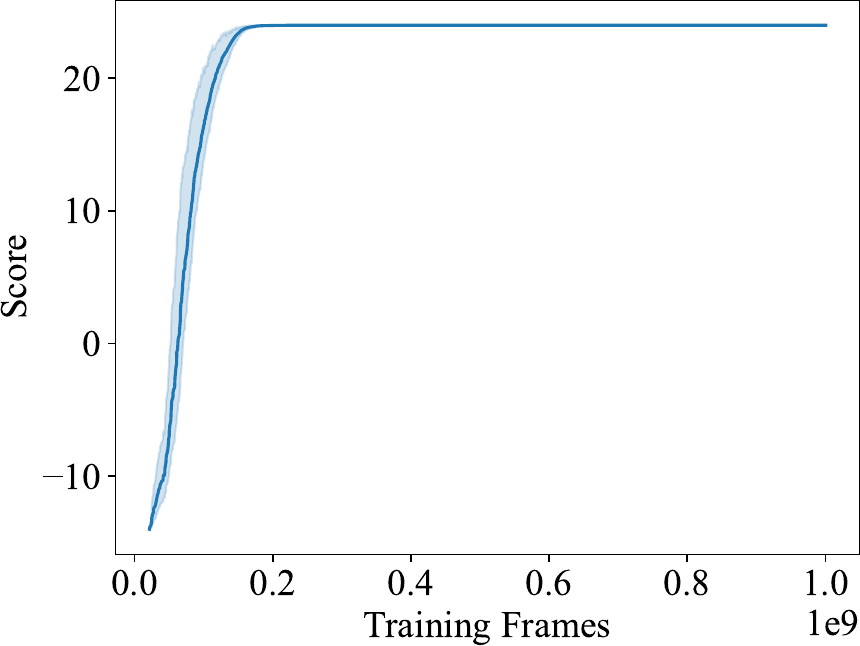}
    }
    \subfigure[Enduro]{
    \includegraphics[width=0.3\textwidth,height=0.15\textheight]{./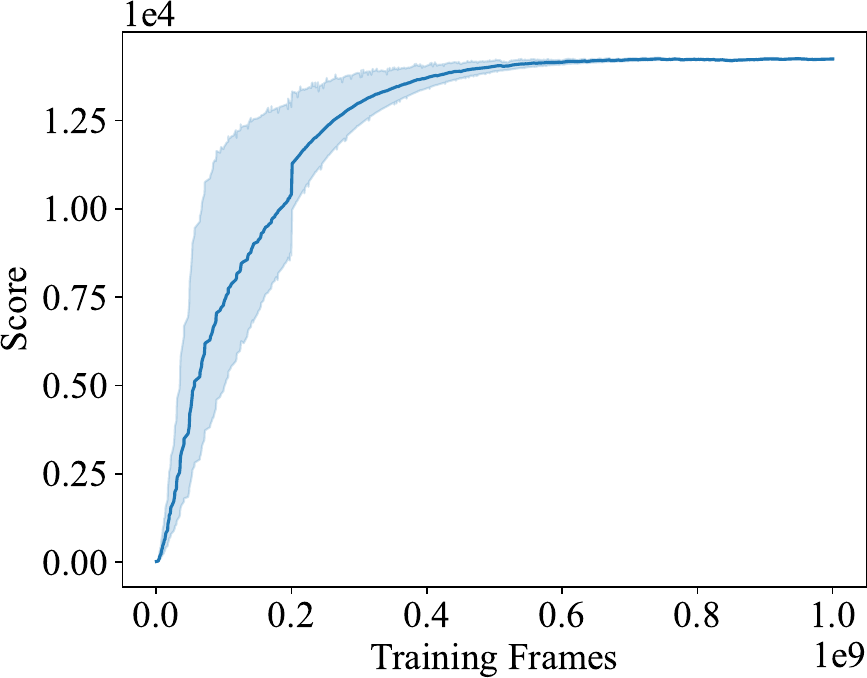}
    }
    \subfigure[Fishing\_Derby]{
    \includegraphics[width=0.3\textwidth,height=0.15\textheight]{./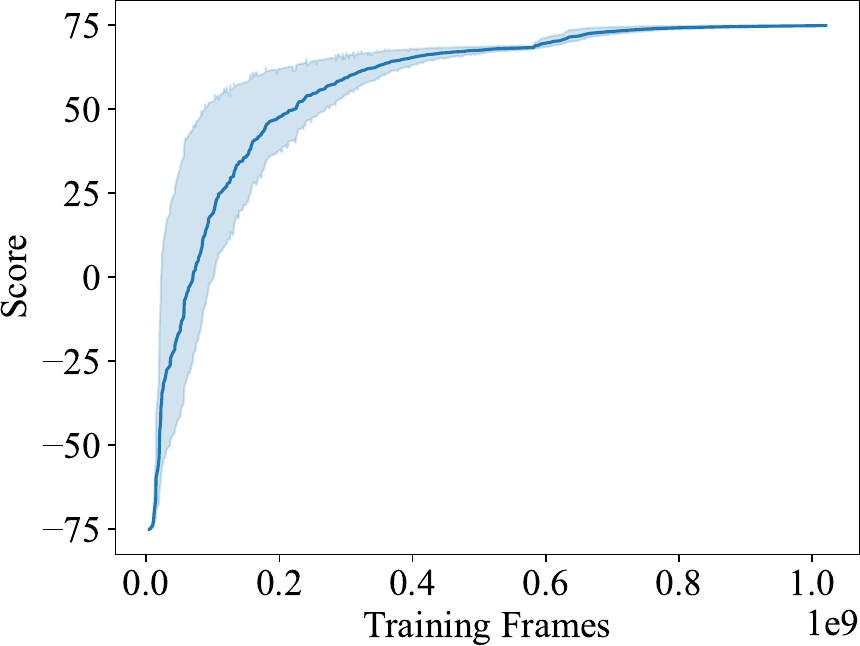}
    }
\end{figure}

\begin{figure}[!ht]
    \subfigure[Freeway]{
    \includegraphics[width=0.3\textwidth,height=0.15\textheight]{./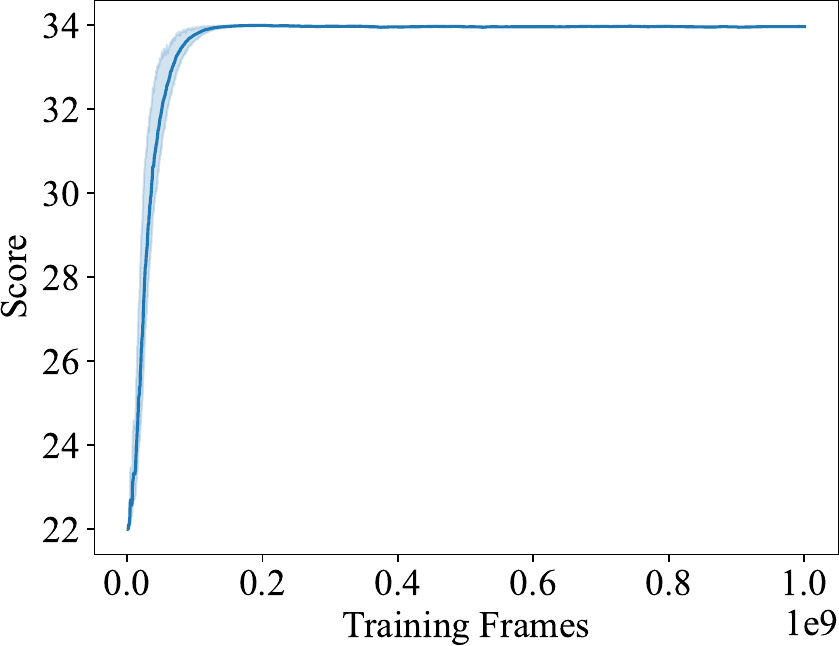}
    }
    \subfigure[Frostbite]{
    \includegraphics[width=0.3\textwidth,height=0.15\textheight]{./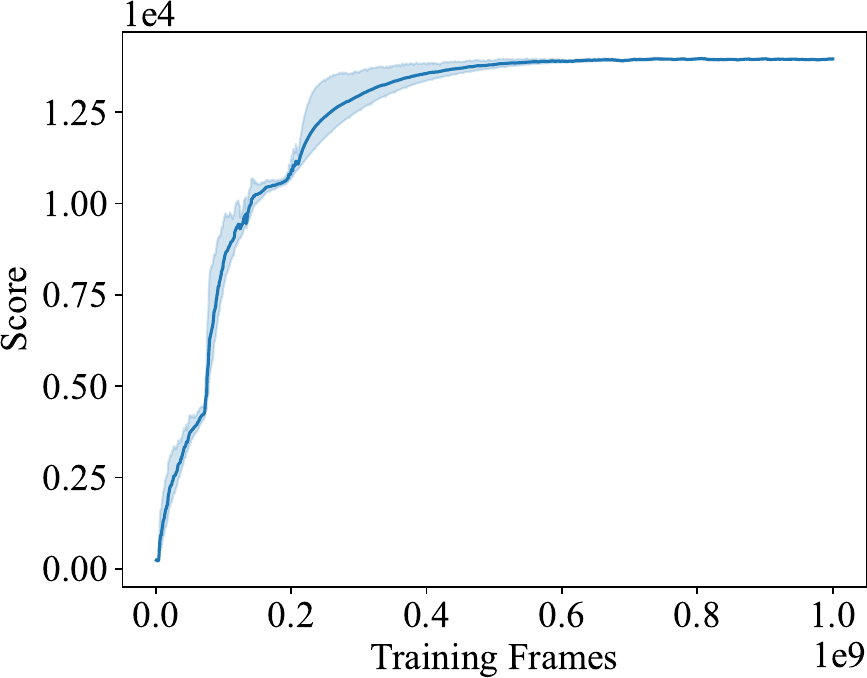}
    }
    \subfigure[Gopher]{
    \includegraphics[width=0.3\textwidth,height=0.15\textheight]{./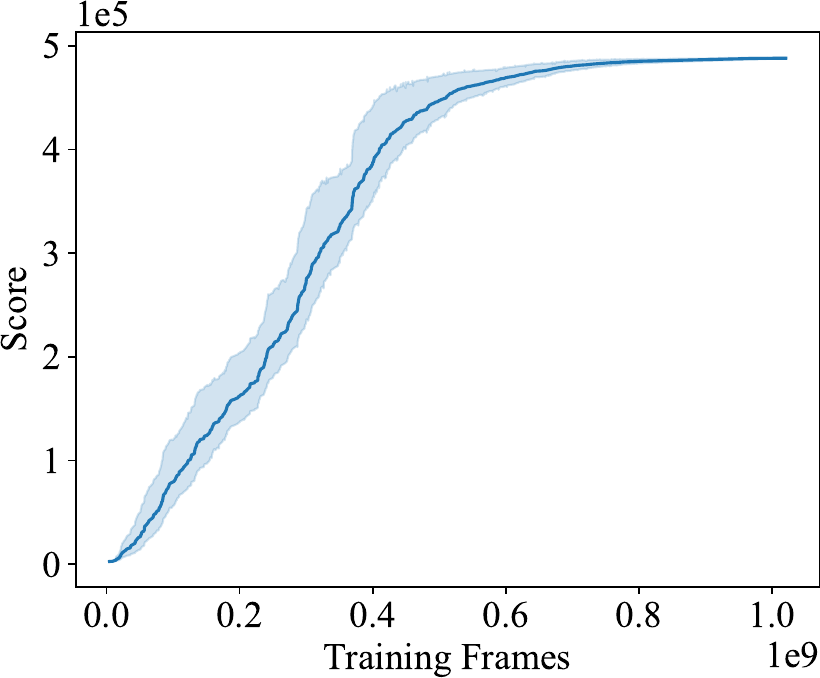}
    }
\end{figure}

\begin{figure}[!ht]
    \subfigure[Gravitar]{
    \includegraphics[width=0.3\textwidth,height=0.15\textheight]{./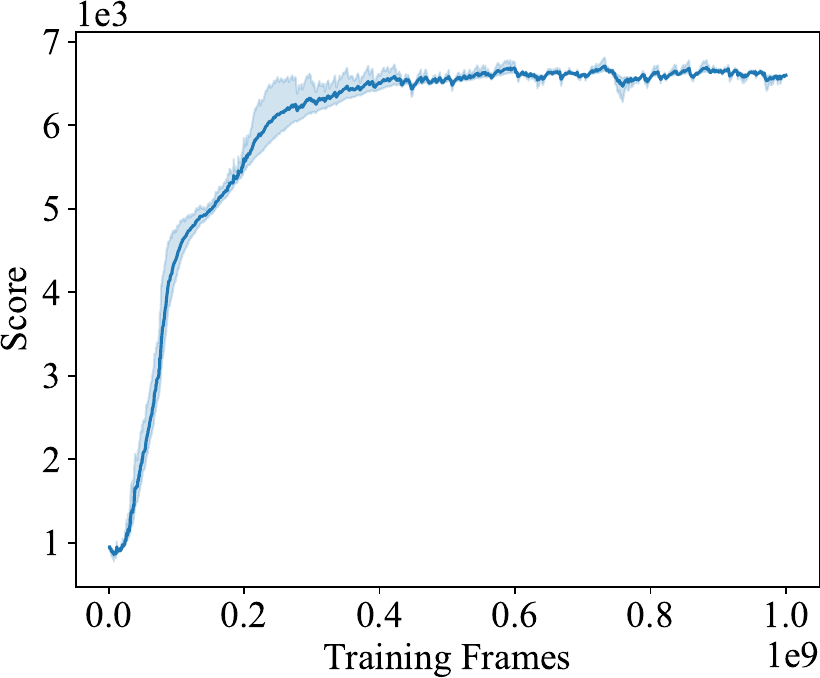}
    }
    \subfigure[Hero]{
    \includegraphics[width=0.3\textwidth,height=0.15\textheight]{./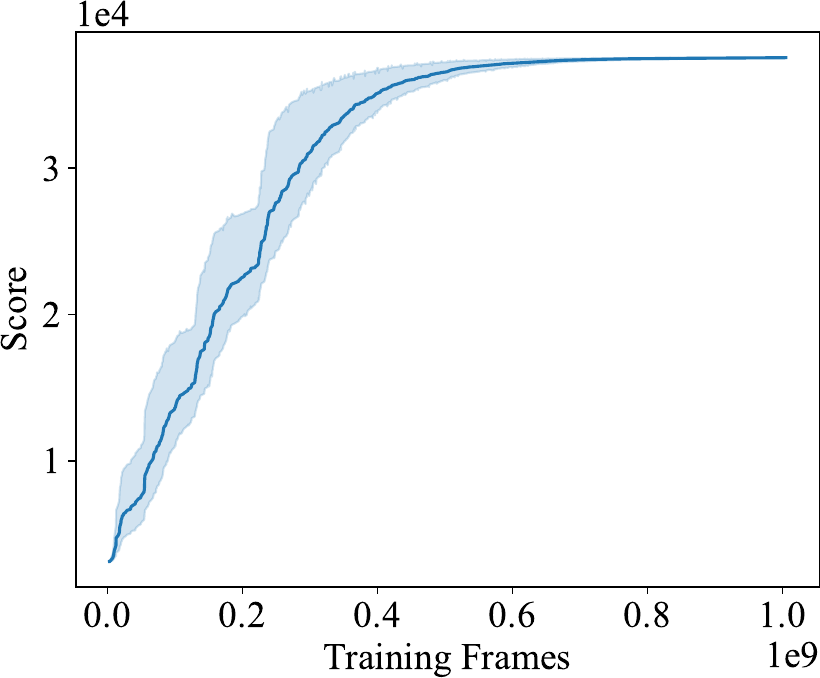}
    }
    \subfigure[Ice\_Hockey]{
    \includegraphics[width=0.3\textwidth,height=0.15\textheight]{./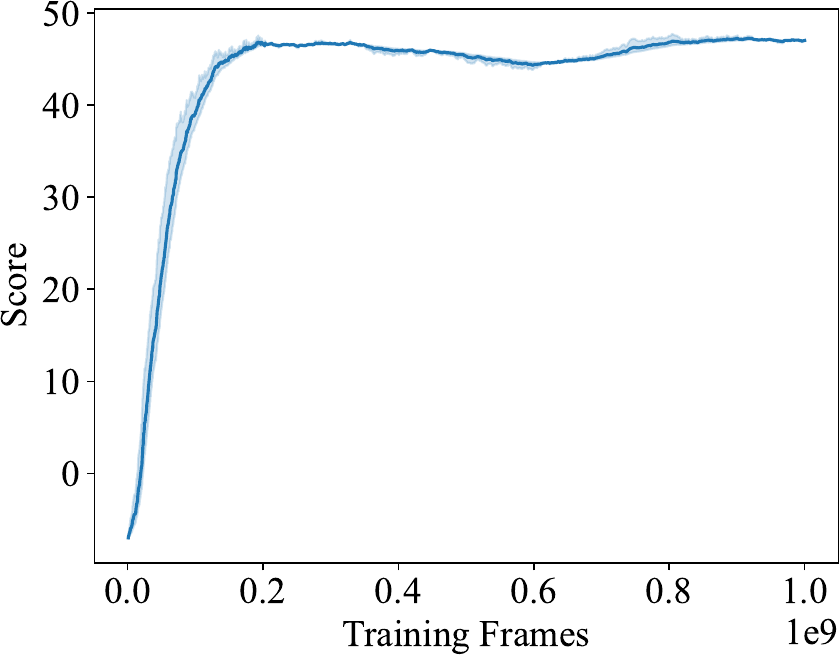}
    }
\end{figure}

\begin{figure}[!ht]
    \subfigure[Jamesbond]{
    \includegraphics[width=0.3\textwidth,height=0.15\textheight]{./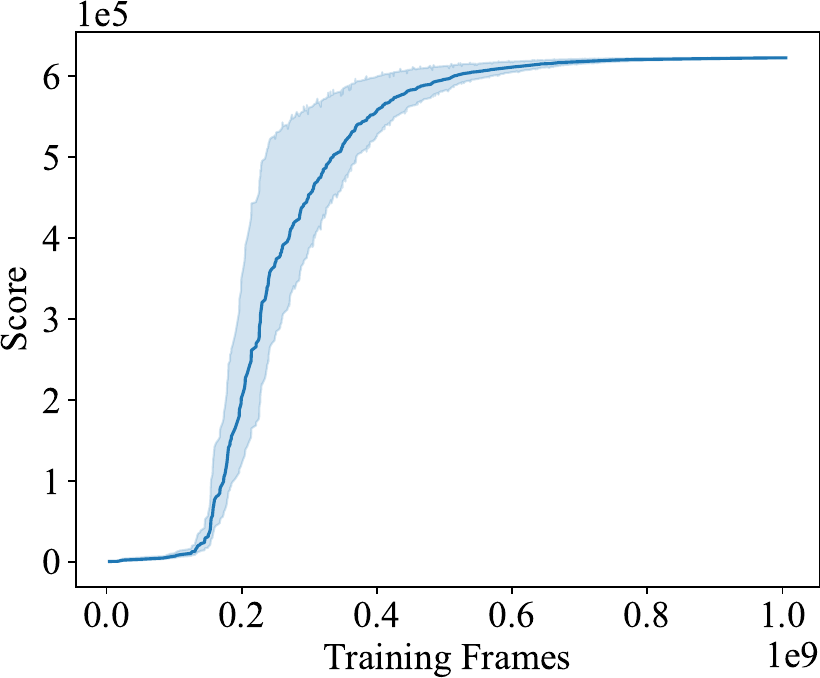}
    }
    \subfigure[Kangaroo]{
    \includegraphics[width=0.3\textwidth,height=0.15\textheight]{./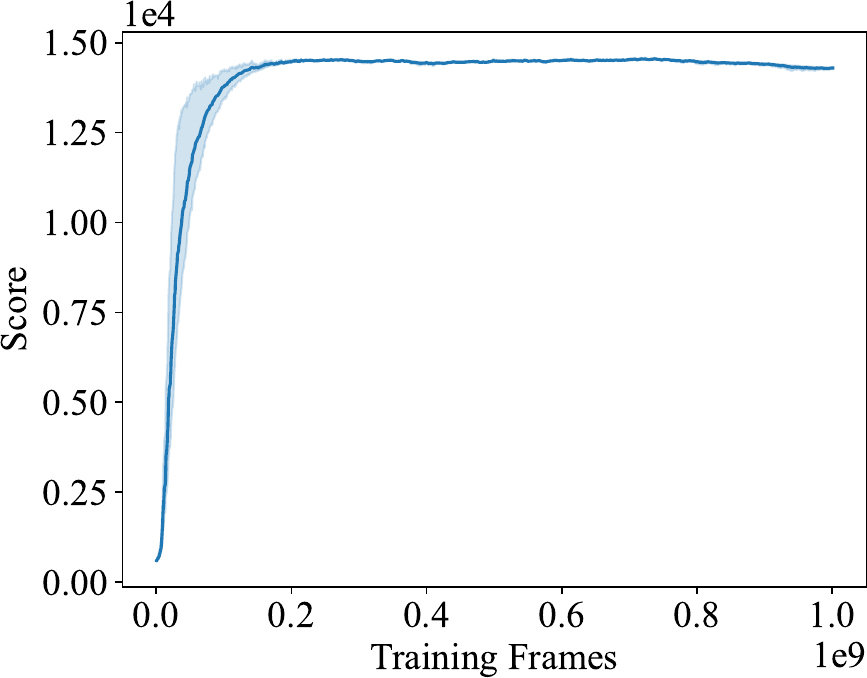}
    }
    \subfigure[Krull]{
    \includegraphics[width=0.3\textwidth,height=0.15\textheight]{./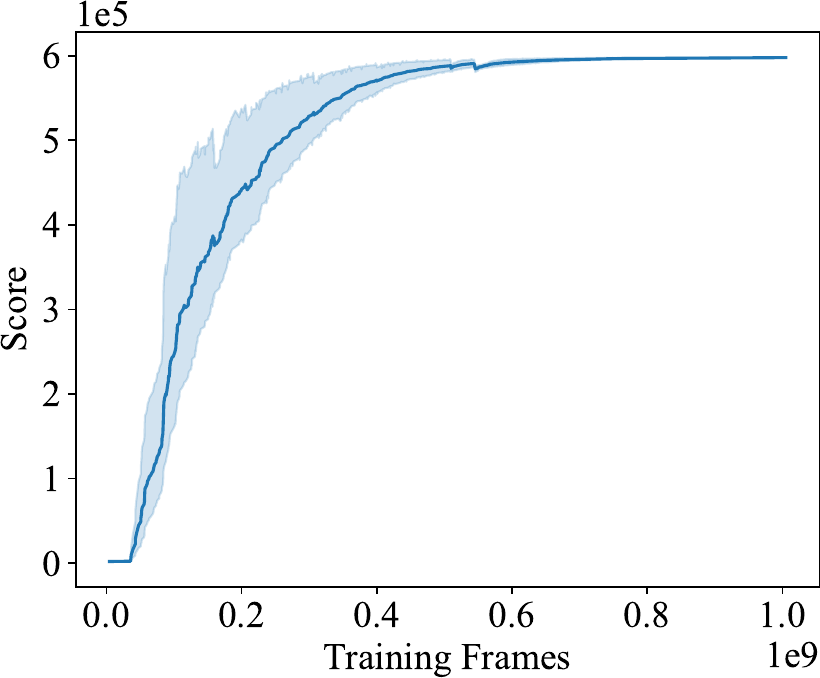}
    }
\end{figure}

\begin{figure}[!ht]
    \subfigure[Kung\_Fu\_Master]{
    \includegraphics[width=0.3\textwidth,height=0.15\textheight]{./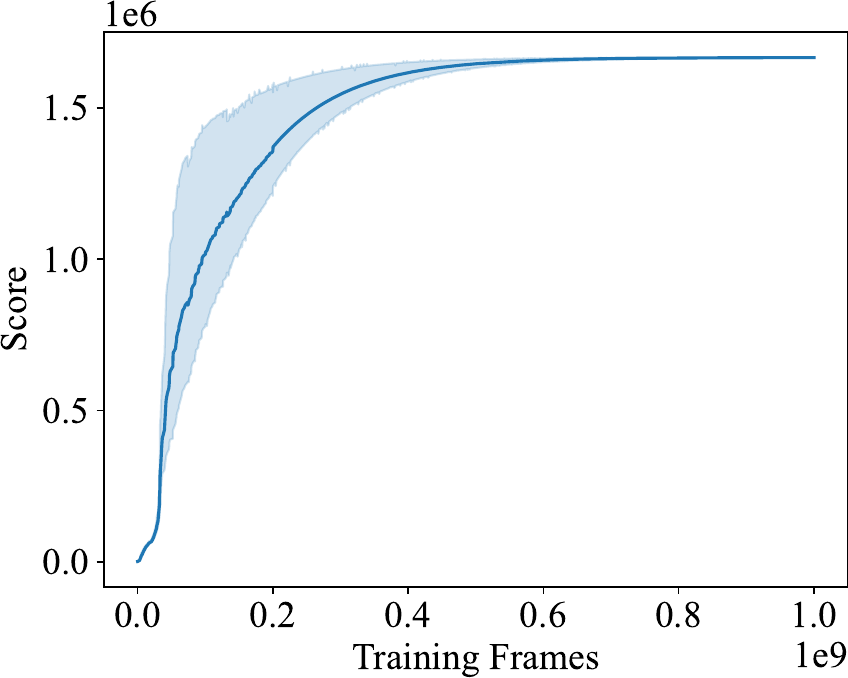}
    }
    \subfigure[Montezuma\_Revenge]{
     \includegraphics[width=0.3\textwidth,height=0.15\textheight]{./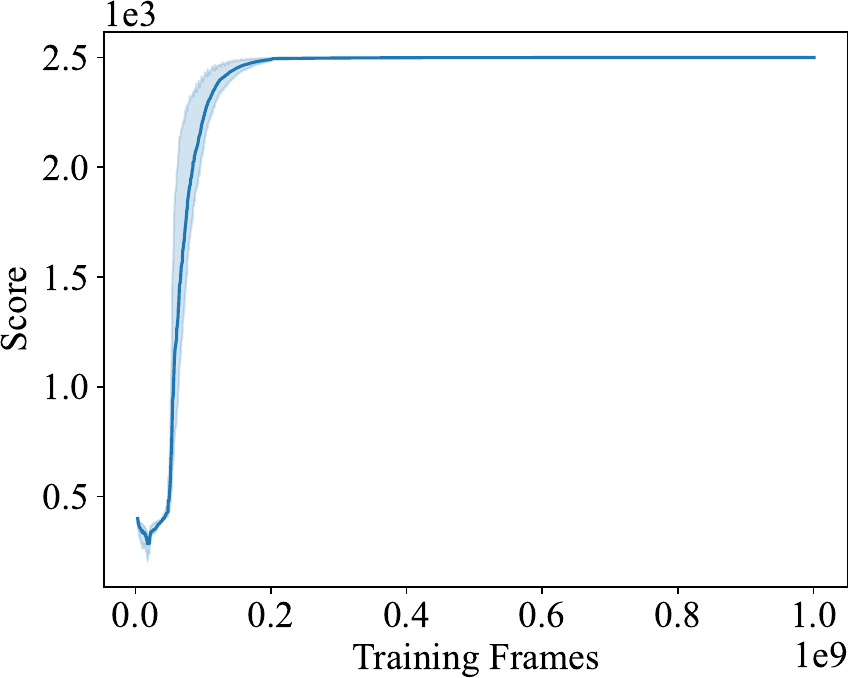}
    }
    \subfigure[Ms\_Pacman]{
    \includegraphics[width=0.3\textwidth,height=0.15\textheight]{./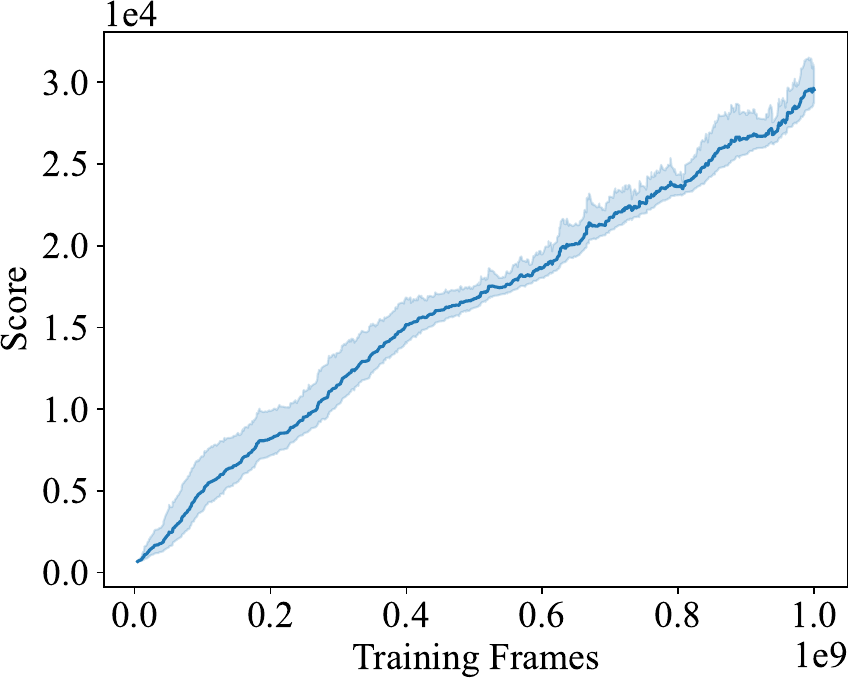}
    }
\end{figure}

\begin{figure}[!ht]
    \subfigure[Name\_This\_Game]{
    \includegraphics[width=0.3\textwidth,height=0.15\textheight]{./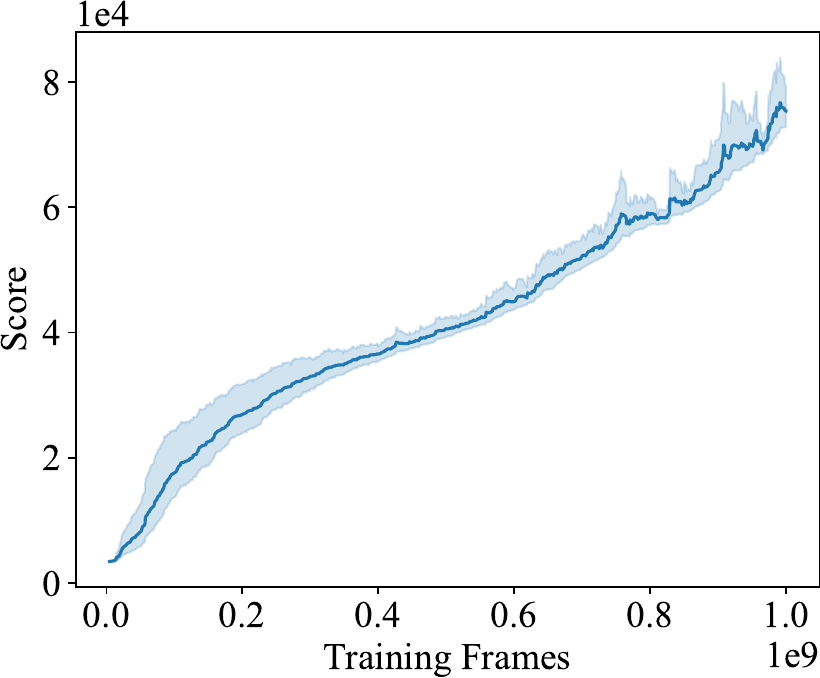}
    }
    \subfigure[Phoenix]{
     \includegraphics[width=0.3\textwidth,height=0.15\textheight]{./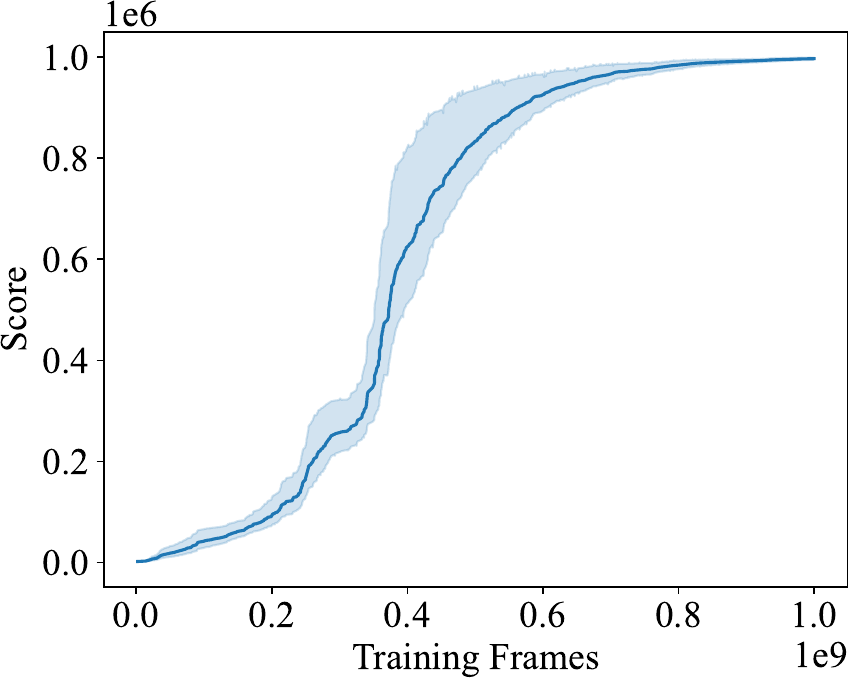}
    }
    \subfigure[Pitfall]{
    \includegraphics[width=0.3\textwidth,height=0.15\textheight]{./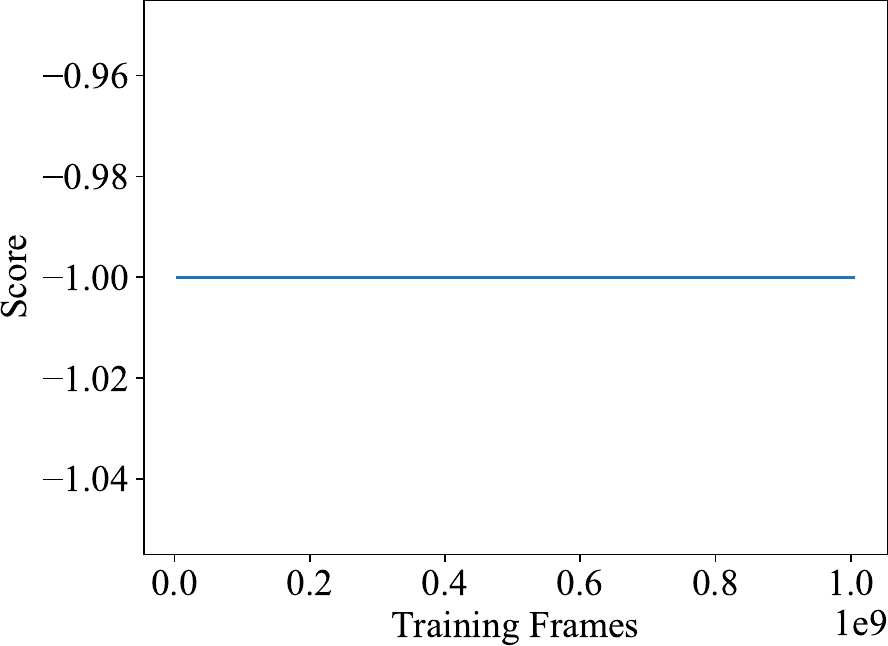}
    }
\end{figure}

\begin{figure}[!ht]
    \subfigure[Pong]{
    \includegraphics[width=0.3\textwidth,height=0.15\textheight]{./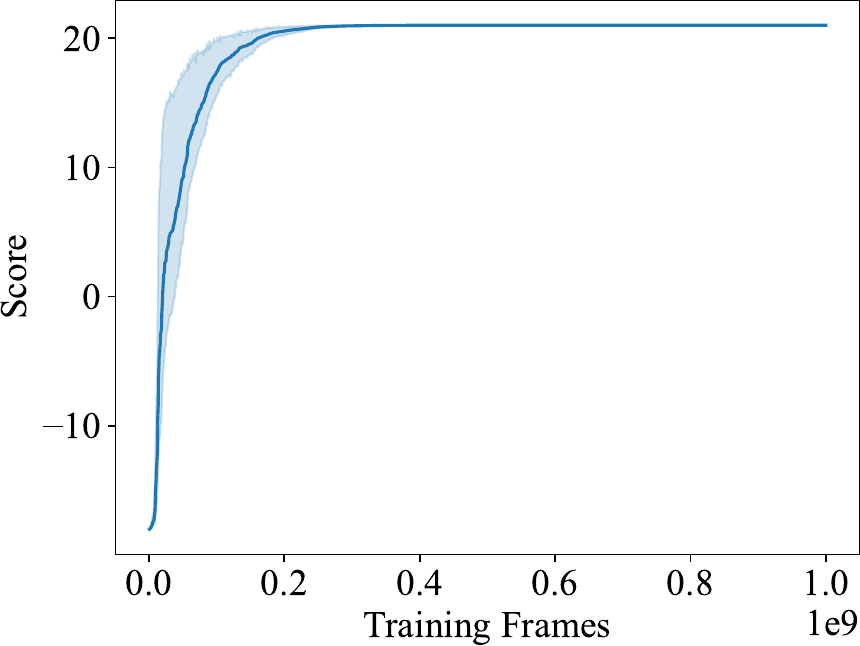}
    }
    \subfigure[Private\_Eye]{
    \includegraphics[width=0.3\textwidth,height=0.15\textheight]{./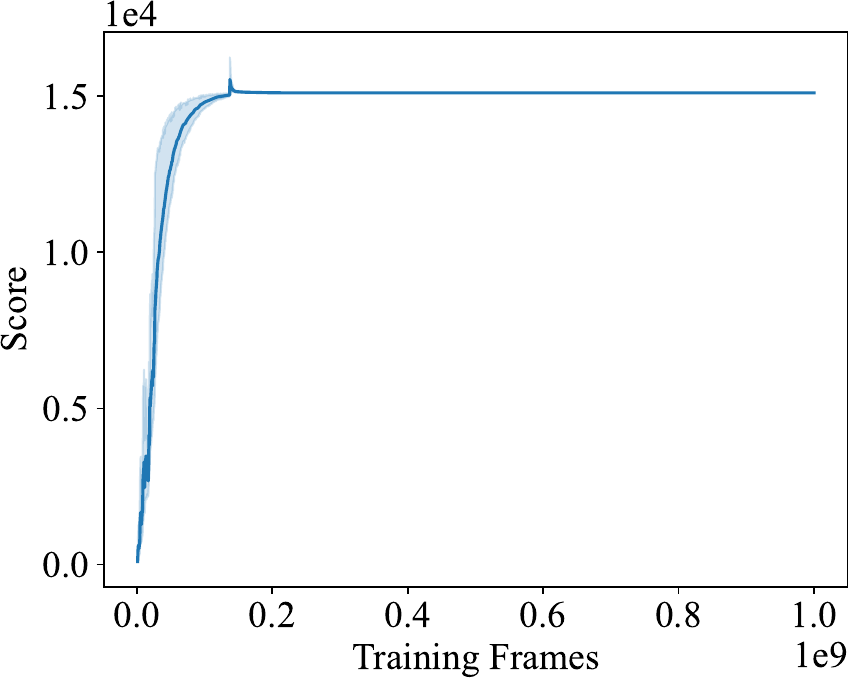}
    }
    \subfigure[Qbert]{
     \includegraphics[width=0.3\textwidth,height=0.15\textheight]{./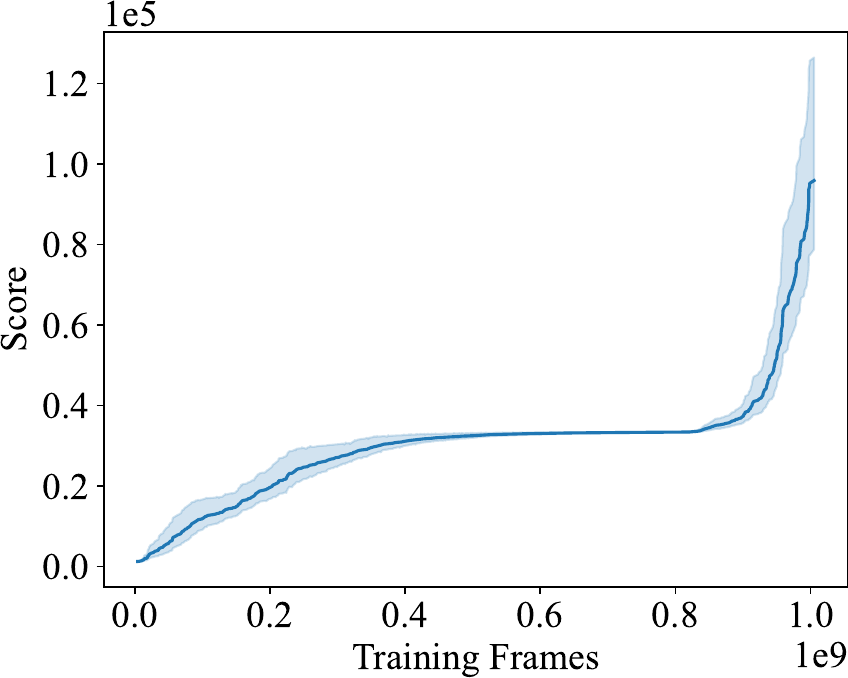}
    }
\end{figure}

\begin{figure}[!ht]
    \subfigure[Riverraid]{
    \includegraphics[width=0.3\textwidth,height=0.15\textheight]{./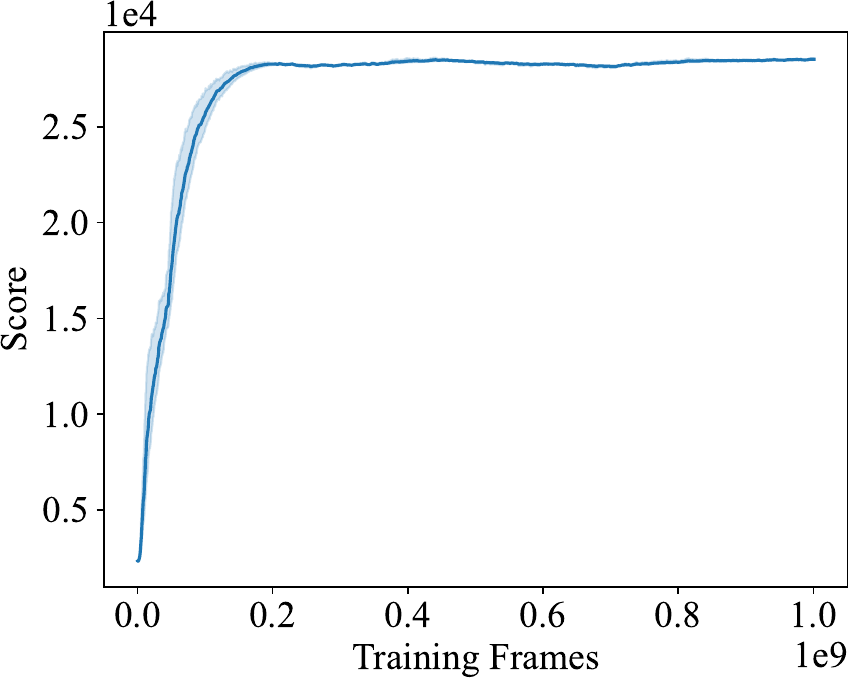}
    }
    \subfigure[Road\_Runner]{
    \includegraphics[width=0.3\textwidth,height=0.15\textheight]{./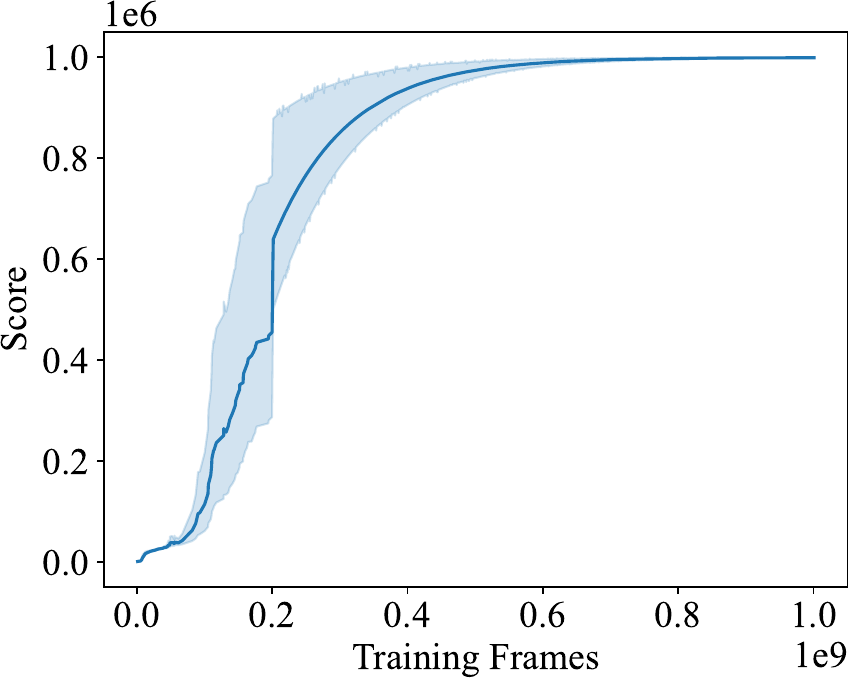}
    }
    \subfigure[Robotank]{
    \includegraphics[width=0.3\textwidth,height=0.15\textheight]{./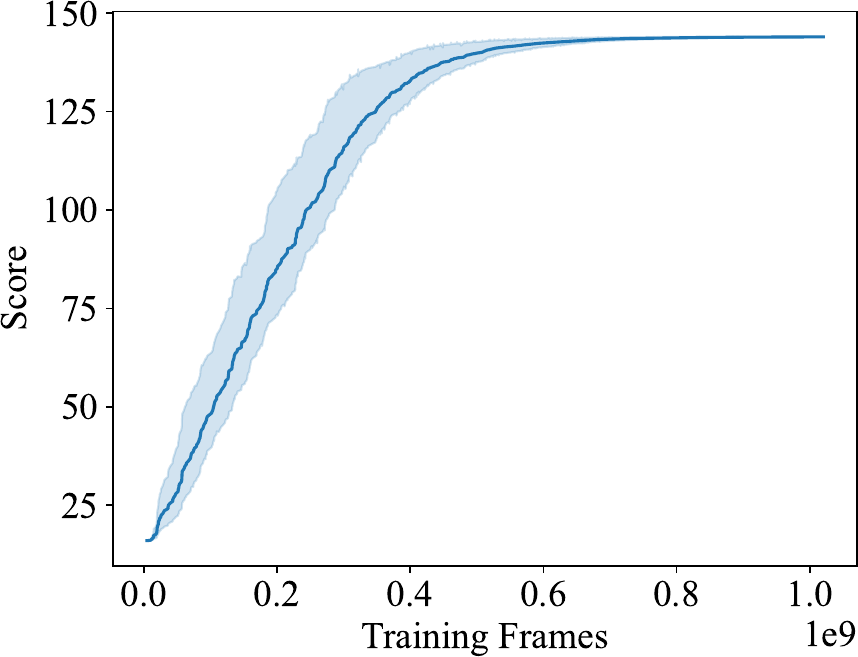}
    }
\end{figure}

\begin{figure}[!ht]
    \subfigure[Seaquest]{
    \includegraphics[width=0.3\textwidth,height=0.15\textheight]{./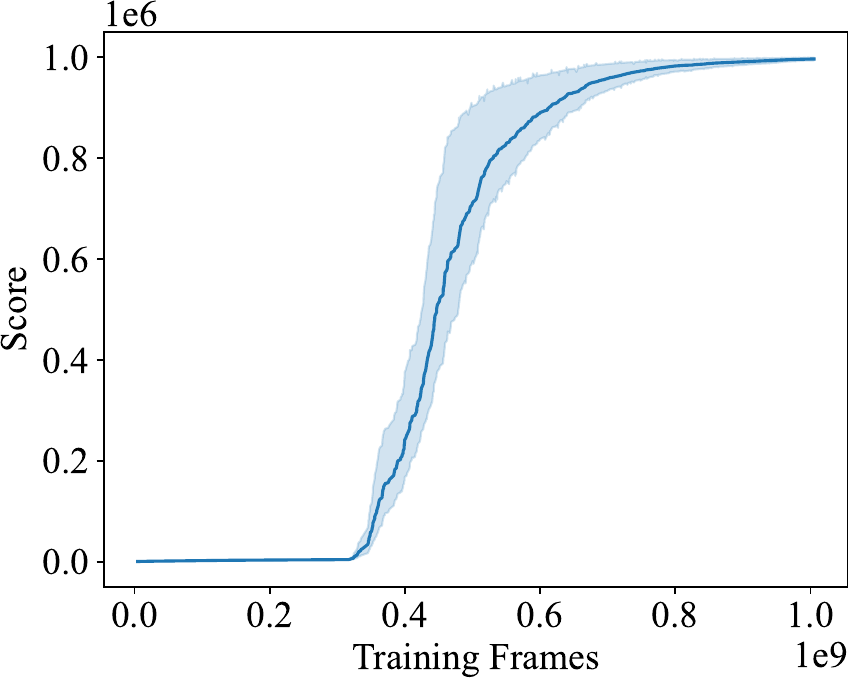}
    }
    \subfigure[Skiing]{
    \includegraphics[width=0.3\textwidth,height=0.15\textheight]{./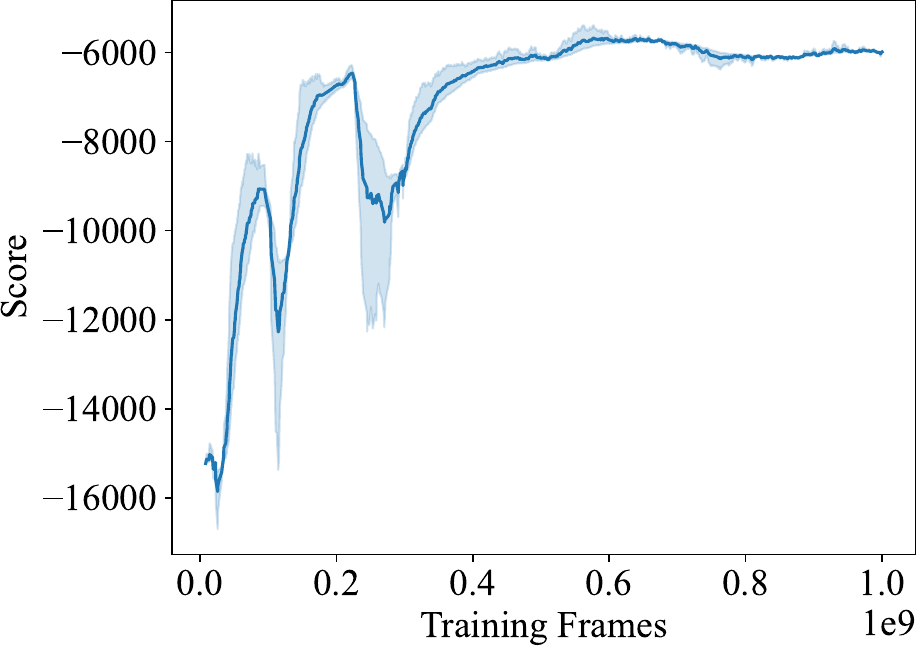}
    }
    \subfigure[Solaris]{
    \includegraphics[width=0.3\textwidth,height=0.15\textheight]{./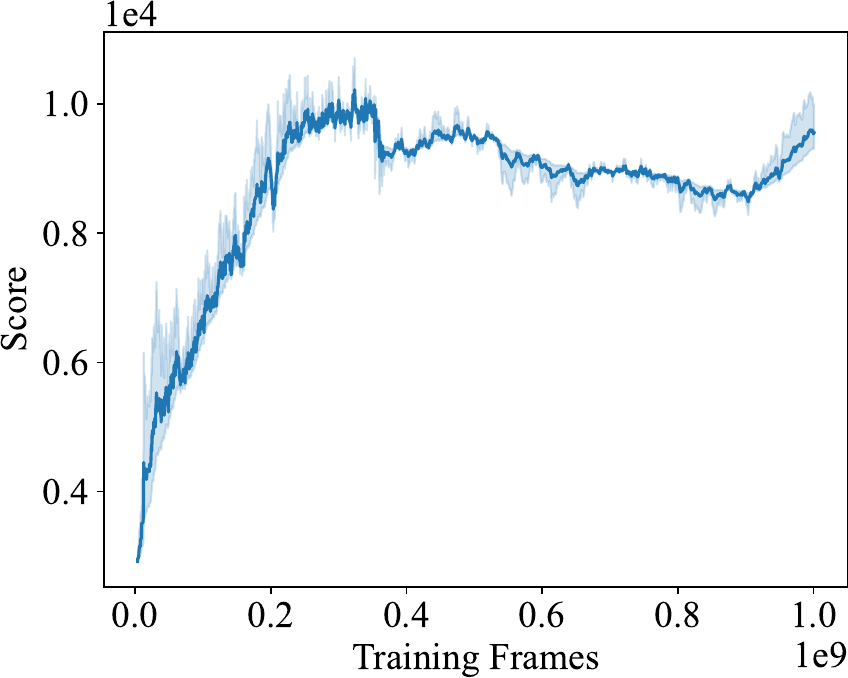}
    }
\end{figure}

\begin{figure}[!ht]
    \subfigure[Space\_Invaders]{
     \includegraphics[width=0.3\textwidth,height=0.15\textheight]{./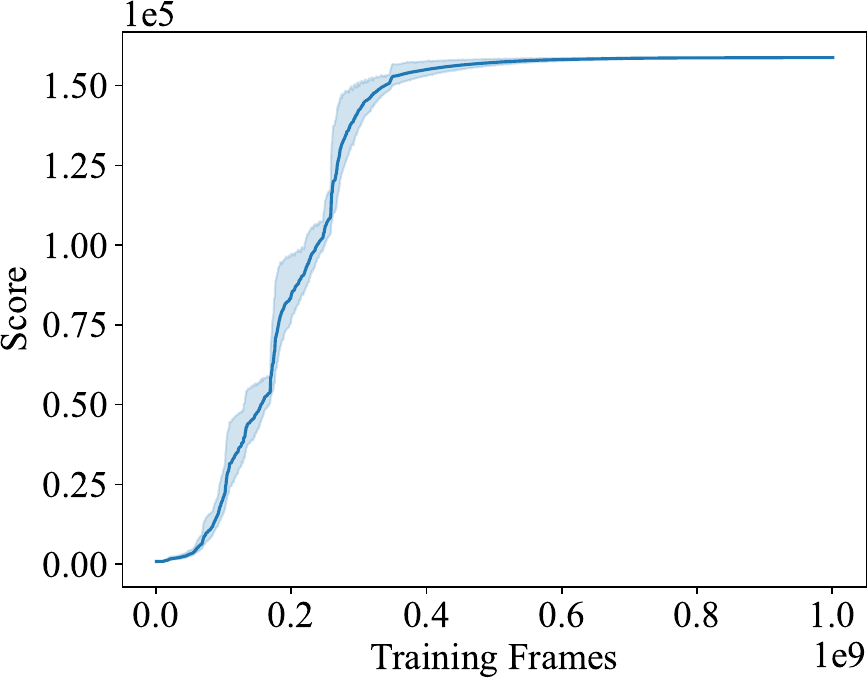}
    }
    \subfigure[Star\_Gunner]{
    \includegraphics[width=0.3\textwidth,height=0.15\textheight]{./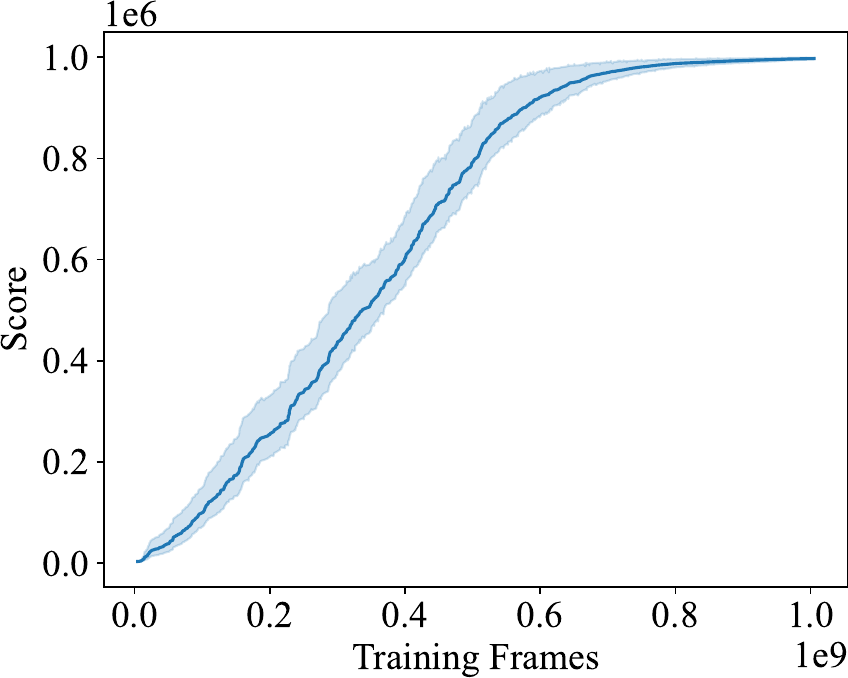}
    }
    \subfigure[Surround]{
    \includegraphics[width=0.3\textwidth,height=0.15\textheight]{./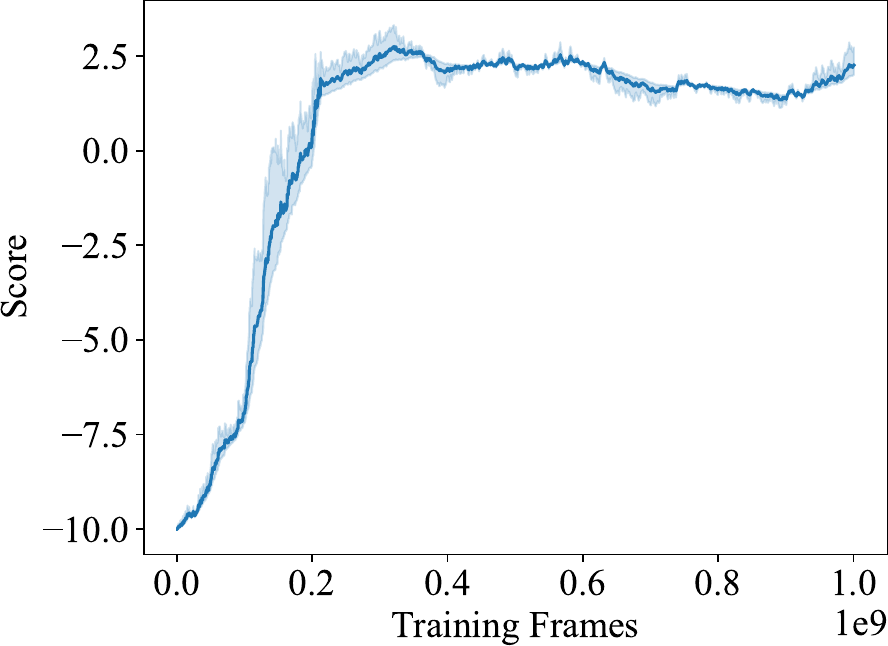}
    }
\end{figure}

\begin{figure}[!ht]
    \subfigure[Tennis]{
    \includegraphics[width=0.3\textwidth,height=0.15\textheight]{./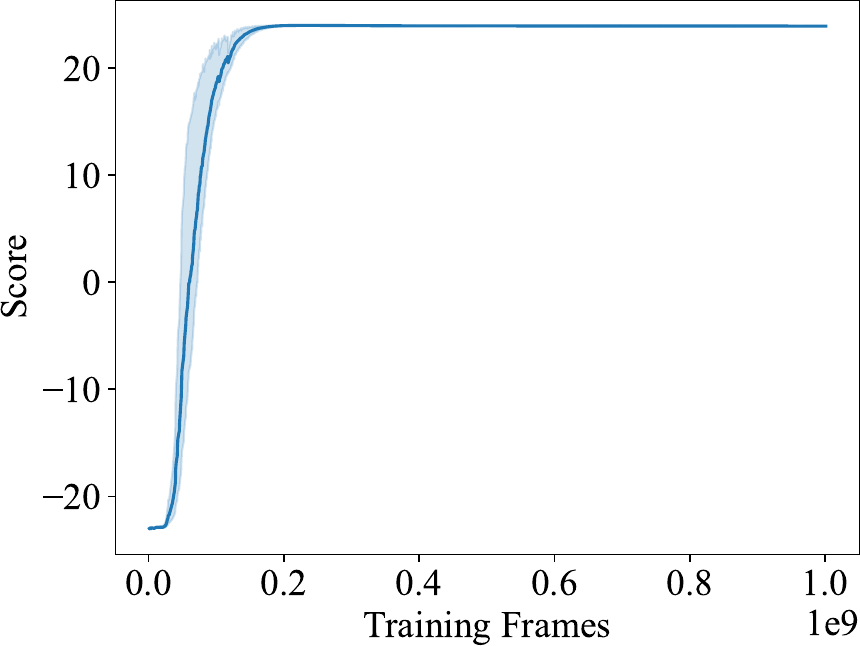}
    }
    \subfigure[Time\_Pilot]{
    \includegraphics[width=0.3\textwidth,height=0.15\textheight]{./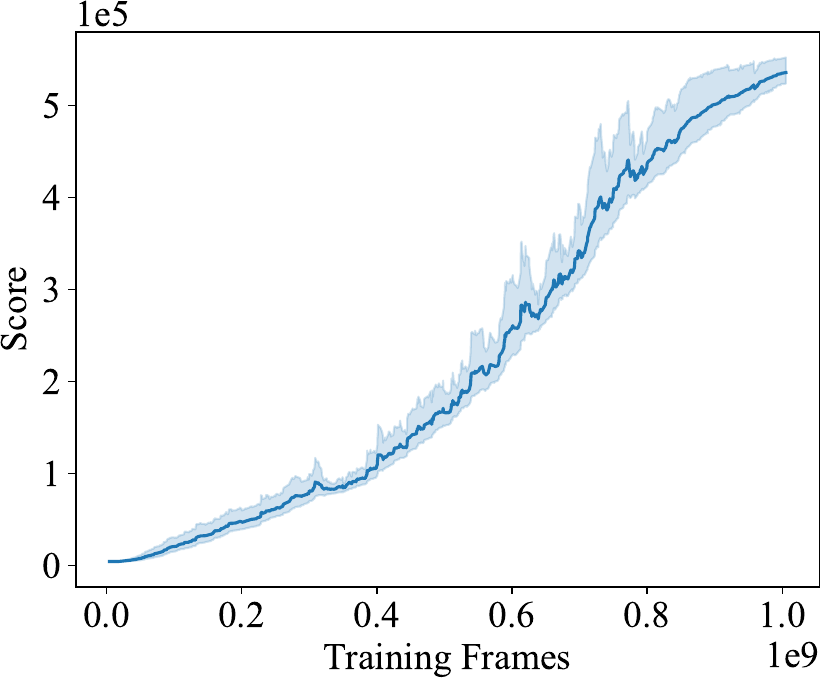}
    }
    \subfigure[Tutankham]{
    \includegraphics[width=0.3\textwidth,height=0.15\textheight]{./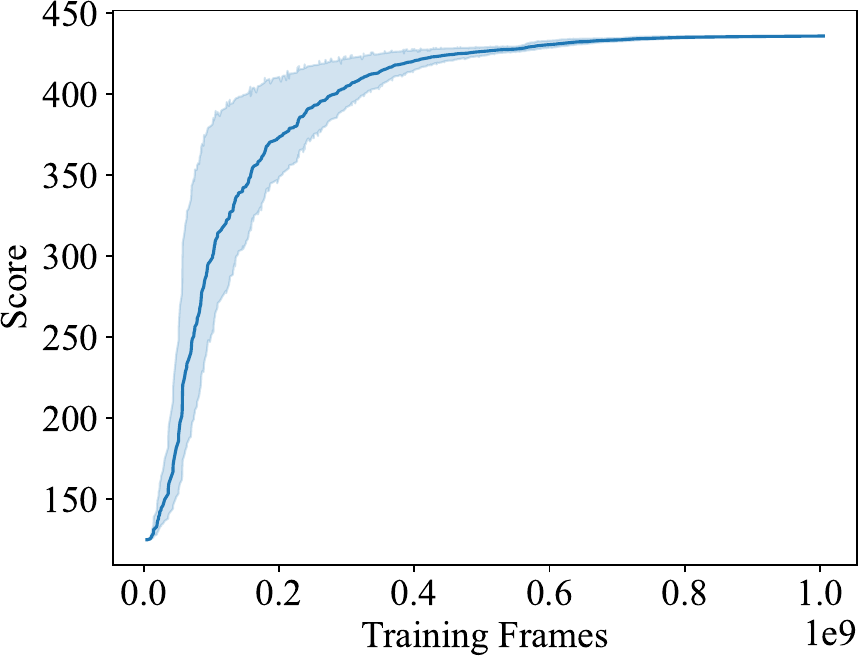}
    }
\end{figure}

\begin{figure}[!ht]
    \subfigure[Up\_N\_Down]{
    \includegraphics[width=0.3\textwidth,height=0.15\textheight]{./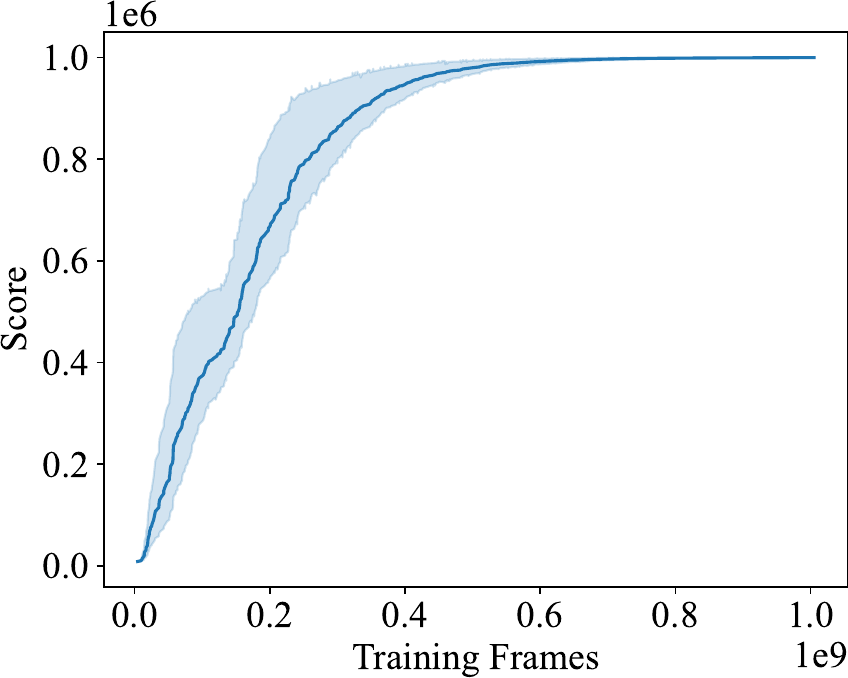}
    }
    \subfigure[Venture]{
    \includegraphics[width=0.3\textwidth,height=0.15\textheight]{./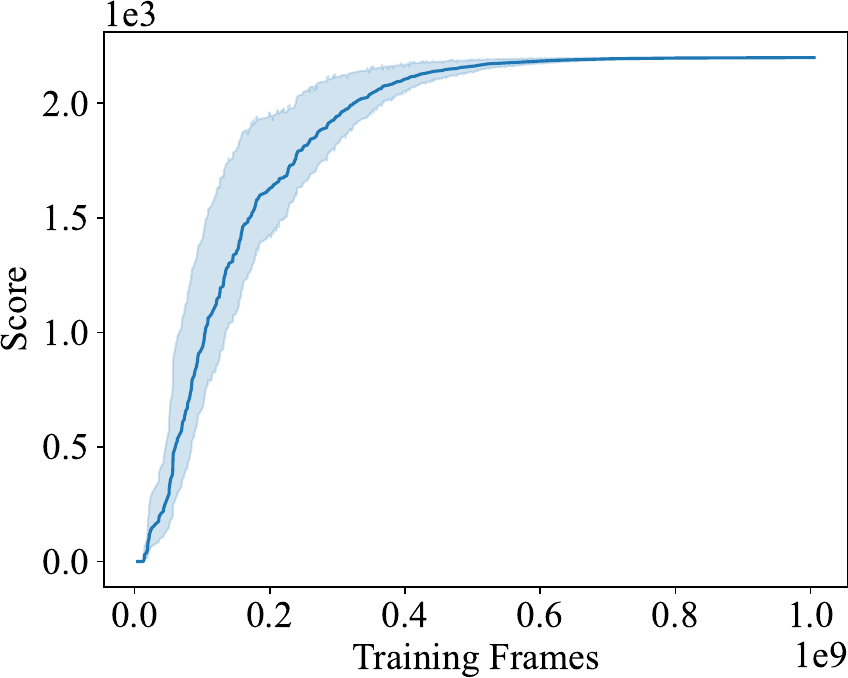}
    }
    \subfigure[Video\_Pinball]{
    \includegraphics[width=0.3\textwidth,height=0.15\textheight]{./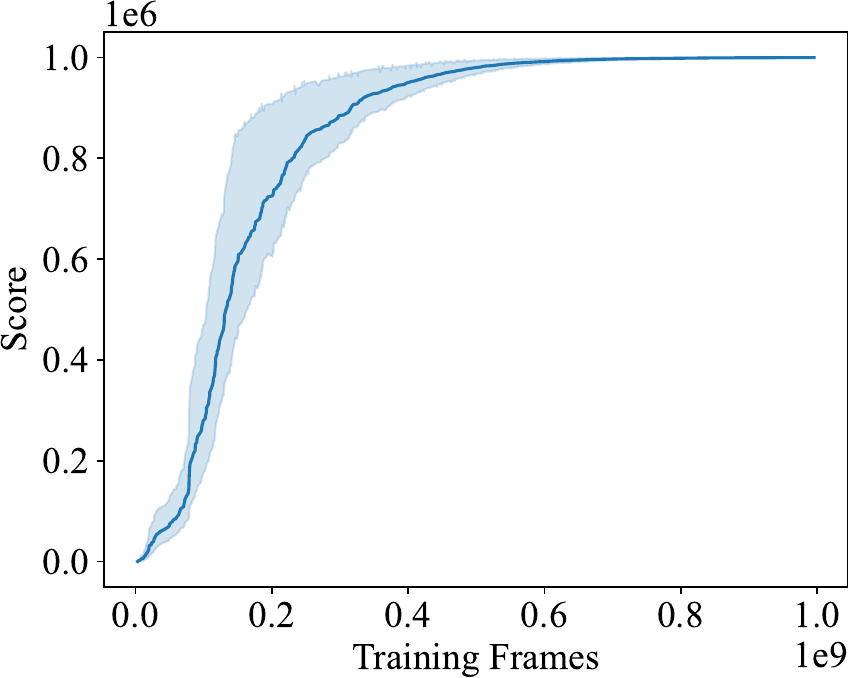}}
\end{figure}

\begin{figure}[!ht]
    \subfigure[Wizard\_of\_Wor]{
    \includegraphics[width=0.3\textwidth,height=0.15\textheight]{./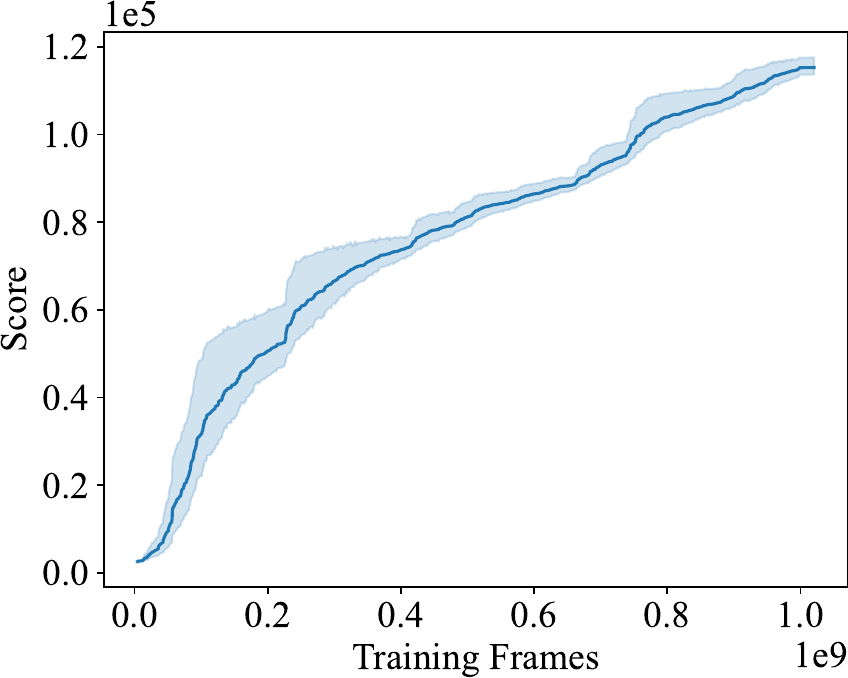}
    }
    \subfigure[Yars\_Revenge]{
    \includegraphics[width=0.3\textwidth,height=0.15\textheight]{./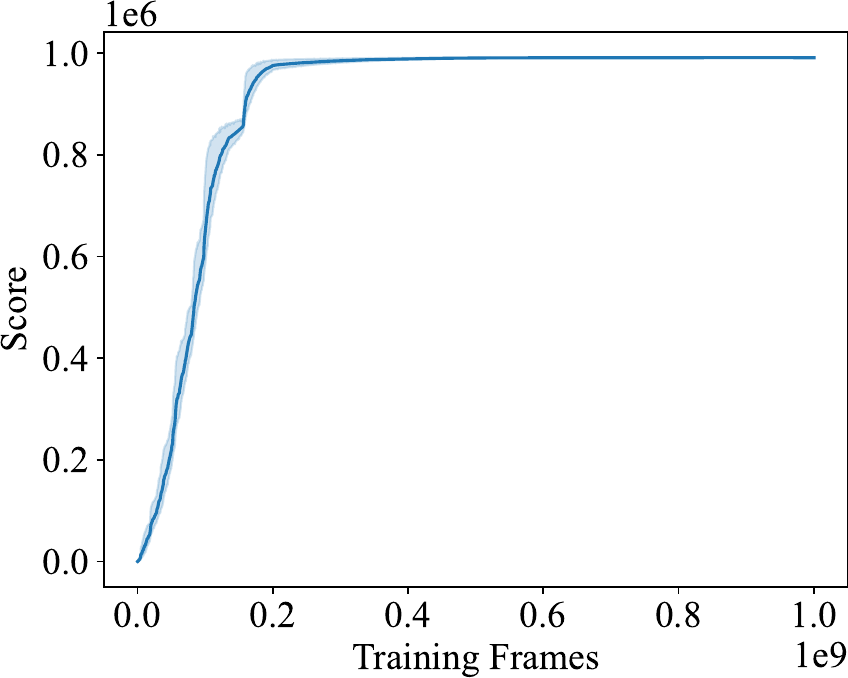}
    }
    \subfigure[Zaxxon]{
    \includegraphics[width=0.3\textwidth,height=0.15\textheight]{./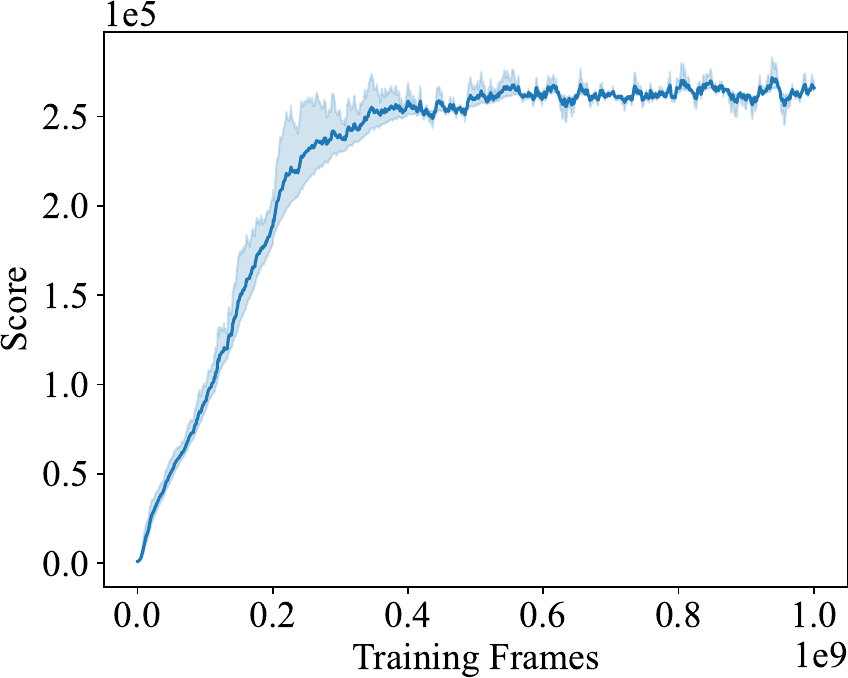}
    }
\end{figure}

\clearpage

\def\agentmeanhns{4762.17}
\def\agentmeanhnsle{4.76E-10}

\def\agentmedianhns{1933.49}
\def\agentmedianhnsle{1.93E-10}

\def\agentHWRB{18}
\def\agentHWRBle{1.80E-10}

\def\agentmeanHWRNS{125.92}
\def\agentmeanHWRNSle{1.26E-11}

\def\agentmedianHWRNS{43.62}
\def\agentmedianHWRNSle{4.36E-12}

\def\agentmeanSABER{76.26}
\def\agentmeanSABERle{7.63E-12}

\def\agentmedianSABER{43.62}
\def\agentmedianSABERle{4.36E-12}

\def\agentnumframes{1.00E+11}
\def\agentgametime{19290}

\def\muzeromeanhns{4994.97}
\def\muzeromeanhnsle{2.50E-09}

\def\muzeromedianhns{2041.12}
\def\muzeromedianhnsle{1.02E-09}

\def\muzeroHWRB{19}
\def\muzeroHWRBle{9.50E-10}

\def\muzeromeanHWRNS{152.10}
\def\muzeromeanHWRNSle{7.61E-11}

\def\muzeromedianHWRNS{49.80}
\def\muzeromedianHWRNSle{2.49E-11}

\def\muzeromeanSABER{71.94}
\def\muzeromeanSABERle{3.60E-11}

\def\muzeromedianSABER{49.80}
\def\muzeromedianSABERle{2.49E-11}

\def\muzeronumframes{2.00E+10}
\def\muzerogametime{3858}

\def\dreamermeanhns{642.49}
\def\dreamermeanhnsle{3.21E-08}

\def\dreamermedianhns{178.04}
\def\dreamermedianhnsle{8.90E-09}

\def\dreamerHWRB{3}
\def\dreamerHWRBle{1.50E-08}

\def\dreamermeanHWRNS{38.60}
\def\dreamermeanHWRNSle{1.93E-09}

\def\dreamermedianHWRNS{4.29}
\def\dreamermedianHWRNSle{2.14E-10}

\def\dreamermeanSABER{27.73}
\def\dreamermeanSABERle{1.39E-09}

\def\dreamermedianSABER{4.29}
\def\dreamermedianSABERle{2.14E-10}

\def\dreamernumframes{2.00E+08}
\def\dreamergametime{38.58 }

\def\simplemeanhns{194.3}
\def\simplemeanhnsle{2.58E-07}

\def\simplemedianhns{109}
\def\simplemedianhnsle{5.55E-08}

\def\simpleHWRB{0}
\def\simpleHWRBle{0.00E+00}

\def\simplemeanHWRNS{4.80}
\def\simplemeanHWRNSle{4.80E-08}

\def\simplemedianHWRNS{0.13}
\def\simplemedianHWRNSle{1.25E-09}

\def\simplemeanSABER{4.80}
\def\simplemeanSABERle{4.80E-08}

\def\simplemedianSABER{0.13}
\def\simplemedianSABERle{1.25E-09}

\def\simplenumframes{1.00E+06}
\def\simplegametime{0.19}

\def\mueslimeanhns{2538.12}
\def\mueslimeanhnsle{1.27E-07}

\def\mueslimedianhns{1077.47}
\def\mueslimedianhnsle{5.39E-08}

\def\muesliHWRB{5}
\def\muesliHWRBle{2.50E-08}

\def\mueslimeanHWRNS{75.52}
\def\mueslimeanHWRNSle{3.78E-09}

\def\mueslimedianHWRNS{24.86}
\def\mueslimedianHWRNSle{1.24E-09}

\def\mueslimeanSABER{48.74}
\def\mueslimeanSABERle{2.44E-09}

\def\mueslimedianSABER{24.86}
\def\mueslimedianSABERle{1.24E-09}

\def\mueslinumframes{2.00E+08}
\def\muesligametime{38.5}

\def\goexploremeanhns{4989.31}
\def\goexploremeanhnsle{4.99E-09}

\def\goexploremedianhns{1451.55}
\def\goexploremedianhnsle{1.45E-09}

\def\goexploreHWRB{15}
\def\goexploreHWRBle{1.50E-09}

\def\goexploremeanHWRNS{116.89}
\def\goexploremeanHWRNSle{1.17E-10}

\def\goexploremedianHWRNS{50.50}
\def\goexploremedianHWRNSle{5.05E-11}

\def\goexploremeanSABER{71.80}
\def\goexploremeanSABERle{7.18E-11}

\def\goexploremedianSABER{50.50}
\def\goexploremedianSABERle{5.05E-11}

\def\goexplorenumframes{1.00E+10}
\def\goexploregametime{1929}

\def\LBCHmeanhns{10077.52}
\def\LBCHmeanhnsle{4.81E-07 }

\def\LBCHmedianhns{1665.60}
\def\LBCHmedianhnsle{5.73E-08 }

\def\LBCHHWRB{24 }
\def\LBCHHWRBle{1.10E-07 }

\def\LBCHmeanHWRNS{154.27}
\def\LBCHmeanHWRNSle{7.71E-09 }

\def\LBCHmedianHWRNS{50.63}
\def\LBCHmedianHWRNSle{2.53E-09 }

\def\LBCHmeanSABER{71.26}
\def\LBCHmeanSABERle{3.56E-09 }

\def\LBCHmedianSABER{50.63}
\def\LBCHmedianSABERle{2.53E-09 }

\def\LBCHnumframes{1.00E+09 }
\def\LBCHgametime{192.5 }

\def\mememeanhns{4081.14}

\def\mememedianhns{1225.19}

\def\memeHWRB{24 }
\def\memeHWRBle{1.10E-07 }

\def\mememeanHWRNS{154.27}
\def\mememeanHWRNSle{7.71E-09 }

\def\mememedianHWRNS{50.63}
\def\mememedianHWRNSle{2.53E-09 }

\def\mememeanSABER{71.26}
\def\mememeanSABERle{3.56E-09 }

\def\mememedianSABER{50.63}
\def\mememedianSABERle{2.53E-09 }

\def\memenumframes{1.00E+09 }
\def\memegametime{192.5 }

\subsection{Atari Games Table of Scores Based on Human Average Scores}
\label{app: Atari Games Table of Scores Based on Human Average Records}
We present the raw score of several typical SOTA algorithms, including model-free SOTA algorithms, model-based SOTA algorithms, and additional SOTA algorithms. We provide the Human Normalized Scores (HNS) for each algorithm in the Atari 57 games in addition to presenting the raw score for each game. More details on these algorithms can see \cite{ale2,atarihuman,atari_review}.

\clearpage

\begin{table}[!hb]
\footnotesize
\begin{center}
\caption{Score table of  SOTA  model-free algorithms on HNS(\%).}
\label{Tab:Score table of SOTA  model-free algorithms on HNS.}
\setlength{\tabcolsep}{1.0pt}
\begin{tabular}{c c c c c c c cc}
\toprule
Games & RND & Average Human & AGENT57 & HNS(\%) & Ours & HNS(\%) & MEME & HNS(\%) \\
        Scale & ~ & ~ & 100B & ~ & 1B & ~ & 1B & ~\\ \midrule
        Alien & 227.8 & 7127.8 & \textbf{297638.17} & \textbf{4310.30\%} & 279703.5 & 4050.37\% & 83683.43&	1209.50\%\\  
        Amidar & 5.8 & 1719.5 & \textbf{29660.08} & \textbf{1730.42\%} & 12996.3 & 758.04\%  & 14368.9	&838.13\%\\  
        Assault & 222.4 & 742 & \textbf{67212.67} & \textbf{12892.66\%} & 62025.7 & 11894.40\% & 46635.86	& 8932.54\%\\  
        Asterix & 210 & 8503.3 & 991384.42 & 11951.51\% & \textbf{999999} & \textbf{12055.38\%} & 769803.92	& 9279.71\%\\  
        Asteroids & 719 & 47388.7 & 150854.61 & 321.70\% & \textbf{1106603.5} & \textbf{2369.60\%} & 364492.07	& 779.46\%\\  
        Atlantis & 12850 & 29028.1 & 1528841.76 & 9370.64\% & \textbf{3824506.3} & \textbf{23560.59\%} & 1669226.33	& 10238.39\%\\  
        Bank Heist & 14.2 & 753.1 & 23071.5& 3120.49\% & 1410 & 188.90\% & \textbf{87792.55}	& \textbf{11879.60\%}\\  
        Battle Zone & 236 & 37187.5 & \textbf{934134.88} & \textbf{2527.36\%} & 857369 & 2319.62\% & 776770	& 2101.50\%\\  
        Beam Rider & 363.9 & 16926.5 & 300509.8 & 1812.19\% & \textbf{457321} & \textbf{2758.97\%} & 51870.2 &	310.98\%\\  
        Berzerk & 123.7 & 2630.4 & \textbf{61507.83} & \textbf{2448.80\%} & 35340 & 1404.89\% & 38838.35	& 1544.45\%\\  
        Bowling & 23.1 & 160.7 & 251.18 & 165.76\% & 233.1 & 152.62\% & \textbf{261.74} &	\textbf{173.43\%}\\  
        Boxing & 0.1 & 12.1 & \textbf{100} & \textbf{832.50}\% & \textbf{100} & \textbf{832.50\%} & 99.85 &	831.25\%\\  
        Breakout & 1.7 & 30.5 & 790.4 & 2738.54\% & \textbf{864} & \textbf{2994.10\%} & 831.08 &	2879.79\% \\  
        Centipede & 2090.9 & 12017 & 412847.86 & 4138.15\% & \textbf{728080} & \textbf{7313.94\%} & 245892.18 & 2456.16\%\\  
        Chopper Command & 811 & 7387.8 & 999900 & 15191.11\% & \textbf{999999} & \textbf{15192.62\%} & 912225	&13858.02\%\\  
        Crazy Climber & 10780.5 & 36829.4 & \textbf{565909.85} & \textbf{2131.10\%} & 233090 & 853.43\% & 339274.67	& 1261.07\%\\  
        Defender & 2874.5 & 18688.9 & 677642.78 & 4266.80\% & \textbf{995950} & \textbf{6279.56\%} & 543979.5	& 3421.60\%\\  
        Demon Attack & 152.1 & 1971 & 143161.44 & 7862.41\% & \textbf{900170} & \textbf{49481.44\%} & 142176.58	& 7808.26\%\\  
        Double Dunk & -18.6 & -16.4 & 23.93 & 1933.18\% & \textbf{24} & \textbf{1936.36\%} & 23.7	&1922.73\%\\  
        Enduro & 0 & 860.5 & 2367.71 & 275.16\% & \textbf{14332.5} & \textbf{1665.60\%} & 2360.64	&274.33\%\\  
        Fishing Derby & -91.7 & -38.8 & \textbf{86.97} & \textbf{337.75\%} & 75 & 315.12\% & 77.05	&319.00\%\\  
        Freeway & 0 & 29.6 & 32.59 & 110.10\% & \textbf{34} & \textbf{114.86\%} & 33.97	& 114.76\%\\  
        Frostbite & 65.2 & 4334.7 & \textbf{541280.88} & \textbf{12676.32\%} & 13792.4 & 321.52\% & 526239.5	& 12324.03\%\\  
        Gopher & 257.6 & 2412.5 & 117777.08 & 5453.59\% & \textbf{488900} & \textbf{22675.87\%} & 119457.53	& 5531.58\%\\  
        Gravitar & 173 & 3351.4 & 19213.96 & 599.07\% & 6372.5 & 195.05\% & \textbf{20875}	& \textbf{651.33\%}\\  
        Hero & 1027 & 30826.4 & 114736.26 & 381.58\% & 37545.6 & 122.55\% & \textbf{199880.6}	& \textbf{667.31\%}\\  
        Ice Hockey & -11.2 & 0.9 & \textbf{63.64} & \textbf{618.51\%} & 47.53 & 485.37\% & 47.22	& 482.81\%\\  
        Jamesbond & 29 & 302.8 & 135784.96 & 49582.16\% & \textbf{623300.5} & \textbf{227637.51\%} & 117009.92	&42724.95\%\\  
        Kangaroo & 52 & 3035 & \textbf{24034.16} & \textbf{803.96\%} & 14372.6 & 480.07\% & 17311.17 & 578.58\%\\  
        Krull & 1598 & 2665.5 & 251997.31 & 23456.61\% & \textbf{593679.5} & \textbf{55464.31\%} & 155915.32 & 14455.96\%\\  
        Kung Fu Master & 258.5 & 22736.3 & 206845.82 & 919.07\% & \textbf{1666665} & \textbf{7413.57\%} & 476539.53	& 2118.90\%\\  
        Montezuma Revenge & 0 & 4753.3 & 9352.01 & 196.75\%& 2500 & 52.60\% & \textbf{12437}	& \textbf{261.65\%}\\  
        Ms Pacman & 307.3 & 6951.6 & \textbf{63994.44} & \textbf{958.52\%} & 31403 & 468.01\% & 29747.91	& 443.10\%\\  
        Name This Game & 2292.3 & 8049 & 54386.77 & 904.94\% & \textbf{81473} & \textbf{1375.45\%} & 40077.73	& 656.37\%\\  
        Phoenix & 761.5 & 7242.6 & 908264.15 & 14002.29\% & \textbf{999999} & \textbf{15417.71\%} & 849969.25	&13102.83\%\\  
        Pitfall & -229.4 & 6463.7 & 18756.01 & 283.66\% & -1 & 3.41\% & \textbf{46734.79}	& \textbf{701.68\%}\\  
        Pong & -20.7 & 14.6 & 20.67 & 117.20\% & \textbf{21} & \textbf{118.13\%} & 19.31	& 113.34\%\\  
        Private Eye & 24.9 & 69571.3 & 79716.46 & 114.59\% & 15100 & 21.68\% & \textbf{100798.9}	& \textbf{144.90\%}\\  
        Qbert & 163.9 & 13455 & \textbf{580328.14} & \textbf{4365.06\%} & 151730 & 1140.36\% & 238453.5	& 1792.85\%\\  
        Riverraid & 1338.5 & 17118 & 63318.67 & 392.79\% & 27964.3 & 168.74\% & \textbf{90333.12}	& \textbf{563.99\%}\\  
        Road Runner & 11.5 & 7845 & 243025.8 & 3102.24\% & \textbf{999999} & \textbf{12765.53\%} & 399511.83	& 5099.90\%\\  
        Robotank & 2.2 & 11.9 & 127.32 & 1289.90\% & \textbf{144} & \textbf{1461.86\%} & 114.46	& 1157.32\%\\  
        Seaquest & 68.4 & 42054.7 & 999997.63 & 2381.56\% & \textbf{1000000} & \textbf{2381.57\%} & 960181.39	& 2286.73\%\\  
        Skiing & -17098 & -4336.9 & -4202.6 &101.05\%& -5903.34 & 87.72\% & \textbf{-3273.43}	&\textbf{108.33\%}\\  
        Solaris & 1236.3 & 12326.7 & \textbf{44199.93} & \textbf{387.39\%} & 10732.5 & 85.63\% & 28175.53	& 242.91\%\\  
        Space Invaders & 148 & 1668.7 & 48680.86 & 3191.48\% & \textbf{159999.6} & \textbf{10511.71\%} & 57828.45	& 3793.02\%\\  
        Star Gunner & 664 & 10250 & 839573.53 & 8751.40\% & \textbf{999999} & \textbf{10424.94\%} & 264286.33	&2750.08\%\\  
        Surround & -10 & 6.5 & 9.5 & 118.18\% & 2.726 & 77.13\% & \textbf{9.82}	&\textbf{120.12\%}\\  
        Tennis & -23.8 & -8.3 & 23.84 & 307.35\% & \textbf{24} & \textbf{308.39\%} & 22.79	&300.58\%\\  
        Time Pilot & 3568 & 5229.2 & 405425.31 & 24190.78\% & \textbf{531614} & \textbf{31787.02\%} & 404751.67	&24150.23\%\\  
        Tutankham & 11.4 & 167.6 & \textbf{2354.91} & \textbf{1500.33\%} & 436.2 & 271.96\% & 1030.27	& 652.29\%\\  
        Up N Down & 533.4 & 11693.2 & 623805.73 & 5584.98\% & \textbf{999999} & \textbf{8955.95\%} & 524631	&4696.30\%\\  
        Venture & 0 & 1187.5 & 2623.71 & 220.94\% & 2200 & 185.26\% & \textbf{2859.83}	&\textbf{240.83\%}\\  
        Video Pinball & 0 & 17667.9 & 992340.74 & 5616.63\% & \textbf{999999} & \textbf{5659.98\%} & 617640.95	&3495.84\%\\  
        Wizard of Wor & 563.5 & 4756.5 & \textbf{157306.41} & \textbf{3738.20\%} & 118900 & 2822.24\% & 71942	& 1702.33\%\\  
        Yars Revenge & 3092.9 & 54576.9 & 998532.37 & 1933.49\% & \textbf{998970} & \textbf{1934.34\%}& 633867.66	& 1225.19\% \\  
        Zaxxon & 32.5 & 9173.3 & \textbf{249808.9} & \textbf{2732.54\%} & 241570.6 & 2642.42\% & 77942.17	&852.33\%\\
        \midrule
         Mean HNS & & & &   \agentmeanhns \%  & & \textbf{\LBCHmeanhns\%}  & & \mememeanhns\\ 
         Median HNS & & & & \textbf{\agentmedianhns\%} & & \LBCHmedianhns\%  & & \mememedianhns\\ 
         \bottomrule
\end{tabular}
\end{center}
\end{table}

\clearpage

\begin{table}[!hb]
\footnotesize
\begin{center}
\caption{Score table of  SOTA  model-based algorithms on HNS(\%).}
\label{Tab:Score table of SOTA  model-based algorithms on HNS.}
\setlength{\tabcolsep}{1.0pt}
\begin{tabular}{c cc cc cc }
\toprule
 Games & MuZero & HNS(\%) & EfficientZero & HNS(\%) & Ours & HNS(\%) \\ 
        Scale & 20B & ~ & 100K & ~ & 1B & ~ \\ 
        \midrule
        
        Alien & \textbf{741812.63} & \textbf{10747.61\%} & 808.5 & 8.42\% & 279703.5 & 4050.37\% \\  
        Amidar & \textbf{28634.39} & \textbf{1670.57\%} & 148.6 & 8.33\% & 12996.3 & 758.04\% \\  
        Assault & \textbf{143972.03} & \textbf{27665.44\%} & 1263.1 & 200.29\% & 62025.7 & 11894.40\% \\  
        Asterix & 998425 & 12036.40\% & 25557.8 & 305.64\% & \textbf{999999} & \textbf{12055.38\%} \\  
        Asteroids & 678558.64 & 1452.42\% & N/A & N/A & \textbf{1106603.5} & \textbf{2369.60\%} \\  
        Atlantis & 1674767.2 & 10272.64\% & N/A & N/A & \textbf{3824506.3} & \textbf{23560.59\%} \\  
        Bank Heist & 1278.98 & 171.17\% & 351 & 45.58\% & \textbf{1410} & \textbf{188.90\%} \\  
        Battle Zone & 848623 & 2295.95\% & 13871.2 & 36.90\% & \textbf{857369} & \textbf{2319.62\%} \\  
        Beam Rider & 454993.53 & 2744.92\% & N/A & N/A & \textbf{457321} & \textbf{2758.97\%} \\  
        Berzerk & \textbf{85932.6} & \textbf{3423.18\%} & N/A & N/A & 35340 & 1404.89\% \\  
        Bowling & \textbf{260.13} & \textbf{172.26\%} & N/A & N/A & 233.1 & 152.62\% \\  
        Boxing & \textbf{100} & \textbf{832.50\%} & 52.7 & 438.33\% & \textbf{100} & \textbf{832.50\%} \\  
        Breakout & \textbf{864} & \textbf{2994.10\%} & 414.1 & 1431.94\% & \textbf{864} & \textbf{2994.10\%} \\  
        Centipede & \textbf{1159049.27} & \textbf{11655.72\%} & N/A & N/A & 728080 & 7313.94\% \\  
        Chopper Command & 991039.7 & 15056.39\% & 1117.3 & 4.66\% & \textbf{999999} & \textbf{15192.62\%} \\  
        Crazy Climber & \textbf{458315.4} & \textbf{1718.06\%} & 83940.2 & 280.86\% & 233090 & 853.43\% \\  
        Defender & 839642.95 & 5291.18\% & N/A & N/A & \textbf{995950} & \textbf{6279.56\%} \\  
        Demon Attack & 143964.26 & 7906.55\% & 13003.9 & 706.57\% & \textbf{900170} & \textbf{49481.44\%} \\  
        Double Dunk & 23.94 & 1933.64\% & N/A & N/A & \textbf{24} & \textbf{1936.36\%} \\  
        Enduro & 2382.44 & 276.87\% & N/A & N/A & \textbf{14332.5} & \textbf{1665.60\%} \\  
        Fishing Derby & \textbf{91.16} & \textbf{345.67\%} & N/A & N/A & 75 & 315.12\% \\  
        Freeway & 33.03 & 111.59\% & 21.8 & 73.65\% & \textbf{34} & \textbf{114.86\%} \\  
        Frostbite & \textbf{631378.53} & \textbf{14786.59\%} & 296.3 & 5.41\% & 13792.4 & 321.52\% \\  
        Gopher & 130345.58 & 6036.85\% & 3260.3 & 139.34\% & \textbf{488900} & \textbf{22675.87\%} \\  
        Gravitar & \textbf{6682.7} & \textbf{204.81\%} & N/A & N/A & 6372.5 & 195.05\% \\  
        Hero & \textbf{49244.11} & \textbf{161.81\%} & 3915.9 & 9.69\% & 37545.6 & 122.55\% \\  
        Ice Hockey & \textbf{67.04} & \textbf{646.61\%} & N/A & N/A & 47.53 & 485.37\% \\  
        Jamesbond & 41063.25 & 14986.94\% & 517 & 178.23\% & \textbf{623300.5} & \textbf{227637.51\%} \\  
        Kangaroo & \textbf{16763.6} & \textbf{560.23\%} & 724.1 & 22.53\% & 14372.6 & 480.07\% \\  
        Krull & 269358.27 & 25082.93\% & 5663.3 & 380.82\% & \textbf{593679.5} & \textbf{55464.31\%} \\  
        Kung Fu Master & 204824 & 910.08\% & 30944.8 & 136.52\% & \textbf{1666665} & \textbf{7413.57\%} \\  
        Montezuma Revenge & 0 & 0.00\% & N/A & N/A & \textbf{2500} & \textbf{52.60\%} \\  
        Ms Pacman & \textbf{243401.1} & \textbf{3658.68\%} & 1281.2 & 14.66\% & 31403 & 468.01\% \\  
        Name This Game & \textbf{157177.85} & \textbf{2690.53\%} & N/A & N/A & 81473 & 1375.45\% \\  
        Phoenix & 955137.84 & 14725.53\% & N/A & N/A & \textbf{999999} & \textbf{15417.71\%} \\  
        Pitfall & \textbf{0} & \textbf{3.43\%} & N/A & N/A & -1 & 3.41\% \\  
        Pong & \textbf{21} & \textbf{118.13\%} & 20.1 & 115.58\% & \textbf{21} & \textbf{118.13\%} \\  
        Private Eye & \textbf{15299.98} & \textbf{21.96\%} & 96.7 & 0.10\% & 15100 & 21.68\% \\  
        Qbert & 72276 & 542.56\% & 14448.5 & 107.47\% & \textbf{151730} & \textbf{1140.36\%} \\  
        Riverraid & \textbf{323417.18} & \textbf{2041.12\%} & N/A & N/A & 27964.3 & 168.74\% \\  
        Road Runner & 613411.8 & 7830.48\% & 17751.3 & 226.46\% & \textbf{999999} & \textbf{12765.53\%} \\  
        Robotank & 131.13 & 1329.18\% & N/A & N/A & \textbf{144} & \textbf{1461.86\%} \\  
        Seaquest & 999976.52 & 2381.51\% & 1100.2 & 2.46\% & \textbf{1000000}& \textbf{2381.57\%} \\  
        Skiing & -29968.36 & -100.86\% & N/A & N/A & \textbf{-5903.34} & \textbf{87.72\%} \\  
        Solaris & 56.62 & -10.64\% & N/A & N/A & \textbf{10732.5} & \textbf{85.63\%} \\  
        Space Invaders & 74335.3 & 4878.50\% & N/A & N/A & \textbf{159999.6} & \textbf{10511.71\%} \\  
        Star Gunner & 549271.7 & 5723.01\% & N/A & N/A & \textbf{999999} & \textbf{10424.94\%} \\  
        Surround & \textbf{9.99} & \textbf{121.15\%} & N/A & N/A & 2.726 & 77.13\% \\  
        Tennis & 0 & 153.55\% & N/A & N/A & \textbf{24} & \textbf{308.39\%} \\  
        Time Pilot & \textbf{476763.9} & \textbf{28485.19\%} & N/A & N/A & 531614 & 31787.02\% \\  
        Tutankham & \textbf{491.48} & \textbf{307.35\%} & N/A & N/A & 436.2 & 271.96\% \\  
        Up N Down & 715545.61 & 6407.03\% & 17264.2 & 149.92\% & \textbf{999999} & \textbf{8955.95\%} \\  
        Venture & 0.4 & 0.03\% & N/A & N/A & \textbf{2200} & \textbf{185.26\%} \\  
        Video Pinball & 981791.88 & 5556.92\% & N/A & N/A & \textbf{999999} & \textbf{5659.98\%} \\  
        Wizard of Wor & \textbf{197126} & \textbf{4687.87\%} & N/A & N/A & 118900 & 2822.24\% \\  
        Yars Revenge & 553311.46 & 1068.72\% & N/A & N/A & \textbf{998970} & \textbf{1934.34\%} \\  
        Zaxxon & \textbf{725853.9} & \textbf{7940.46\%} & N/A & N/A & 241570.6 & 2642.42\% \\ \midrule
         Mean HNS & &\muzeromeanhns\% & &   \simplemeanhns\%  & & \textbf{\LBCHmeanhns\%} \\ 
         Median HNS & &\textbf{\muzeromedianhns\%} & &  \simplemedianhns\% & & \LBCHmedianhns\%  \\
         \bottomrule
\end{tabular}
\end{center}
\end{table}

\clearpage  
\begin{table}[!hb]
\footnotesize
\begin{center}
\caption{Score table of  SOTA exploration-based algorithms on HNS(\%).}
\label{Tab:Score table of SOTA  exploration-based algorithms on HNS.}
\setlength{\tabcolsep}{1.0pt}
\begin{tabular}{c cc cc }
\toprule
Games & Go-Explore & HNS & Ours & HNS   \\ 
        Scale & 10B & ~ & 1B &    \\  \midrule
                Alien & \textbf{959312} & \textbf{13899.77\%} & 279703.5 & 4050.37\% \\  
        Amidar & \textbf{19083} & \textbf{1113.22\%} & 12996.3 & 758.04\% \\  
        Assault & 30773 & 5879.64\% & \textbf{62025.7} & \textbf{11894.40\%} \\  
        Asterix & 999500 & 12049.37\% & \textbf{999999} & \textbf{12055.38\%} \\  
        Asteroids & 112952 & 240.48\% & \textbf{1106603.5} & \textbf{2369.60\%} \\  
        Atlantis & 286460 & 1691.24\% & \textbf{3824506.3} & \textbf{23560.59\%} \\  
        Bank Heist & \textbf{3668} & \textbf{494.49\%} & 1410 & 188.90\% \\  
        Battle Zone & \textbf{998800} & \textbf{2702.36\%} & 857369 & 2319.62\% \\  
        Beam Rider & 371723 & 2242.15\% & \textbf{457321} & \textbf{2758.97\%} \\  
        Berzerk & \textbf{131417} & \textbf{5237.69\%} & 35340 & 1404.89\% \\  
        Bowling & \textbf{247} & \textbf{162.72\%} & 233.1 & 152.62\% \\  
        Boxing & 91 & 757.50\% & \textbf{100} & \textbf{832.50\%} \\  
        Breakout & 774 & 2681.60\% & \textbf{864} & \textbf{2994.10\%} \\  
        Centipede & 613815 & 6162.78\% & \textbf{728080} & \textbf{7313.94\%} \\  
        Chopper Command & 996220 & 15135.16\% & \textbf{999999} & \textbf{15192.62\%} \\  
        Crazy Climber & \textbf{235600} & \textbf{863.07\%} & 233090 & 853.43\% \\  
        Defender & N/A & N/A & \textbf{995950} & \textbf{6279.56\%} \\  
        Demon Attack & 239895 & 13180.65\% & \textbf{900170} & \textbf{49481.44\%} \\  
        Double Dunk & \textbf{24} & \textbf{1936.36\%} & \textbf{24} & \textbf{1936.36\%} \\  
        Enduro & 1031 & 119.81\% & \textbf{14332.5} & \textbf{1665.60\%} \\  
        Fishing Derby & 67 & 300.00\% & \textbf{75} & \textbf{315.12\%} \\  
        Freeway & \textbf{34} & \textbf{114.86\%} & \textbf{34} & \textbf{114.86\%} \\  
        Frostbite & \textbf{999990} & \textbf{23420.19\%} & 13792.4 & 321.52\% \\  
        Gopher & 134244 & 6217.75\% & \textbf{488900} & \textbf{22675.87\%} \\  
        Gravitar & \textbf{13385} & \textbf{415.68\%} & 6372.5 & 195.05\% \\  
        Hero & \textbf{37783} & \textbf{123.34\%} & 37545.6 & 122.55\% \\  
        Ice Hockey & 33 & 365.29\% & \textbf{47.53} & \textbf{485.37\%} \\  
        Jamesbond & 200810 & 73331.26\% & \textbf{623300.5} & \textbf{227637.51\%} \\  
        Kangaroo & \textbf{24300} & \textbf{812.87\%} & 14372.6 & 480.07\% \\  
        Krull & 63149 & 5765.90\% & \textbf{593679.5} &\textbf{ 55464.31\%} \\  
        Kung Fu Master & 24320 & 107.05\% & \textbf{1666665} & \textbf{7413.57\%} \\  
        Montezuma Revenge & \textbf{24758} & \textbf{520.86\%} & 2500 & 52.60\% \\  
        Ms Pacman & \textbf{456123} & \textbf{6860.25\%} & 31403 & 468.01\% \\  
        Name This Game & \textbf{212824} & \textbf{3657.16\%} & 81473 & 1375.45\% \\  
        Phoenix & 19200 & 284.50\% & \textbf{999999} & \textbf{15417.71\%} \\  
        Pitfall & \textbf{7875} & \textbf{121.09\%} & -1 & 3.41\% \\  
        Pong & \textbf{21} & \textbf{118.13\%} & \textbf{21} & \textbf{118.13\%} \\  
        Private Eye &\textbf{ 69976} & \textbf{100.58\%} & 15100 & 21.68\% \\  
        Qbert & \textbf{999975} & \textbf{7522.41\%} & 151730 & 1140.36\% \\  
        Riverraid & \textbf{35588} & \textbf{217.05\%} & 27964.3 & 168.74\% \\  
        Road Runner & 999900 & 12764.26\% & \textbf{999999} & \textbf{12765.53\%} \\  
        Robotank & 143 & 1451.55\% & \textbf{144} & \textbf{1461.86\%} \\  
        Seaquest & 539456 & 1284.68\% & \textbf{1000000} & \textbf{2381.57\%} \\  
        Skiing & \textbf{-4185} & \textbf{101.19\%} & -5903.34 & 87.72\% \\  
        Solaris & \textbf{20306} & \textbf{171.95\%} & 10732.5 & 85.63\% \\  
        Space Invaders & 93147 & 6115.54\% & \textbf{159999.6} & \textbf{10511.71\%} \\  
        Star Gunner & 609580 & 6352.14\% & \textbf{999999} & \textbf{10424.94\%} \\  
        Surround & N/A & N/A & \textbf{2.726} & \textbf{77.13\%} \\  
        Tennis & \textbf{24} & \textbf{308.39\%} & \textbf{24} & \textbf{308.39\%} \\  
        Time Pilot & 183620 & 10838.67\% & \textbf{531614} & \textbf{31787.02\%} \\  
        Tutankham & \textbf{528} & \textbf{330.73\%} & 436.2 & 271.96\% \\  
        Up N Down & 553718 & 4956.94\% & \textbf{999999} & \textbf{8955.95\%} \\  
        Venture & \textbf{3074} & \textbf{258.86\%} & 2200 & 185.26\% \\  
        Video Pinball & \textbf{999999} & \textbf{5659.98\%} & \textbf{999999} & \textbf{5659.98\%} \\  
        Wizard of Wor & \textbf{199900} & \textbf{4754.03}\% & 118900 & 2822.24\% \\  
        Yars Revenge & \textbf{999998} & \textbf{1936.34\%} & 998970 & 1934.34\% \\  
        Zaxxon & 18340 & 200.28\% & \textbf{241570.6} & \textbf{2642.42\%} \\ \midrule
                 Mean HNS & &\goexploremeanhns\%   & & \textbf{\LBCHmeanhns\%} \\ 
         Median HNS & &\goexploremedianhns\%& & \textbf{\LBCHmedianhns\%}  \\ 
         \bottomrule
\end{tabular}
\end{center}
\end{table}

\clearpage

\begin{table}[!hb]
\footnotesize
\begin{center}
\caption{Score Table of GDI-H$^3$ and LBC-$\mathcal{BM}$ (Ours) on HNS(\%).}
\label{Tab:Score table of GDI and LBC on HNS.}
\setlength{\tabcolsep}{1.0pt}
\begin{tabular}{c cc cc }
\toprule
Games & GDI-H$^3$ & HNS & Ours & HNS   \\ 
        Scale & 200M & ~ & 1B &    \\  \midrule
        Alien & 48735	             &703.00\% & \textbf{279703.5} & \textbf{4050.37\%} \\  
        Amidar &1065              &61.81\% & \textbf{12996.3} & \textbf{758.04\%} \\  
        Assault &\textbf{97155}	             &\textbf{18655.23\%}  & 62025.7 & 11894.40\% \\  
        Asterix &\textbf{{999999}}   &\textbf{{12055.38\%}} & \textbf{999999} & \textbf{12055.38\%} \\  
        Asteroids &{760005}            &{1626.94\%}  & \textbf{1106603.5} & \textbf{2369.60\%} \\  
        Atlantis &\textbf{{3837300}}           &\textbf{{23639.67\%}}   & 3824506.3 & 23560.59\% \\  
        Bank Heist &1380              &184.84\%  & \textbf{1410} & \textbf{188.90\%} \\  
        Battle Zone &824360            &2230.29\% & \textbf{857369} & \textbf{2319.62\%} \\  
        Beam Rider &422390            &2548.07\% & \textbf{457321} & \textbf{2758.97\%} \\  
        Berzerk &14649             &579.46\% & \textbf{35340} & \textbf{1404.89\%} \\  
        Bowling &205.2             &132.34\% & \textbf{233.1} & \textbf{152.62\%} \\  
        Boxing &\textbf{{100}}      &\textbf{{832.50\%}}   & \textbf{100} & \textbf{832.50\%} \\  
        Breakout &\textbf{{864}}      &\textbf{{2994.10\%}} & \textbf{864} & \textbf{2994.10\%} \\  
        Centipede &195630            &1949.80\% & \textbf{728080} & \textbf{7313.94\%} \\  
        Chopper Command &\textbf{{999999}}   &\textbf{{15192.62\%}} & \textbf{999999} & \textbf{15192.62\%} \\  
        Crazy Climber &\textbf{241170}	            &\textbf{919.76\%} & 233090 & 853.43\% \\  
        Defender &{970540}   &{6118.89\%} & \textbf{995950} & \textbf{6279.56\%} \\  
        Demon Attack &{787985}    &{43313.70\%} & \textbf{900170} & \textbf{49481.44\%} \\  
        Double Dunk &\textbf{{24 }}      &\textbf{{1936.36\%}} & \textbf{24} & \textbf{1936.36\%} \\  
        Enduro &14300             &1661.82\% & \textbf{14332.5} & \textbf{1665.60\%} \\  
        Fishing Derby &65               &296.22\% & \textbf{75} & \textbf{315.12\%} \\  
        Freeway &\textbf{{34}}      &\textbf{{114.86\%}} & \textbf{34} & \textbf{114.86\%} \\  
        Frostbite &11330	            &263.84\% & \textbf{13792.4} & \textbf{321.52\%} \\  
        Gopher &473560           &21964.01\% & \textbf{488900} & \textbf{22675.87\%} \\  
        Gravitar  &5915             &180.66\% & \textbf{6372.5} & \textbf{195.05\%} \\  
        Hero &\textbf{38225}	            &\textbf{124.83\%} & 37545.6 & 122.55\% \\  
        Ice Hockey &47.11           &481.90\% & \textbf{47.53} & \textbf{485.37\%} \\  
        Jamesbond &{620780	}  &{226716.95\%} & \textbf{623300.5} & \textbf{227637.51\%} \\  
        Kangaroo &14636           &488.90\% & 14372.6 & 480.07\% \\  
        Krull &{594540}          &{55544.92\%} & 593679.5 &55464.31\% \\  
        Kung Fu Master &\textbf{{1666665}}	          &\textbf{{7413.57\%}} & \textbf{1666665} & \textbf{7413.57\%} \\  
        Montezuma Revenge &\textbf{2500}            &\textbf{52.60\%} & \textbf{2500} & \textbf{52.60\%} \\  
        Ms Pacman &\textbf{11573}           &\textbf{169.55\%} & \textbf{31403} & \textbf{468.01\%} \\  
        Name This Game  &36296           &590.68\% & \textbf{81473} & \textbf{1375.45\%} \\  
        Phoenix &{959580 }         &	{14794.07\%} & \textbf{999999} & \textbf{15417.71\%} \\
        Pitfall  &-4.3            &3.36 & \textbf{-1} & \textbf{3.41\%} \\  
        Pong &{21}     &{118.13\%}  & \textbf{21} & \textbf{118.13\%} \\  
        Private Eye &15100           &21.68\% & 15100 & 21.68\% \\  
        Qbert &28657           &214.38\% & \textbf{151730} & \textbf{1140.36\%} \\  
        Riverraid &\textbf{28349}           &\textbf{171.17\%} & 27964.3 & 168.74\% \\  
        Road Runner &{999999	} &{12765.53\%} & \textbf{999999} & \textbf{12765.53\%} \\  
        Robotank &113.4           &1146.39\% & \textbf{144} & \textbf{1461.86\%} \\  
        Seaquest&\textbf{{1000000}}          &\textbf{{2381.57\%}} & \textbf{1000000} & \textbf{2381.57\%} \\  
        Skiing &\textbf{{-6025}}  &\textbf{{86.77\%}} & -5903.34 & 87.72\% \\  
        Solaris &9105            &70.95\% & \textbf{10732.5} & \textbf{85.63\%} \\  
        Space Invaders  &{154380}          &{10142.17\%} & \textbf{159999.6} & \textbf{10511.71\%} \\  
        Star Gunner &{677590}          &{7061.61\%} & \textbf{999999} & \textbf{10424.94\%} \\ 
        Surround  &2.606           &76.40\%  & \textbf{2.726} & \textbf{77.13\%} \\  
        Tennis &\textbf{{24}}              &\textbf{{308.39\%}}   & \textbf{24} & \textbf{308.39\%} \\  
        Time Pilot &450810	          &26924.45\%  & \textbf{531614} & \textbf{31787.02\%} \\  
        Tutankham &418.2           &260.44\%  & \textbf{436.2} & \textbf{271.96\%} \\  
        Up N Down &966590          &8656.58\% & \textbf{999999} & \textbf{8955.95\%} \\  
        Venture  &2000	            &168.42\% & \textbf{2200} & \textbf{185.26\%} \\  
        Video Pinball &978190          &5536.54\% & \textbf{999999} & \textbf{5659.98\%} \\  
        Wizard of Wor  &63735           &1506.59\% & \textbf{118900} & \textbf{2822.24\%} \\  
        Yars Revenge &968090          &1874.36\% & \textbf{998970} & \textbf{1934.34\%} \\  
        Zaxxon &216020	          &2362.89\% & \textbf{241570.6} & \textbf{2642.42\%} \\ \midrule
                 Mean HNS & &\goexploremeanhns\%   & & \textbf{\LBCHmeanhns\%} \\ 
         Median HNS & &\goexploremedianhns\%& & \textbf{\LBCHmedianhns\%}  \\ 
         \bottomrule
\end{tabular}
\end{center}
\end{table}

\clearpage

\subsection{Atari Games Table of Scores Based on Human World Records}
\label{app: Atari Games Table of Scores Based on Human World Records}

The raw score of numerous typical SOTA algorithms, including model-free SOTA algorithms, model-based SOTA algorithms, and additional SOTA algorithms, is described in this section. In addition to the raw score, we also include the Human World Records and Breakthroughs (HWRB) for each Atari 57 game, as well as the individual game scores. You may get more information about these algorithms at \cite{ale2,atarihuman}.

\clearpage

\begin{table}[!hb]
\footnotesize
\begin{center}
\caption{Score table of  SOTA  model-free algorithms on HWRB.}
\label{Tab:Score table of SOTA  model-free algorithms on HWRB.}
\setlength{\tabcolsep}{1.0pt}
\begin{tabular}{c cc cc cc cc}
\toprule
Games & RND & Human World Records & AGENT57 & HWRB & Ours & HWRB & MEME & HWRB \\ 
        Scale & ~ & ~ & 100B & ~ & 1B & ~ & 1B & ~ \\ \midrule
        Alien & 227.8 & 251916 & 297638.17 & 1  & 279703.5 & 1  & 83683.43	&0\\  
        Amidar & 5.8 & 104159 & 29660.08 & 0  & 12996.3 & 0  & 14368.9	& 0\\  
        Assault & 222.4 & 8647 & 67212.67 & 1  & 62025.7 & 1 & 46635.86	&1 \\  
        Asterix & 210 & 1000000 & 991384.42 & 0  & 999999 & 0  & 769803.92	& 0\\  
        Asteroids & 719 & 10506650 & 150854.61 & 0  & 1106603.5 & 0  & 364492.07	&0\\  
        Atlantis & 12850 & 10604840 & 1528841.76 & 0  & 3824506.3 & 0  &1669226.33	&0\\  
        Bank Heist & 14.2 & 82058 & 23071.5 & 0  & 1410 & 0  & 87792.55	&1\\  
        Battle Zone & 236 & 801000 & 934134.88 & 1  & 857369 & 1  & 776770	&0\\  
        Beam Rider & 363.9 & 999999 & 300509.8 & 0  & 457321 & 0  & 51870.2	&0\\  
        Berzerk & 123.7 & 1057940 & 61507.83 & 0  & 35340 & 0  & 38838.35	&0\\  
        Bowling & 23.1 & 300 & 251.18 & 0  & 233.1 & 0  &261.74	& 0\\  
        Boxing & 0.1 & 100 & 100 & 1  & 100 & 1  & 99.85	&0\\  
        Breakout & 1.7 & 864 & 790.4 & 0  & 864 & 1  & 831.08	& 0\\  
        Centipede & 2090.9 & 1301709 & 412847.86 & 0  & 728080 & 0  & 245892.18	& 0\\  
        Chopper Command & 811 & 999999 & 999900 & 0  & 999999 & 1  & 912225	&0\\  
        Crazy Climber & 10780.5 & 219900 & 565909.85 & 1  & 233090 & 1  & 339274.67	&1\\  
        Defender & 2874.5 & 6010500 & 677642.78 & 0  & 995950 & 0  &543979.5	&0\\  
        Demon Attack & 152.1 & 1556345 & 143161.44 & 0  & 900170 & 0 & 142176.58	&0 \\  
        Double Dunk & -18.6 & 21 & 23.93 & 1  & 24 & 1 &23.7	&1 \\  
        Enduro & 0 & 9500 & 2367.71 & 0  & 14332.5 & 1  &2360.64	&0\\  
        Fishing Derby & -91.7 & 71 & 86.97 & 1  & 75 & 1  & 77.05	&1\\  
        Freeway & 0 & 38 & 32.59 & 0  & 34 & 0  & 33.97	& 0\\  
        Frostbite & 65.2 & 454830 & 541280.88 & 1  & 13792.4 & 0  & 526239.5	& 1\\  
        Gopher & 257.6 & 355040 & 117777.08 & 0  & 488900 & 1 & 119457.53	& 0 \\  
        Gravitar & 173 & 162850 & 19213.96 & 0  & 6372.5 & 0  & 20875	& 0\\  
        Hero & 1027 & 1000000 & 114736.26 & 0  & 37545.6 & 0  & 199880.6	& 0\\  
        Ice Hockey & -11.2 & 36 & 63.64 & 1  & 47.53 & 1  & 47.22	& 1\\  
        Jamesbond & 29 & 45550 & 135784.96 & 1  & 623300.5 & 1  & 117009.92	& 1\\  
        Kangaroo & 52 & 1424600 & 24034.16 & 0  & 14372.6 & 0  & 17311.17	& 0\\  
        Krull & 1598 & 104100 & 251997.31 & 1  & 593679.5 & 1  & 155915.32	& 1\\  
        Kung Fu Master & 258.5 & 1000000 & 206845.82 & 0  & 1666665 & 1  & 476539.53	& 0\\  
        Montezuma Revenge & 0 & 1219200 & 9352.01 & 0  & 2500 & 0 & 12437	& 0 \\  
        Ms Pacman & 307.3 & 290090 & 63994.44 & 0  & 31403 & 0  & 29747.91	& 0 \\  
        Name This Game & 2292.3 & 25220 & 54386.77 & 1  & 81473 & 1 & 40077.73	& 1 \\  
        Phoenix & 761.5 & 4014440 & 908264.15 & 0  & 999999 & 0  & 849969.25	& 0\\  
        Pitfall & -229.4 & 114000 & 18756.01 & 0  & -1 & 0 & 46734.79	& 0 \\  
        Pong & -20.7 & 21 & 20.67 & 0  & 21 & 1  & 19.31	& 0\\  
        Private Eye & 24.9 & 101800 & 79716.46 & 0  & 15100 & 0 & 100798.9	& 0 \\  
        Qbert & 163.9 & 2400000 & 580328.14 & 0  & 151730 & 0  & 238453.5	& 0\\  
        Riverraid & 1338.5 & 1000000 & 63318.67 & 0  & 27964.3 & 0 & 90333.12	& 0\\  
        Road Runner & 11.5 & 2038100 & 243025.8 & 0  & 999999 & 0  & 399511.83	& 0\\  
        Robotank & 2.2 & 76 & 127.32 & 1  & 144 & 1  & 114.46	& 1\\  
        Seaquest & 68.4 & 999999 & 999997.63 & 0  & 1000000 & 1 & 960181.39	& 0 \\  
        Skiing & -17098 & -3272 & -4202.6 & 0  & -5903.34 & 0 & -3273.43	& 0 \\  
        Solaris & 1236.3 & 111420 & 44199.93 & 0  & 10732.5 & 0  & 28175.53	& 0\\  
        Space Invaders & 148 & 621535 & 48680.86 & 0  & 159999.6 & 0 & 57828.45	& 0 \\  
        Star Gunner & 664 & 77400 & 839573.53 & 1  & 999999 & 1 & 264286.33	& 1 \\  
        Surround & -10 & 9.6 & 9.5 & 0  & 2.726 & 0  & 9.82	&1\\  
        Tennis & -23.8 & 21 & 23.84 & 1  & 24 & 1  & 22.79	&1\\  
        Time Pilot & 3568 & 65300 & 405425.31 & 1  & 531614 & 1 & 404751.67	& 1 \\  
        Tutankham & 11.4 & 5384 & 2354.91 & 0  & 436.2 & 0  & 1030.27	&0\\  
        Up n Down & 533.4 & 82840 & 623805.73 & 1  & 999999 & 1 & 524631	& 1 \\  
        Venture & 0 & 38900 & 2623.71 & 0  & 2200 & 0  & 2859.83	& 0\\  
        Video Pinball & 0 & 89218328 & 992340.74 & 0  & 999999 & 0 & 617640.95	& 0 \\  
        Wizard of Wor & 563.5 & 395300 & 157306.41 & 0  & 118900 & 0  & 71942	& 0\\  
        Yars Revenge & 3092.9 & 15000105 & 998532.37 & 0  & 998970 & 0 & 633867.66	& 0  \\  
        Zaxxon & 32.5 & 83700 & 249808.9 & 1  & 241570.6 & 1  & 77942.17	&0\\ 
        \midrule
        $\sum$ HWRB & & & & 18 & & \textbf{24}& & 16 \\ 
        \bottomrule
\end{tabular}
\end{center}
\end{table}

\clearpage

\begin{table}[!hb]
\footnotesize
\begin{center}
\caption{Score table of  SOTA  model-based algorithms on HWRB.}
\label{Tab:Score table of SOTA  model-based algorithms on HWRB.}
\setlength{\tabcolsep}{1.0pt}
\begin{tabular}{c cc cc cc}
\toprule
Games & MuZero &  HWRB & EfficientZero &  HWRB &Ours  &  HWRB \\ 
        Scale & 20B & ~ & 100K & ~ & 1B & ~ \\ \midrule
        Alien & 741812.63 & 1  & 808.5 & 0  & 279703.5 & 1  \\  
        Amidar & 28634.39 & 0  & 148.6 & 0  & 12996.3 & 0  \\  
        Assault & 143972.03 & 1  & 1263.1 & 0  & 62025.7 & 1  \\  
        Asterix & 998425 & 0  & 25557.8 & 0  & 999999 & 0  \\  
        Asteroids & 678558.64 & 0  & N/A & N/A & 1106603.5 & 0  \\  
        Atlantis & 1674767.2 & 0  & N/A & N/A & 3824506.3 & 0  \\  
        Bank Heist & 1278.98 & 0  & 351 & 0  & 1410 & 0  \\  
        Battle Zone & 848623 & 1  & 13871.2 & 0  & 857369 & 1  \\  
        Beam Rider & 454993.53 & 0  & N/A & N/A & 457321 & 0  \\  
        Berzerk & 85932.6 & 0  & N/A & N/A & 35340 & 0  \\  
        Bowling & 260.13 & 0  & N/A & N/A & 233.1 & 0  \\  
        Boxing & 100 & 1  & 52.7 & 0  & 100 & 1  \\  
        Breakout & 864 & 1  & 414.1 & 0  & 864 & 1  \\  
        Centipede & 1159049.27 & 0  & N/A & N/A & 728080 & 0  \\  
        Chopper Command & 991039.7 & 0  & 1117.3 & 0  & 999999 & 1  \\  
        Crazy Climber & 458315.4 & 1  & 83940.2 & 0  & 233090 & 1  \\  
        Defender & 839642.95 & 0  & N/A & N/A & 995950 & 0  \\  
        Demon Attack & 143964.26 & 0  & 13003.9 & 0  & 900170 & 0  \\  
        Double Dunk & 23.94 & 1  & N/A & N/A & 24 & 1  \\  
        Enduro & 2382.44 & 0  & N/A & N/A & 14332.5 & 1  \\  
        Fishing Derby & 91.16 & 1  & N/A & N/A & 75 & 1  \\  
        Freeway & 33.03 & 0  & 21.8 & 0  & 34 & 0  \\  
        Frostbite & 631378.53 & 1  & 296.3 & 0  & 13792.4 & 0  \\  
        Gopher & 130345.58 & 0  & 3260.3 & 0  & 488900 & 1  \\  
        Gravitar & 6682.7 & 0  & N/A & N/A & 6372.5 & 0  \\  
        Hero & 49244.11 & 0  & 3915.9 & 0  & 37545.6 & 0  \\  
        Ice Hockey & 67.04 & 1  & N/A & N/A & 47.53 & 1  \\  
        Jamesbond & 41063.25 & 0  & 517 & 0  & 623300.5 & 1  \\  
        Kangaroo & 16763.6 & 0  & 724.1 & 0  & 14372.6 & 0  \\  
        Krull & 269358.27 & 1  & 5663.3 & 0  & 593679.5 & 1  \\  
        Kung Fu Master & 204824 & 0  & 30944.8 & 0  & 1666665 & 1  \\  
        Montezuma Revenge & 0 & 0  & N/A & N/A & 2500 & 0  \\  
        Ms Pacman & 243401.1 & 0  & 1281.2 & 0  & 31403 & 0  \\  
        Name This Game & 157177.85 & 1  & N/A & N/A & 81473 & 1  \\  
        Phoenix & 955137.84 & 0  & N/A & N/A & 999999 & 0  \\  
        Pitfall & 0 & 0  & N/A & N/A & -1 & 0  \\  
        Pong & 21 & 1  & 20.1 & 0  & 21 & 1  \\  
        Private Eye & 15299.98 & 0  & 96.7 & 0  & 15100 & 0  \\  
        Qbert & 72276 & 0  & 14448.5 & 0  & 151730 & 0  \\  
        Riverraid & 323417.18 & 0  & N/A & N/A & 27964.3 & 0  \\  
        Road Runner & 613411.8 & 0  & 17751.3 & 0  & 999999 & 0  \\  
        Robotank & 131.13 & 1  & N/A & N/A & 144 & 1  \\  
        Seaquest & 999976.52 & 0  & 1100.2 & 0  & 1000000 & 1  \\  
        Skiing & -29968.36 & 0  & N/A & N/A & -5903.34 & 0  \\  
        Solaris & 56.62 & 0  & N/A & N/A & 10732.5 & 0  \\  
        Space Invaders & 74335.3 & 0  & N/A & N/A & 159999.6 & 0  \\  
        Star Gunner & 549271.7 & 1  & N/A & N/A & 999999 & 1  \\  
        Surround & 9.99 & 1  & N/A & N/A & 2.726 & 0  \\  
        Tennis & 0 & 0  & N/A & N/A & 24 & 1  \\  
        Time Pilot & 476763.9 & 1  & N/A & N/A & 531614 & 1  \\  
        Tutankham & 491.48 & 0  & N/A & N/A & 436.2 & 0  \\  
        Up N Down & 715545.61 & 1  & 17264.2 & 0  & 999999 & 1  \\  
        Venture & 0.4 & 0  & N/A & N/A & 2200 & 0  \\  
        Video Pinball & 981791.88 & 0  & N/A & N/A & 999999 & 0  \\  
        Wizard of Wor & 197126 & 0  & N/A & N/A & 118900 & 0  \\  
        Yars Revenge & 553311.46 & 0  & N/A & N/A & 998970 & 0  \\  
        Zaxxon & 725853.9 & 1  & N/A & N/A & 241570.6 & 1 \\ 
        \midrule
         $\sum$ HWRB & & 19 & & 0 & & \textbf{24} \\
         \bottomrule
\end{tabular}
\end{center}
\end{table}

\clearpage  
\begin{table}[!hb]
\footnotesize
\begin{center}
\caption{Score table of  SOTA exploration-based algorithms on HWRB.}
\label{Tab:Score table of SOTA  exploration-based algorithms on HWRB.}
\setlength{\tabcolsep}{1.0pt}
\begin{tabular}{c cc cc }
\toprule
Games & Go-Explore & HWRB & Ours & HWRB \\ 
        Scale & 10B & ~ & 1B &   \\ \midrule
        Alien & 959312 & 1  & 279703.5 & 1  \\  
        Amidar & 19083 & 0  & 12996.3 & 0  \\  
        Assault & 30773 & 1  & 62025.7 & 1  \\  
        Asterix & 999500 & 0  & 999999 & 0  \\  
        Asteroids & 112952 & 0  & 1106603.5 & 0  \\  
        Atlantis & 286460 & 0  & 3824506.3 & 0  \\  
        Bank Heist & 3668 & 0  & 1410 & 0  \\  
        Battle Zone & 998800 & 1  & 857369 & 1  \\  
        Beam Rider & 371723 & 0  & 457321 & 0  \\  
        Berzerk & 131417 & 0  & 35340 & 0  \\  
        Bowling & 247 & 0  & 233.1 & 0  \\  
        Boxing & 91 & 0  & 100 & 1  \\  
        Breakout & 774 & 0  & 864 & 1  \\  
        Centipede & 613815 & 0  & 728080 & 0  \\  
        Chopper Command & 996220 & 0  & 999999 & 1  \\  
        Crazy Climber & 235600 & 1  & 233090 & 1  \\  
        Defender & N/A & N/A & 995950 & 0  \\  
        Demon Attack & 239895 & 0  & 900170 & 0  \\  
        Double Dunk & 24 & 1  & 24 & 1  \\  
        Enduro & 1031 & 0  & 14332.5 & 1  \\  
        Fishing Derby & 67 & 0  & 75 & 1  \\  
        Freeway & 34 & 0  & 34 & 0  \\  
        Frostbite & 999990 & 1  & 13792.4 & 0  \\  
        Gopher & 134244 & 0  & 488900 & 1  \\  
        Gravitar & 13385 & 0  & 6372.5 & 0  \\  
        Hero & 37783 & 0  & 37545.6 & 0  \\  
        Ice Hockey & 33 & 0  & 47.53 & 1  \\  
        Jamesbond & 200810 & 1  & 623300.5 & 1  \\  
        Kangaroo & 24300 & 0  & 14372.6 & 0  \\  
        Krull & 63149 & 0  & 593679.5 & 1  \\  
        Kung Fu Master & 24320 & 0  & 1666665 & 1  \\  
        Montezuma Revenge & 24758 & 0  & 2500 & 0  \\  
        Ms Pacman & 456123 & 1  & 31403 & 0  \\  
        Name This Game & 212824 & 1  & 81473 & 1  \\  
        Phoenix & 19200 & 0  & 999999 & 0  \\  
        Pitfall & 7875 & 0  & -1 & 0  \\  
        Pong & 21 & 1  & 21 & 1  \\  
        Private Eye & 69976 & 0  & 15100 & 0  \\  
        Qbert & 999975 & 0  & 151730 & 0  \\  
        Riverraid & 35588 & 0  & 27964.3 & 0  \\  
        Road Runner & 999900 & 0  & 999999 & 0  \\  
        Robotank & 143 & 1  & 144 & 1  \\  
        Seaquest & 539456 & 0  & 1000000 & 1  \\  
        Skiing & -4185 & 0  & -5903.34 & 0  \\  
        Solaris & 20306 & 0  & 10732.5 & 0  \\  
        Space Invaders & 93147 & 0  & 159999.6 & 0  \\  
        Star Gunner & 609580 & 1  & 999999 & 1  \\  
        Surround & N/A & N/A & 2.726 & 0  \\  
        Tennis & 24 & 1  & 24 & 1  \\  
        Time Pilot & 183620 & 1  & 531614 & 1  \\  
        Tutankham & 528 & 0  & 436.2 & 0  \\  
        Up N Down & 553718 & 1  & 999999 & 1  \\  
        Venture & 3074 & 0  & 2200 & 0  \\  
        Video Pinball & 999999 & 0  & 999999 & 0  \\  
        Wizard of Wor & 199900 & 0  & 118900 & 0  \\  
        Yars Revenge & 999998 & 0  & 998970 & 0  \\  
        Zaxxon & 18340 & 0  & 241570.6 & 1 \\ 
        \midrule
         $\sum$ HWRB & & 15& & \textbf{24}  \\ 
         \bottomrule
\end{tabular}
\end{center}
\end{table}

\clearpage

\begin{table}[!hb]
\footnotesize
\begin{center}
\caption{Score Table of GDI-H$^3$ and LBC-$\mathcal{BM}$ (Ours) on HWRB.}
\label{Tab:Score table of GDI and LBC on HWRB.}
\setlength{\tabcolsep}{1.0pt}
\begin{tabular}{c cc cc }
\toprule
Games & GDI-H$^3$ & HWRB & Ours &HWRB   \\ 
        Scale & 200M & ~ & 1B &    \\  \midrule
        Alien & 48735	             & 0 & \textbf{279703.5} & 1 \\  
        Amidar &1065              & 0 & \textbf{12996.3} & 0 \\  
        Assault &\textbf{97155}	             & 1  & 62025.7 & 1 \\  
        Asterix &\textbf{{999999}}   & 0 & \textbf{999999} & 0 \\  
        Asteroids &{760005}            & 0 & \textbf{1106603.5} & 0 \\  
        Atlantis &\textbf{{3837300}}           & 0   & 3824506.3 & 0 \\  
        Bank Heist &1380              & 0  & \textbf{1410} & 0 \\   
        Battle Zone &824360            & 1 & \textbf{857369} & 1 \\  
        Beam Rider &422390            & 0 & \textbf{457321} & 0 \\  
        Berzerk &14649             & 0 & \textbf{35340} & 0 \\    
        Bowling &205.2             & 0 & \textbf{233.1} & 0 \\  
        Boxing &\textbf{{100}}      & 1  & \textbf{100} &1 \\  
        Breakout &\textbf{{864}}     & 1& \textbf{864} & 1 \\   
        Centipede &195630            & 0 & \textbf{728080} & 0 \\  
        Chopper Command &\textbf{{999999}}   & 1 & \textbf{999999} & 1 \\    
        Crazy Climber &\textbf{241170}	            & 1 & 233090 &1 \\  
        Defender &{970540}  & 0 & \textbf{995950} & 0 \\    
        Demon Attack &{787985}    & 0 & \textbf{900170} & 0 \\  
        Double Dunk &\textbf{{24 }}      & 1 & \textbf{24} & 1 \\  
        Enduro &14300            & 1 & \textbf{14332.5} & 1 \\  
        Fishing Derby &65              & 0 & \textbf{75} & 1 \\  
        Freeway &\textbf{{34}}     & 0 & \textbf{34} & 0 \\    
        Frostbite &11330	            & 0 & \textbf{13792.4} & 0 \\  
        Gopher &473560           & 1 & \textbf{488900} & 1 \\   
        Gravitar  &5915             & 0 & \textbf{6372.5} & 0 \\  
        Hero &\textbf{38225}	           & 0 & 37545.6 & 0 \\   
        Ice Hockey &47.11           & 0  & \textbf{47.53} & 1 \\  
        Jamesbond &{620780	}  & 1 & \textbf{623300.5} & 1 \\    
        Kangaroo &14636           & 0 & 14372.6 & 0 \\  
        Krull &{594540}          & 1 & 593679.5 &1 \\  
        Kung Fu Master &\textbf{{1666665}}	         & 1 & \textbf{1666665} &1 \\    
        Montezuma Revenge &\textbf{2500}            & 0 & \textbf{2500} & 0 \\  
        Ms Pacman &\textbf{11573}           & 0 & \textbf{31403} & 0 \\   
        Name This Game  &36296           & 1 & \textbf{81473} & 1\\  
        Phoenix &{959580 }        & 0 & \textbf{999999} & 0 \\  
        Pitfall  &-4.3            & 0 & \textbf{-1} & 0 \\  
        Pong &{21}     & 1 & \textbf{21} & 1 \\  
        Private Eye &15100          & 0 & 15100 & 0 \\   
        Qbert &28657          & 0 & \textbf{151730} & 0 \\  
        Riverraid &\textbf{28349}           & 0 & 27964.3 & 0 \\   
        Road Runner &{999999	} & 0  & \textbf{999999} &0 \\   
        Robotank &113.4           & 0 & \textbf{144} & 0 \\  
        Seaquest&\textbf{{1000000}}         & 1 & \textbf{1000000} & 1 \\  
        Skiing &\textbf{{-6025}}  & 0 & -5903.34 & 0 \\  
        Solaris &9105           & 0  & \textbf{10732.5} & 0 \\   
        Space Invaders  &{154380}         & 0 & \textbf{159999.6} & 0 \\  
        Star Gunner &{677590}         & 1 & \textbf{999999} & 1 \\  
        Surround  &2.606           & 0  & \textbf{2.726} & 0 \\  
        Tennis &\textbf{{24}}            & 1  & \textbf{24} & 1 \\   
        Time Pilot &450810	         & 1  & \textbf{531614} & 1 \\   
        Tutankham &418.2          & 0 & \textbf{436.2} & 0 \\   
        Up N Down &966590         & 1 & \textbf{999999} & 1\\  
        Venture  &2000	           & 0 & \textbf{2200} & 0 \\    
        Video Pinball &978190          & 0 & \textbf{999999} &0 \\  
        Wizard of Wor  &63735         & 0 & \textbf{118900} & 0 \\  
        Yars Revenge &968090         & 0& \textbf{998970} & 0 \\  
        Zaxxon &216020	          & 1 & \textbf{241570.6} & 1 \\  \midrule
                $\sum$ HWRB & &22   & &\textbf{24} \\ 
         \bottomrule
\end{tabular}
\end{center}
\end{table}

\clearpage

\section{Ablation Study}
\label{app: Ablation Study}

\renewcommand{\thesubfigure}{(\alph{subfigure})}
\setcounter{subfigure}{0}

\begin{figure*}[!t]
	\centering
	\includegraphics[width=\textwidth]{photo/zhuzhuangtu/Ablation_Study/Ablation_Study.pdf}
	\centering
\caption{Ablation Results on Atari Benchmark \citep{ale2}. All the results are scaled by  that of our main algorithm to improve readability. In these figures, we sequentially demonstrate how much performance (\%) will degrade after ablating each component of LBC.}
\label{app fig:ablation study}
\end{figure*}

In this section, we will demonstrate the settings of our ablation studies first . Then we will introduce the algorithms of the ablation study, which has been concluded  in Tab. \ref{tab:Summary of the ablation experimental groups in Ablation Study}. After that, we will introduce the ablation study results and case studies of t-SNE, including the results on the Atari benchmark in Fig. \ref{app fig:ablation study} and t-SNE analysis in App. \ref{app: TSNE Analysis}.

\subsection{Ablation Study Setup}
\label{app: Ablation Study Setup}

 We  summarized all the algorithms of the ablation study in Tab. \ref{tab:Summary of the ablation experimental groups in Ablation Study}.  All algorithms are tested in the same experimental setup. More details on these experimental setups can see App. \ref{sec:app Experiment Details}.  The hyper-parameters can see App. \ref{Sec: appendix hyper-parameters}.

\begin{table*}[!htbp]
\small
\setlength{\tabcolsep}{2.0pt}
    \centering
    \caption{Algorithms of Ablation Study.}
    \label{tab:Summary of the ablation experimental groups in Ablation Study}
    \resizebox{\textwidth}{!}{
    \begin{tabular}{l l l l l}
   \toprule
                Algorithm &   Main Algorithm &  Reducing $\mathbf{H}$  & 
                Reducing $\mathbf{H}$ and $\bm{\Psi}$ & Random Selection  \\
   \midrule
Ablation Variables & Baseline & $\mathbf{H}$& $\bm{\Psi}$ and $\mathbf{H}$& Meta-Controller ($\bm{\Psi}$)\\

            $\mathcal{F}_{\psi}$ & $\sum_{i=1}^{3} \omega_i \operatorname{Softmax}_{\tau_i}(\Phi_{\mathbf{h}_i})$ & $\sum_{i=1}^{3} \omega_i \operatorname{Softmax}_{\tau_i}(\Phi_{\mathbf{h}_i})$ & $ \operatorname{Softmax}_{\tau}(\Phi_{\mathbf{h}})$ & $\sum_{i=1}^{3} \omega_i \operatorname{Softmax}_{\tau_i}(\Phi_{\mathbf{h}_i})$ \\

                $\mathbf{\Phi}_{\mathbf{H}}$ & $(\Phi_{\mathbf{h}_1},...,\Phi_{\mathbf{h}_\mathrm{3}})$ & $\Phi_{\mathbf{h}_1}$ & $\Phi_{\mathbf{h}_1}$ & $(\Phi_{\mathbf{h}_1},...,\Phi_{\mathbf{h}_\mathrm{3}})$\\

                $\mathbf{h}_i$ & $(\gamma_i,\mathcal{RS}_i)$& $(\gamma_i,\mathcal{RS}_i)$ & $(\gamma_i,\mathcal{RS}_i)$ & $(\gamma_i,\mathcal{RS}_i)$ \\
                
                $\mathbf{H}$  & $\{\mathbf{h}_i|i=1,2,3\}$ & $\{\mathbf{h}_i|i=1\}$ & $\{\mathbf{h}_i|i=1\}$ & $\{\mathbf{h}_i|i=1,2,3\}$ \\

                $\bm{\psi}_i$ & $(\omega_1,\tau_1,\omega_2,\tau_2,\omega_3,\tau_3)$ & $(\omega_1,\tau_1,\omega_2,\tau_2,\omega_3,\tau_3)$ & $(\tau)$& $(\omega_1,\tau_1,\omega_2,\tau_2,\omega_3,\tau_3)$\\ 
                
                   $\bm{\Psi}$  & $\{\bm{\psi}_i|i=1,...,\infty\}$ & $\{\bm{\psi}_i|i=1,...,\infty\}$ & $\{\bm{\psi}_i|i=1,...,\infty\}$ & $\{\bm{\psi}_i|i=1,...,\infty\}$\\

                   $ |\mathbf{H}| $ & 3 & 1 & 1 &  3 \\

                   $ |\bm{\Psi}| $ & $\infty$ & $\infty$ & $\infty$ & $\infty$  \\

                 Category   & LBC-H$^\text{2}_\text{3}$-$\bm{\Psi}^\text{6}_{\infty}$ &  LBC-H$^\text{2}_\text{1}$-$\bm{\Psi}^\text{6}_{\infty}$  & LBC-H$^\text{2}_\text{1}$-$\bm{\Psi}^\text{1}_{\infty}$ & LBC-H$^\text{2}_\text{3}$-$\bm{\Psi}^\text{6}_{\infty}$  \\
                 
                 $|\mathbf{M}_{\mathbf{H},\bm{\Psi}} |$ & $\infty$ & $\infty$ & $\infty$ & $\infty$ \\
                 
                 Meta-Controller ($\bm{\Psi}$)
                 & MAB   
                 & MAB   
                 & MAB 
                 & Random Selection             \\
    \bottomrule
    \end{tabular}
    }
\end{table*}
\normalsize

\subsection{Ablation Study Results}

    

The ablation study results can be found in Fig. \ref{app fig:ablation study}. From left to right, the behavior space of
the first three algorithms decreases in turn, and the final performance of these three algorithms decreases in turn. We can draw the following corollary:

\begin{Corollary}[Smaller Behavior Space, Lower Final Performance]
\label{Corollary: Smaller Behavior Space, Lower Final Performance}
    Given any RL methods, assuming each behavior can be visited infinitely, decreasing the behavior space and keeping other conditions unchanged will degrade the final performance of the algorithm,  and vice versa.
\end{Corollary}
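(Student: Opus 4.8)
The plan is to reduce the statement to a monotonicity property of a supremum over nested sets, leaning on the set-theoretic comparisons of behavior spaces already established. First I would freeze every component of the algorithm except the behavior space: the model structure $\Phi$, the form of the behavior mapping $\mathcal{F}$, the learning rules, and the optimizer of the selection distribution are held fixed, so the only quantity that varies is $\mathbf{M}_{\mathbf{H},\bm{\Psi}}$, controlled through $\mathbf{H}$ and $\bm{\Psi}$ exactly as in Corollaries~\ref{Corollary: space not smaller} and \ref{Corollary: space larger}. (This is legitimate for ``any RL method'' because, per the appendix, such a method is recast as an $\mathrm{LBC}$-$\mathbf{H}$-$\bm{\Psi}$ instance.) Under Assumption~\ref{ass: share model}, shrinking $\mathbf{H}$ and/or $\bm{\Psi}$ gives $\mathbf{M}_{\mathbf{H}_1,\bm{\Psi}_1}\subseteq\mathbf{M}_{\mathbf{H}_2,\bm{\Psi}_2}$, and strict inclusion when at least one reduction is strict, by Proposition~\ref{Proposition: Comparison of Behavior Space}.

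Next I would make ``final performance'' precise. Since each behavior is assumed visitable infinitely often, the meta-controller is not structurally prevented from realizing any selection distribution $\mathcal{P}$ supported on the behavior space, and in the limit it attains the optimizer of the behavior-selection objective $\mathcal{L}_\mathcal{P}$ of Definition~\ref{def: bs}; equivalently, the learner has access to the best admissible stream of training data. I would therefore identify the attainable final performance with behavior space $\mathbf{M}$ as $\sup_{\mathcal{P}\text{ supported on }\mathbf{M}}\mathcal{L}_\mathcal{P}$. The key step is then immediate: every selection distribution on $\mathbf{M}_1$ extends (by assigning zero mass to the extra behaviors) to a selection distribution on $\mathbf{M}_2$ whenever $\mathbf{M}_1\subseteq\mathbf{M}_2$, so the feasible sets are nested and
\[
\sup_{\mathcal{P}\text{ on }\mathbf{M}_{\mathbf{H}_1,\bm{\Psi}_1}}\mathcal{L}_\mathcal{P}\ \le\ \sup_{\mathcal{P}\text{ on }\mathbf{M}_{\mathbf{H}_2,\bm{\Psi}_2}}\mathcal{L}_\mathcal{P}.
\]
Hence shrinking the behavior space cannot increase final performance, and enlarging it cannot decrease it, which is the ``vice versa'' direction by the same argument. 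For the strict ``degrade'' phrasing I would note that when $\mathbf{M}_{\mathbf{H}_1,\bm{\Psi}_1}\subsetneq\mathbf{M}_{\mathbf{H}_2,\bm{\Psi}_2}$ the extra behaviors contribute nonzero $V^{\mathrm{TD}}_\mu$-diversity and generically enlarge the attainable $(V^{\mathrm{TV}},V^{\mathrm{TD}})$ frontier, making the inequality strict; this reproduces the left-to-right monotone ordering in Figure~\ref{app fig:ablation study}.

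I expect the main obstacle to be exactly this last point: making ``final performance'' a well-defined, monotone functional of the behavior space rather than an artifact of a particular learning trajectory. In off-policy actor-critic learning a larger behavior space also enlarges the action set of the meta-controller, which could slow its own convergence, so without the ``each behavior can be visited infinitely'' hypothesis the monotonicity can fail in finite time. The honest proof therefore uses that hypothesis precisely to collapse the learning dynamics to their asymptotic optimum and leave a pure statement about suprema over nested sets; I would flag explicitly that the claim is an idealized, asymptotic one rather than a finite-sample guarantee, matching how the corollary is invoked in the ablation study.
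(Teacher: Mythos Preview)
The paper does not actually prove this corollary. It is stated immediately after the sentence ``We can draw the following corollary:'' in the ablation-study appendix, and the only support the paper offers is the empirical pattern in Figure~\ref{app fig:ablation study}: the three ablated variants have nested behavior spaces (by Corollary~\ref{Corollary: space larger}) and their measured scores decrease in the same order. No formal argument, no proof environment, no appeal to $\mathcal{L}_\mathcal{P}$ is given; the corollary is an inductive generalization from three data points.

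Your proposal therefore does substantially more than the paper. You supply a definition of ``final performance'' as $\sup_{\mathcal{P}\text{ on }\mathbf{M}}\mathcal{L}_\mathcal{P}$ and then invoke monotonicity of suprema over nested feasible sets, which is mathematically correct and is a reasonable way to make the claim precise. Two caveats are worth flagging. First, the identification you make is yours, not the paper's: in the paper ``final performance'' means HNS/HWRB of the \emph{learned target policy}, not the value of the behavior-selection objective $\mathcal{L}_\mathcal{P}$, and linking the two requires an additional (unstated) assumption that better behavior data yields a better learned policy in the limit. Second, your argument gives only the weak inequality; the strict ``degrade'' direction you justify with ``generically enlarge the $(V^{\mathrm{TV}},V^{\mathrm{TD}})$ frontier,'' which is a heuristic rather than a proof---extra behaviors could in principle be dominated and leave the supremum unchanged. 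You are right to call the result ``idealized, asymptotic''; that is also how the paper uses it, but the paper simply asserts it from the ablation rather than deriving it.
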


The behavior space of Random Selection is the same as our main algorithm. Obviously, the appropriate behaviors fail to be selected with a random selection, resulting in a great decrease of the performance in limited training frames.

\subsection{t-SNE Analysis}
\label{app: TSNE Analysis}

 In this paper, we adopt the ucb-score to encourage the actors to try more different behavior. To demonstrate the effectiveness of the ucb-score, we conduct the t-SNE analysis of the methods removing the ucb-score in 1 and 2 of  Fig. \ref{fig:tsne study}. 

 To demonstrate that the behavior diversity can be boosted by our algorithm, we conducted  the t-SNE analysis of the methods with rule-based $\mathcal{F}$ in 2 and 3 of Fig. Fig. \ref{fig:tsne study}.

From (a) and (b) of Fig. \ref{fig:tsne study}, we find that removing the UCB item (i.e., $\sqrt{\frac{\log (1 + \sum_{j \neq k}^{\mathrm{K}} N_{\Psi_j})}{1 + N_{\Psi_k}}}$) from the optimization target of behavior selection, the behavior diversity fade away. It can prove the effectiveness of the diversity control of our methods.

From (c) and (d) of Fig. \ref{fig:tsne study}, we find that compared with the rule-based $\mathcal{F}$, our method can acquire a diverse set of behaviors though we do not contain a diversity-based multi-objective model training which confirms the Corollary \ref{Corollary: behavior mapping optimization is an Antidote for Behavior Circling}.

\begin{figure*}[!t]
\vspace{-0.1in}
\subfigure[Main Algorithm]{
		\includegraphics[width=0.23\textwidth,height=0.2\textheight]{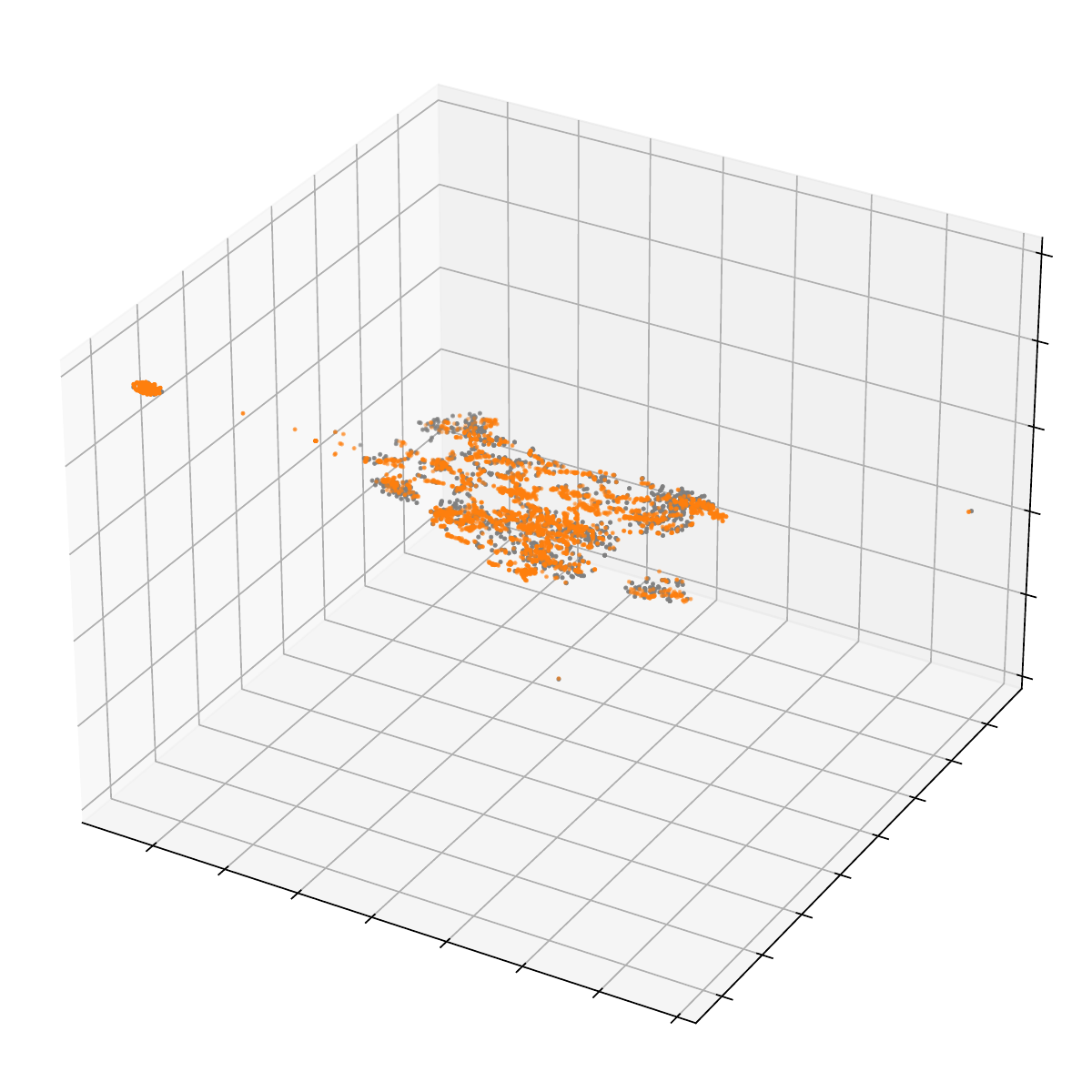}
	}
	\subfigure[w/o UCB Item]{
		\includegraphics[width=0.23\textwidth,height=0.2\textheight]{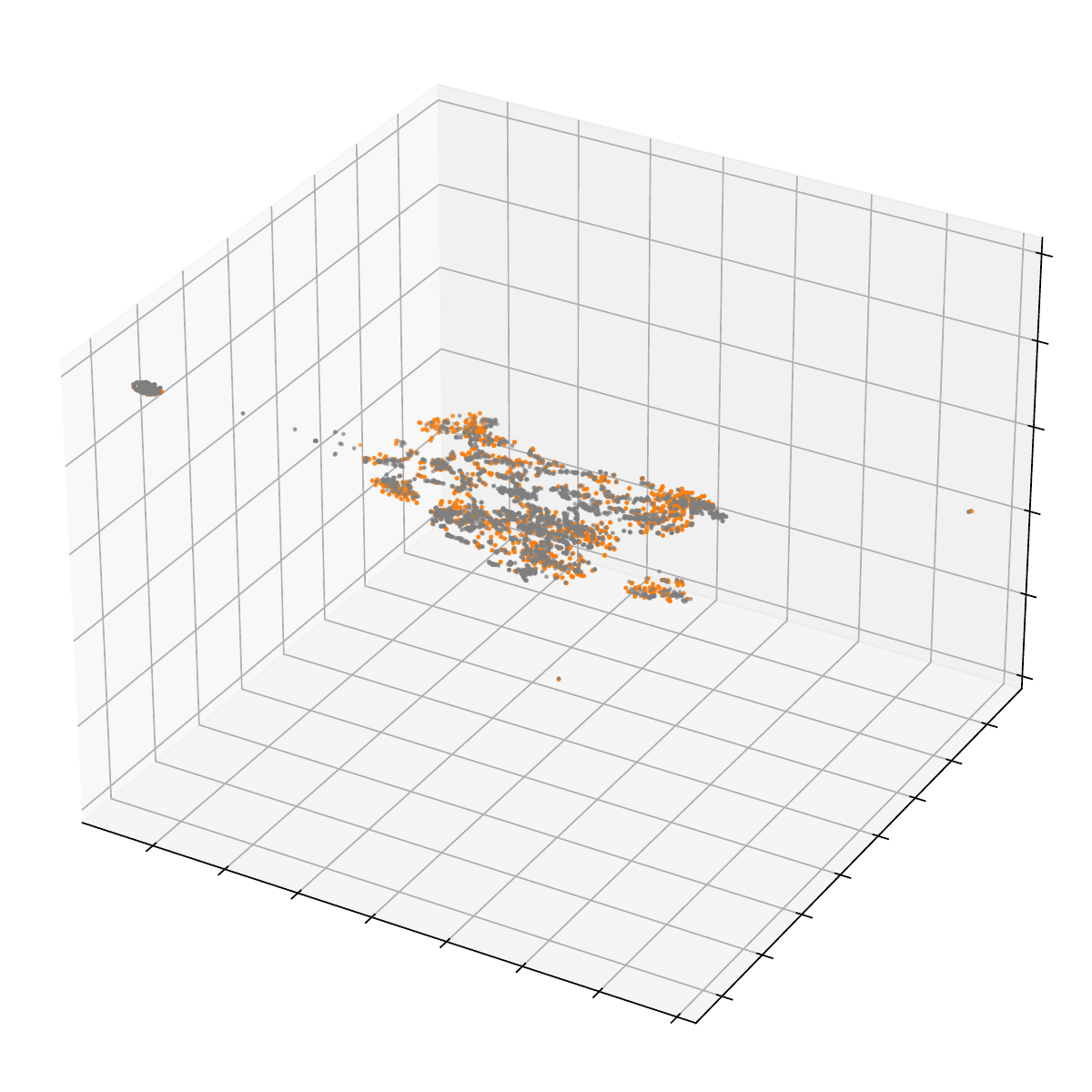}
	}
	\subfigure[Main Algorithm]{
		\includegraphics[width=0.23\textwidth,height=0.2\textheight]{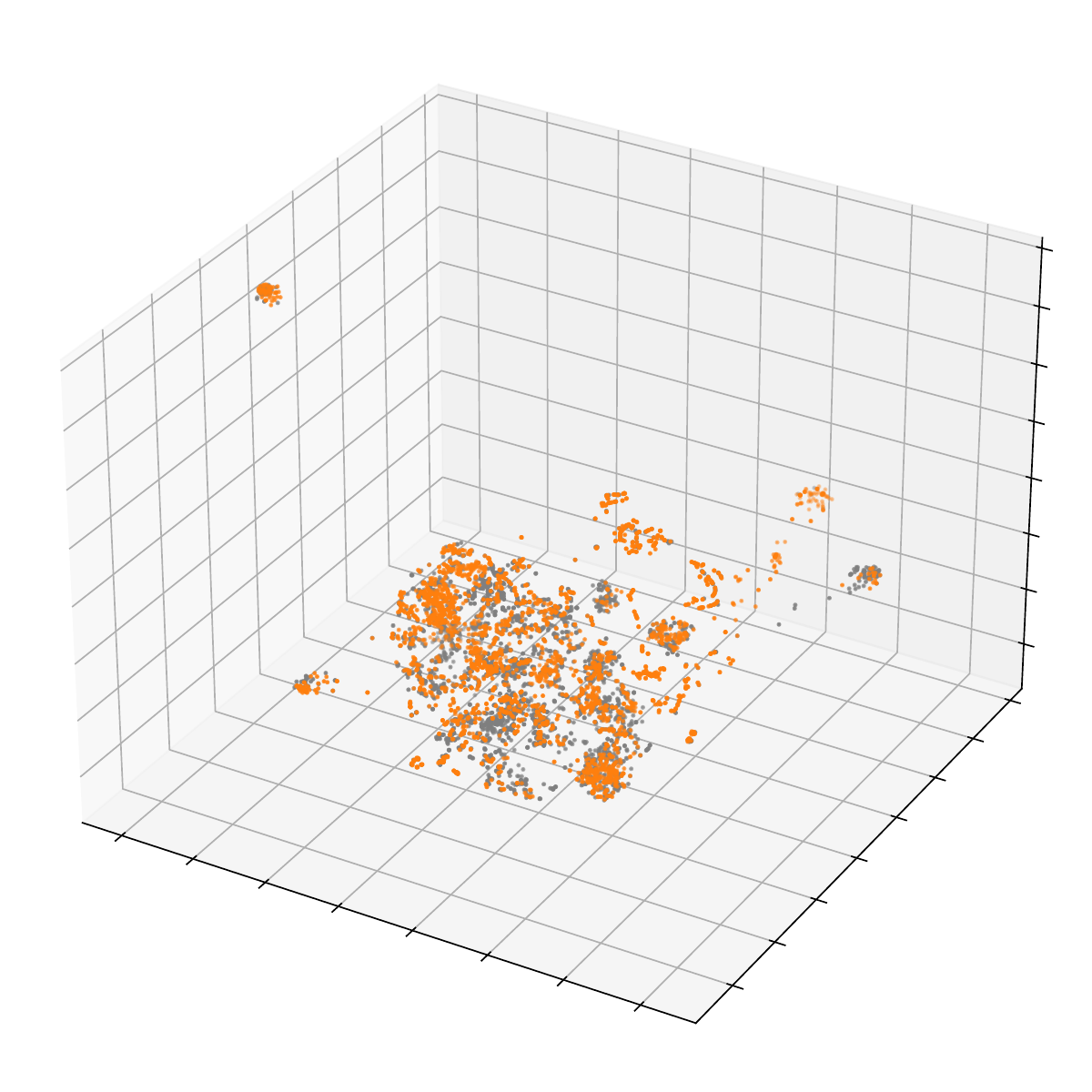}
	}
	\subfigure[Rule-based $\mathcal{F}$]{
		\includegraphics[width=0.23\textwidth,height=0.2\textheight]{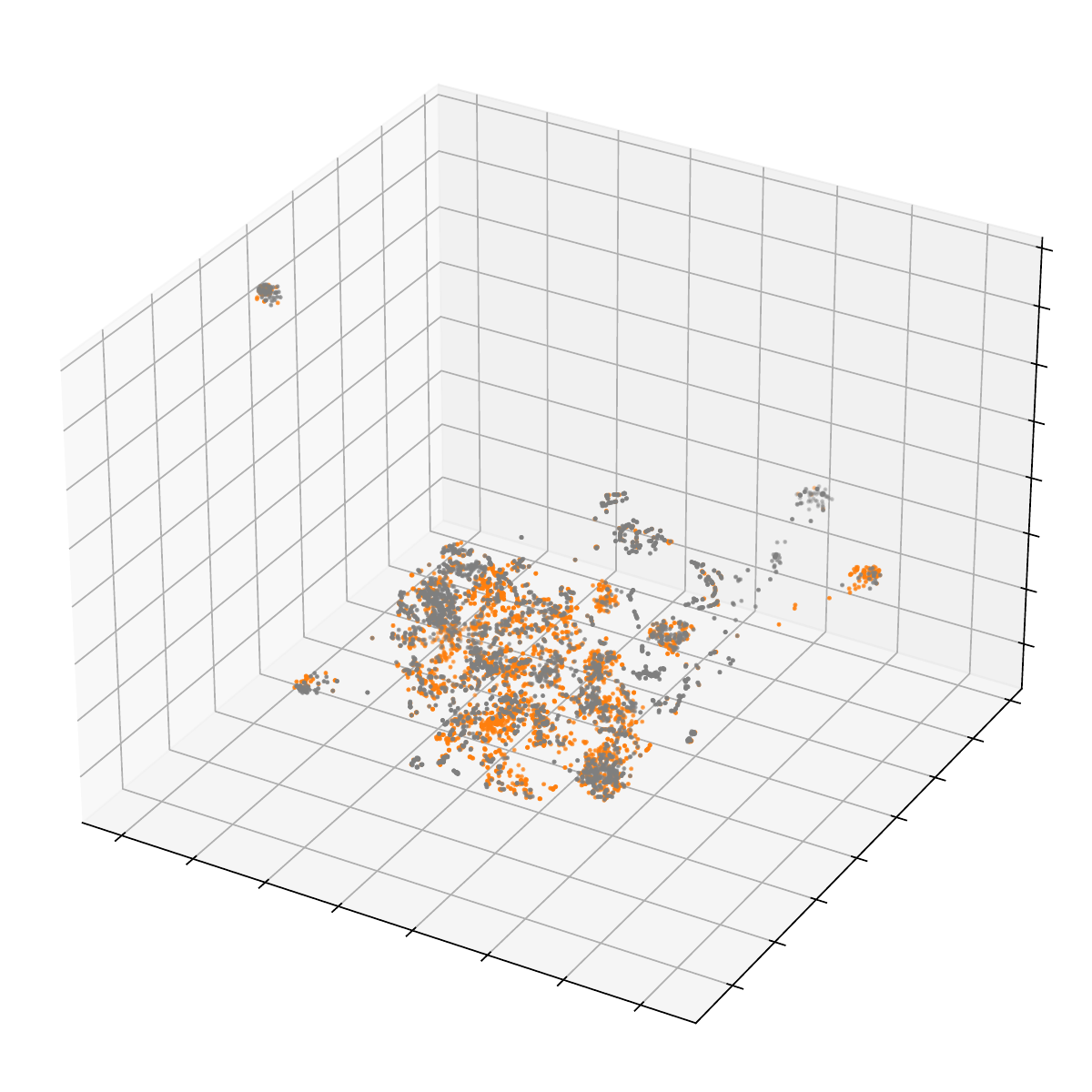}
	}
\caption{Visualizing Behavior Diversity via t-SNE. (a) and (b) are drawn from the t-SNE analysis of visited states (points highlighted  with \imgintext{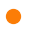}) in  Chopper Command, and (c) and (d) are drawn the t-SNE analysis of visited states in Atlantis.}
\label{fig:tsne study}
\end{figure*}

\clearpage

\section{Model Architecture}
\label{app: Model Architecture}

Since the network structure is not the focus of our work, we keep most of the components of the network, e.g., the LSTM Core, RL Head and Convolutional Layers the same as that of Agent57 \citep{agent57}. To improve the reproducibility of our work, we still summarize our  model architecture of our main algorithm in detail in Fig. \ref{fig:lbc model}.  Wherein, $\mathbf{h}_j^t$ is the the hyper-parameters (e.g., discounted factor $\gamma$ ) of policy model $j$ and $\bm{\bm{\psi}}_t$ is the parameters of the constructed behavior space (e.g., $\epsilon$ in $\epsilon$-greedy). More details on the hyper-parameters of each policy model can see App. \ref{Sec: appendix hyper-parameters}. $\bm{\bm{\psi}}_t$ will be adaptively selected to control the behaviors across learning process via MAB. More implementation details on the MAB can see App. \ref{Sec: appendix MAB}. It is worth noting that our framework is not limited to an implementation of report in our body. In Fig. \ref{fig:lbc model}, we show a general way of integrating multiple policy models and automatically adjusting the proportion of any multiple policy  models in the ensembled behavior policy.

\begin{figure*}[!ht]
	\centering
	\includegraphics[width=\textwidth]{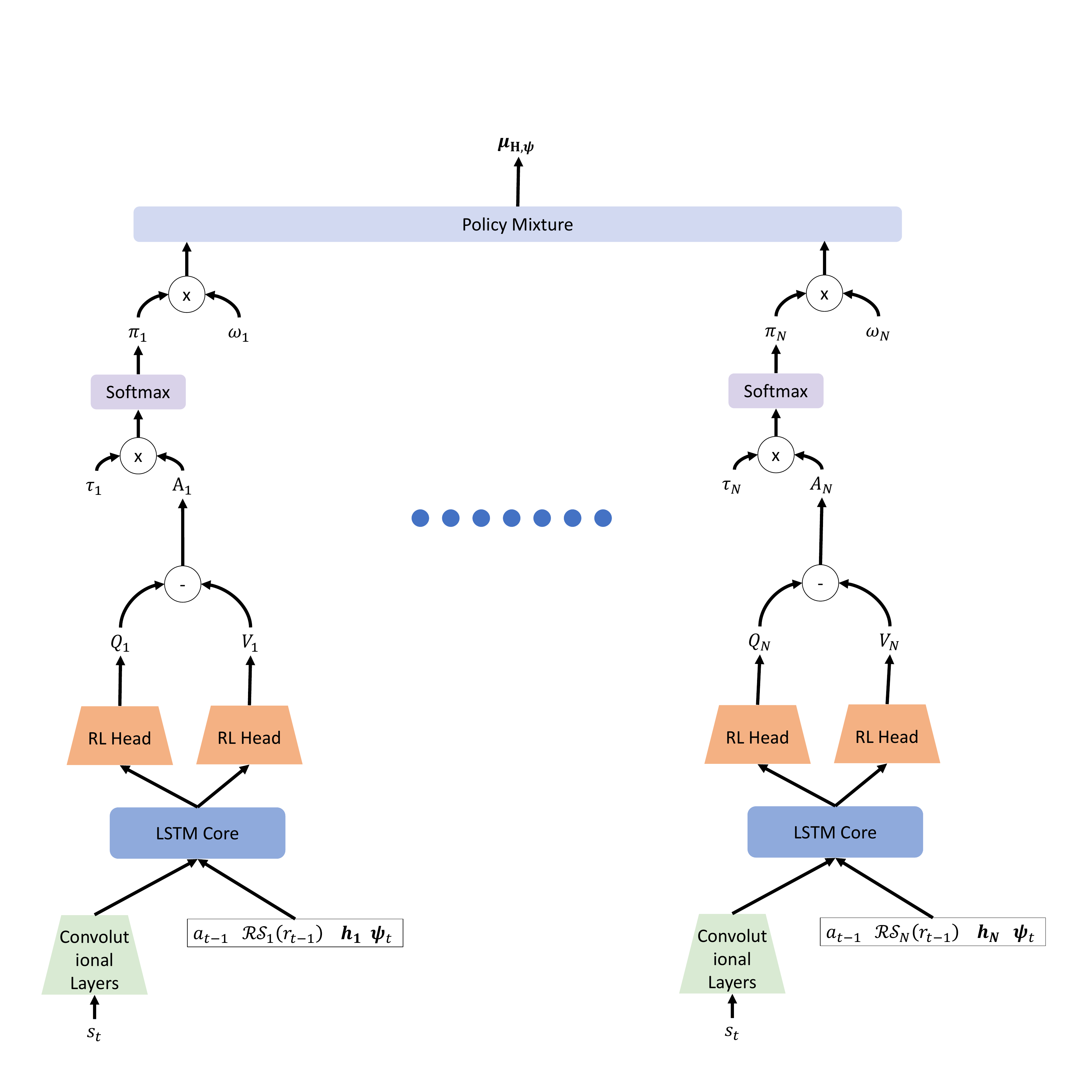}
	\centering
\caption{Model Architecture of our main algorithm.}
\label{fig:lbc model}
\end{figure*}

\clearpage

\section{Supplementary Material}
\label{app: Supplementary Material}

\begin{figure*}[!ht]
	\centering
	\includegraphics[width=0.5\textwidth]{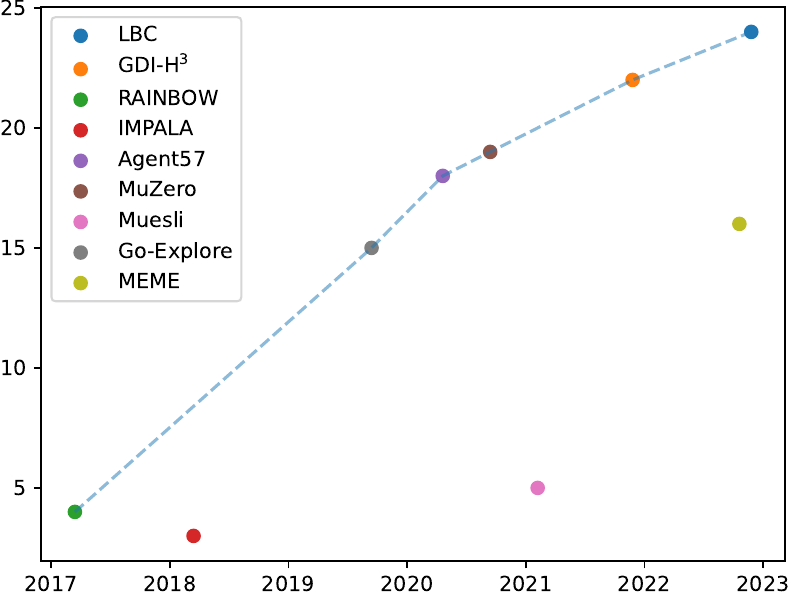}
	\centering
\caption{Human World Records Breakthrough of Atari RL Benchmarks.}
\label{fig:hwrb benchmark}
\end{figure*}

\begin{figure*}[!ht]
	\centering
	\includegraphics[width=\textwidth]{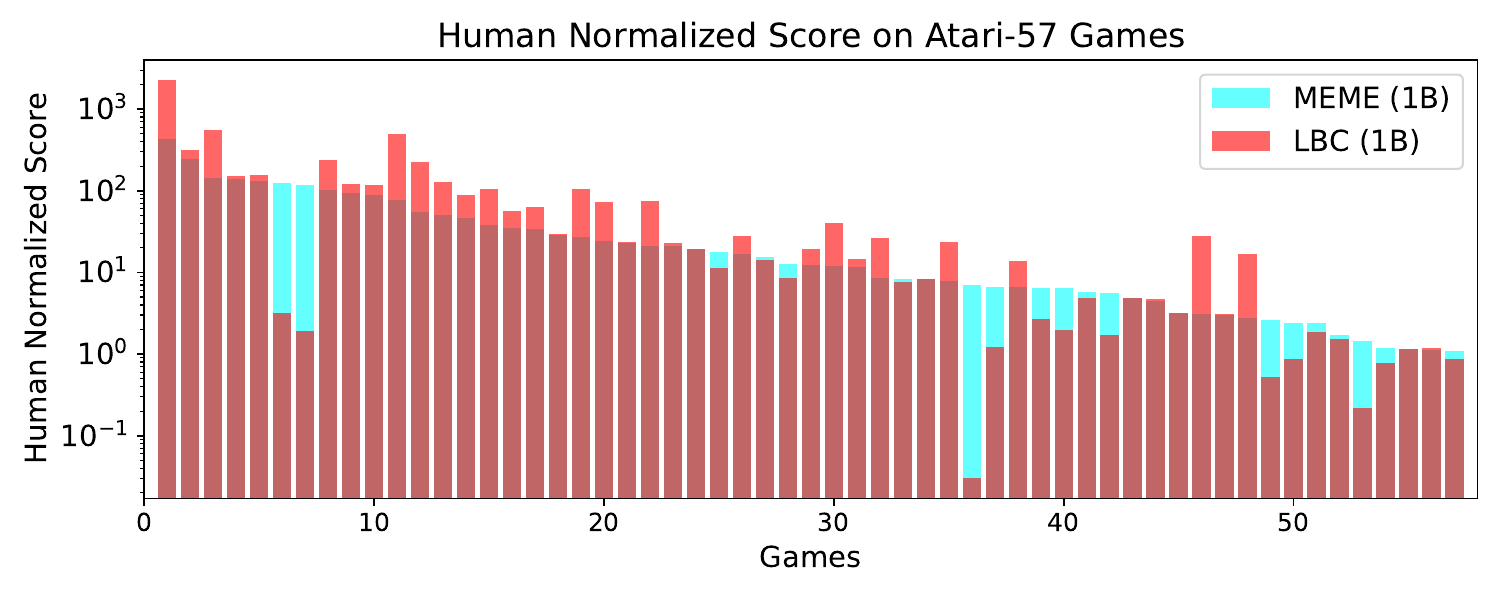}
	\centering
\caption{Performance Comparison between MEME and LBC based on HNS.(log scale)}
\label{fig:performance meme and lbc}
\end{figure*}


\begin{figure*}[!ht]
	\centering
	\includegraphics[width=\textwidth]{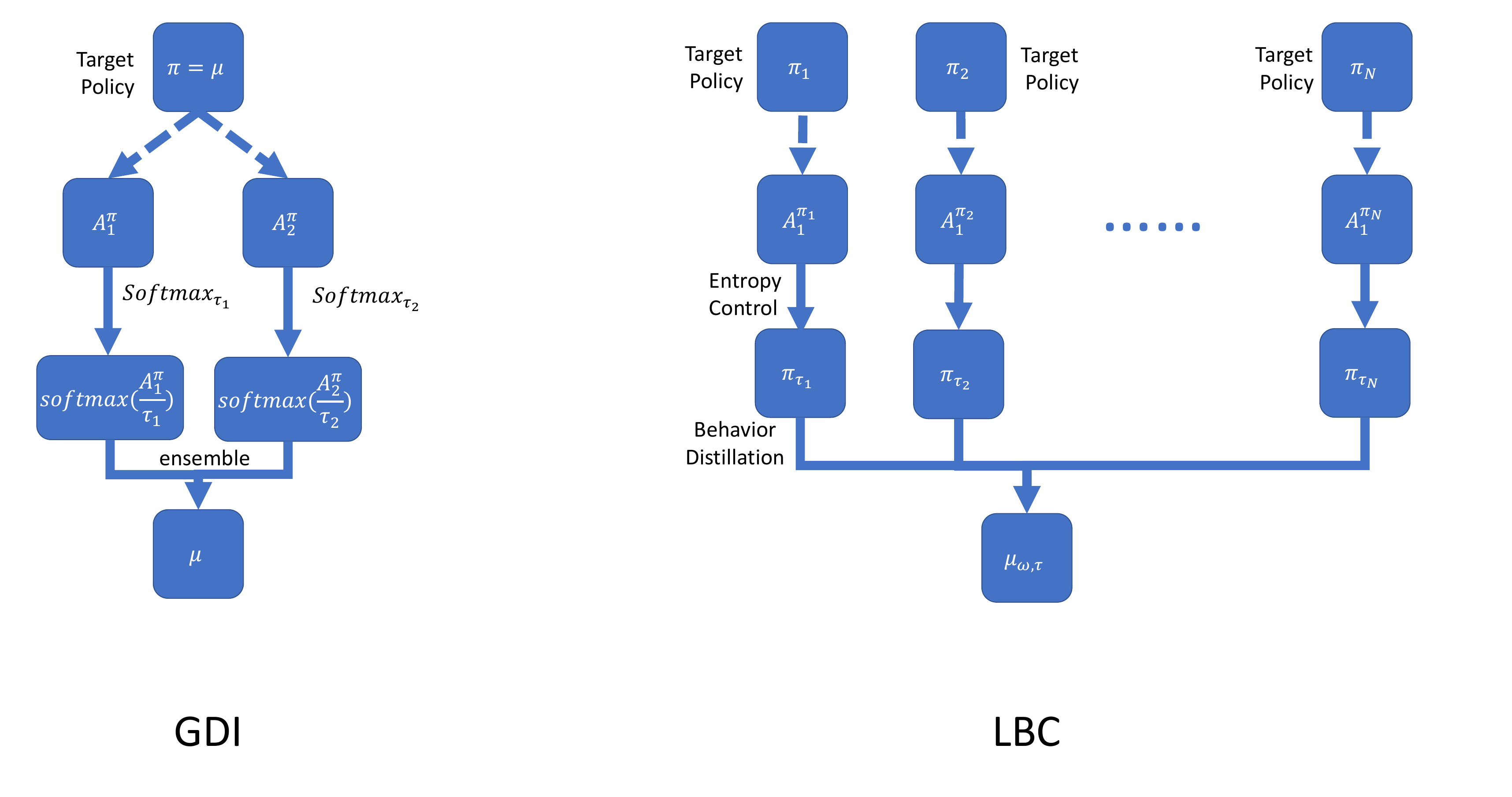}
	\centering
\caption{Different Learning Framework of GDI-H$^3$ and LBC}
\label{fig:gdi and lbc}
\end{figure*}

\clearpage

\section{Case Study: KL Divergence}
\label{sec: KL Divergence Between policy models}

In this section, to further investigate the cause of the degradation phenomenon of the behavior space in GDI-H$^3$ (i.e., due to a same learned policy $\pi$ for $A_1^{\pi}$ and $A_2^{\pi}$ under different reward shaping) and demonstrate that in our behavior space (i.e., different learned policies for $A_1^{\pi_1}$ and $A_2^{\pi_2}$), the diversity can be maintained since different policy models  are distinguishable across learning. For fairness, we designed two implementations to explore the degradation phenomenon in the behavior space of GDI-H$^3$ including: i) an implementation  with two different learned policies under different reward shaping (yellow in Fig. \ref{fig:KL}) ii) an implementation that learns two advantage functions of a same target policy (i.e., the behavior policy) under different reward shaping as GDI-H$^3$ (blue in Fig. \ref{fig:KL}). The learning framework of these two implementations can be found in \ref{fig:gdi and lbc}. 

For a fair comparison, we keep the two reward shaping the same as used in  GDI-H$^3$, namely,  i) $\log (abs (r) + 1.0) \cdot (2 \cdot 1_{\{r \geq 0\}} - 1_{\{r < 0\}})$ for $A_1$ and ii) $sign(r) \cdot ((abs (r) + 1.0)^{0.25} - 1.0) + 0.001 \cdot r$ for $A_2$.

\begin{figure*}[!ht]
\vspace{-0.1in}
\subfigure[$\operatorname{
Softmax
}_{\tau_1}(A_1)$ and $\operatorname{
Softmax
}_{\tau_2}(A_2)$]{
		\includegraphics[width=0.45\textwidth]{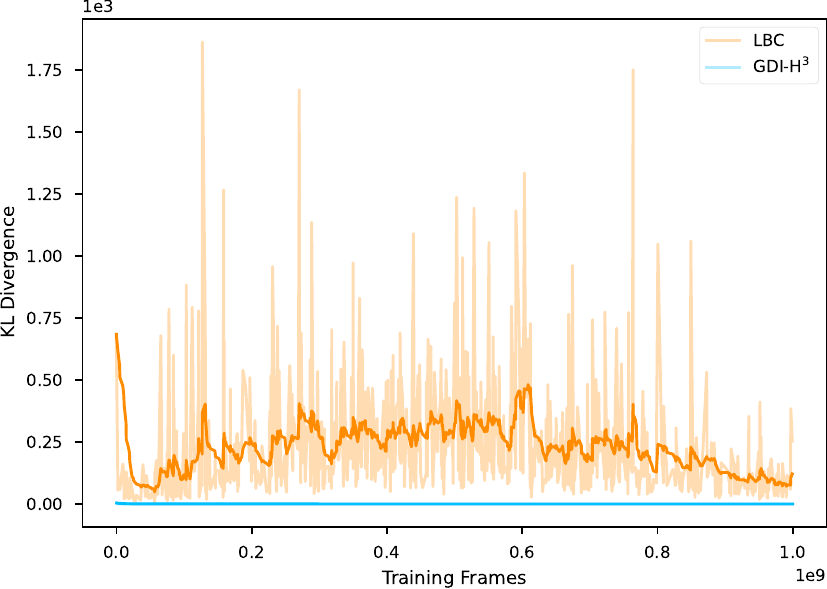}
	}
	\subfigure[$\operatorname{
Softmax
}(A_1)$ and $\operatorname{
Softmax
}(A_2)$]{
		\includegraphics[width=0.45\textwidth]{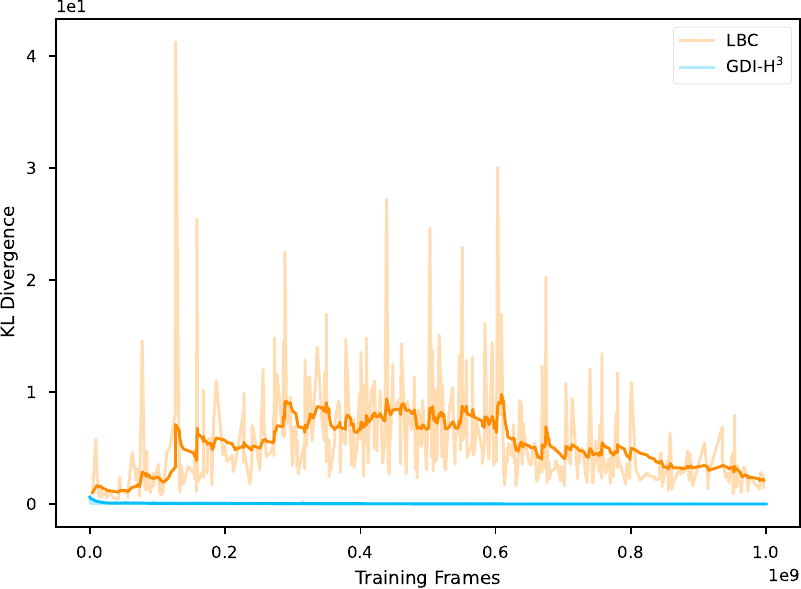}
	}

\subfigure[$\operatorname{
Softmax
}_{\tau_1}(A_1)$ and $\operatorname{
Softmax
}_{\tau_2}(A_2)$]{
		\includegraphics[width=0.45\textwidth]{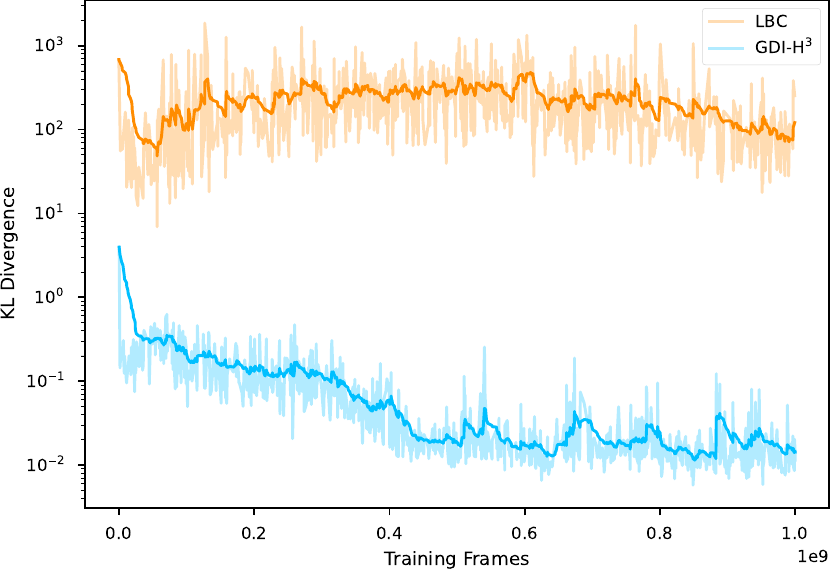}
	}
	\subfigure[$\operatorname{
Softmax
}(A_1)$ and $\operatorname{
Softmax
}(A_2)$]{
		\includegraphics[width=0.45\textwidth]{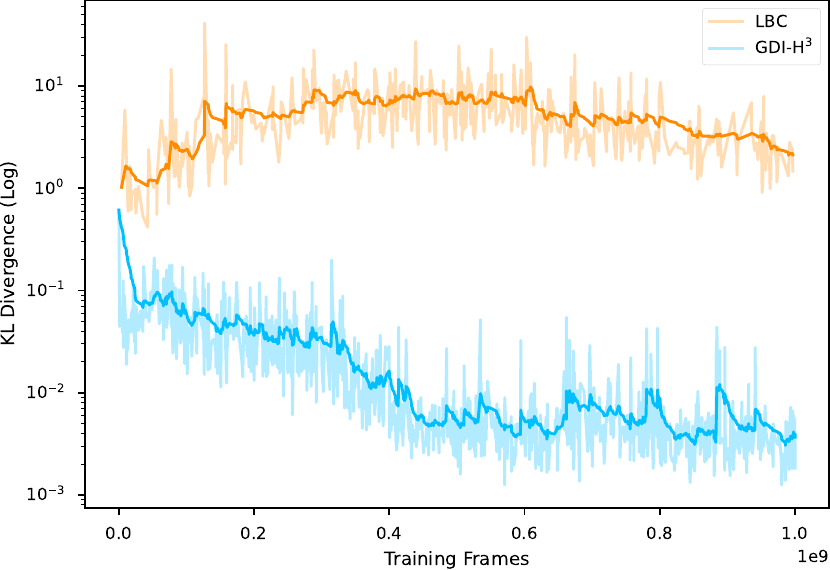}
	}
\caption{KL Divergence of GDI-H$^3$ and LBC in Chopper Command (Smoothed by 0.9 for the ease of reading).}
\label{fig:KL}
\end{figure*}

From Fig. \ref{fig:KL}, we can find the distance between $\operatorname{
Softmax
}_{\tau_1}(A_1^{\pi})$ and $\operatorname{
Softmax
}_{\tau_2}(A_2^{\pi})$ of $A_1^\pi$ and $A_2^\pi$ decrease rapidly in GDI-H$^3$ while LBC can maintain a more diverse set of policies. The optional behaviors for each actor gradually diminish, and the behavior space of GDI-H$^3$ degenerates across the learning process. In contrast, LBC can maintain the capacity of the behavior space and avoid degradation since LBC maintains a population of different policy models (Corresponding to a population of different policies.)

\end{document}